\newtheorem{theorem}{Theorem}
\newtheorem*{theorem*}{Theorem}
\newtheorem{lemma}{Lemma}
\newtheorem*{lemma*}{Lemma} 
\newtheorem{remark}{Remark}
\newtheorem*{remark*}{Remark}
\newtheorem{definition}[theorem]{Definition}
\newtheorem{assumption}[theorem]{Assumption}
\newtheorem{proposition}[theorem]{Proposition} 
\DeclareMathOperator*{\argmin}{argmin}
\title{Online Markov Decision Processes with Non-oblivious Strategic Adversary}
\author {
   {{Le Cong Dinh$^{1}$, David Henry Mguni$^2$, Long Tran-Thanh$^3$, Jun Wang$^4$, Yaodong Yang$^5$}}   
} 
\def\eqref#1{equation~\ref{#1}}
\def\1{\bm{1}}
\def\va{{\bm{a}}}
\def\vd{{\bm{d}}}
\def\ve{{\bm{e}}}
\def\vk{{\bm{k}}}
\def\vl{{\bm{l}}}
\def\vv{{\bm{v}}}
\def\vx{{\bm{x}}}
\def\vpi{{\boldsymbol{\pi}}}
\DeclareMathAlphabet{\mathsfit}{\encodingdefault}{\sfdefault}{m}{sl}
\SetMathAlphabet{\mathsfit}{bold}{\encodingdefault}{\sfdefault}{bx}{n}
\begin{document}

\maketitle







\begin{abstract}
We study a novel setting in Online Markov Decision Processes (OMDPs) where the loss function is chosen by a \emph{non-oblivious} strategic adversary who follows a no-external regret algorithm. In this setting, we first demonstrate that MDP-Expert, an existing algorithm that works well  with oblivious adversaries  can still apply and achieve a policy regret bound of $\mathcal{O}(\sqrt{T \log(L)}+\tau^2\sqrt{ T \log(|A|)})$ where $L$ is the size of adversary's pure strategy set and $|A|$ denotes the size of agent's action space. Considering real-world games where the support size of a NE is small, we further propose a new algorithm: \emph{MDP-Online Oracle Expert} (MDP-OOE), that achieves a policy regret bound of  $\mathcal{O}(\sqrt{T\log(L)}+\tau^2\sqrt{ T k \log(k)})$ where $k$ depends only on the support size of the NE. MDP-OOE leverages the key benefit of Double Oracle in game theory and thus can solve games with prohibitively large action space. Finally, to better understand the  learning dynamics of no-regret methods,  under the same setting of no-external regret adversary in OMDPs, we introduce an algorithm that achieves last-round convergence result  to a NE.  To our best knowledge, this is first work  leading to the last iteration result  in OMDPs.\footnote{Accepted at Autonomous Agents and Multi-Agent Systems (2023): \url{https://doi.org/10.1007/s10458-023-09599-5}.} 
\end{abstract}

\section{Introduction}
Reinforcement Learning (RL) \cite{sutton2018reinforcement} provides a general solution framework for optimal decision making under uncertainty, where the agent aims to minimise its cumulative loss while interacting with the environment. While RL algorithms have shown empirical and theoretical successes in stationary environments, it is an open challenge to deal with non-stationary environments in which the loss function and/or the transition dynamics change over time~\cite{laurent2011world}.  
In tackling non-stationary environments, we are interested in designing learning algorithms that can achieve no-regret guarantee \cite{even2009online,dick2014online}, 
 where the regret is defined as the difference between the accumulated total loss  and the total loss of the best fixed stationary policy in hindsight.

There are online learning algorithms that can achieve no-external regret property with changing loss function (but not changing transition dynamics), either in the full-information \cite{even2009online,dick2014online} or the  bandit \cite{neu2010online,neu2020online} settings. However, most existing solutions are established based on the key assumption that the adversary is \emph{oblivious}, meaning the changes in loss functions do not depend on the historical trajectories of the agent. This crucial assumption limits the applicability of no-regret algorithms to many RL fields, particularly multi-agent reinforcement learning (MARL) \cite{yang2020overview}. In a multi-agent system, since all agents are learning simultaneously, one agent's adaption on its strategy will make the environment \emph{non-oblivious} from others agents' perspective. Therefore, 
to find the optimal strategy for each player, one must consider the strategic reactions from others rather than regarding them as purely oblivious.  
As such, studying no-regret algorithms against a non-oblivious adversary is a pivotal step in adapting existing   online learning techniques into MARL  settings. 
Another challenge in online learning is the non-convergence dynamics in a system. When agents apply no-regret algorithms such as Multiplicative Weights Update (MWU)~\cite{freund1999adaptive} or Follow the Regularized Leader (FTRL)~\cite{shalev2011online} to play against each other, 
the system demonstrates  behaviours that are  
\emph{Poincar\'e recurrent} \cite{mertikopoulos2018cycles},  meaning the last-round convergence can never be achieved ~\cite{bailey2018multiplicative}.
Recent works~\cite{dinh2021last,daskalakis2019last} have focused on different learning dynamics in normal-form game that can lead to last-round convergence to a Nash equilibrium (NE) while maintaining the no-regret property. Yet, when it comes to OMDPs, it still remains an open challenge of how the no-regret property and the last-round convergence can be both achieved, especially with changing loss functions. 

 

In this paper, we relax the assumption of the oblivious adversary in OMDPs and study a new setting where the loss function 
is chosen by a strategic agent that follows a no-external regret algorithm. 
This setting can be used in applications within economics to model systems  and firms  \cite{filar1997applications}, for example, an  oligopoly with a dominant player, or ongoing interactions between industry players and an authority (e.g., a government that acts as an order-setting body). Under this setting, we study how the agent can achieve different goals such as no-policy regret and last-round convergence.

Our contributions are at three folds: 
\begin{itemize}
\item 	We prove that the well-known MDP-Expert (MDP-E) algorithm \cite{even2009online} can still apply by achieving a policy regret bound of $\mathcal{O}(\sqrt{T \log(L)}+\tau^2\sqrt{ T \log(|A|)})$, and the average strategies of the agents will converge to a NE of the game. 
\item  For many real-world applications where the support size of NE is small~\cite{mcmahan2003planning,dinh2021online}, we introduce an efficient no-regret algorithm, \emph{MDP-Online Oracle Expert (MDP-OOE)}, which achieves the policy regret bound of $\mathcal{O}(\tau^2\sqrt{ T k \log(k)} +\sqrt{T\log(L)})$ against non-oblivious adversary, where $k$ depends on the  support size of the NE. 
MDP-OOE inherits the key benefits of both  Double Oracle ~\cite{mcmahan2003planning} and MDP-E~\cite{even2009online}; it can solve games with large action space  while maintaining the no-regret property. %
\item  To achieve last-round convergence guarantee  
for no-external regret algorithms, we introduce the algorithm  of  \emph{Last-Round Convergence in OMDPs (LRC-OMDPs)} such that in cases where the adversary follows a no-external regret algorithm, the dynamics  will lead to the last-round convergence to a NE. To the best of our knowledge, this is first last-iteration convergence result in OMDPs.
\end{itemize}

\section{Related Work}

\begin{table*}[t!]
\centering
\caption{The scope of our contribution in this work. }
\vspace{-0pt}
    \begin{tabular}{|c|c|c|}
    \midrule
    & \pbox{190pt}{Non-oblivious adversary within a two-player game framework}& \pbox{130pt}{Oblivious adversary in Markov Decision Process} \\
    \midrule
    \pbox{120pt}{Regret \emph{w.r.t} best policy in hindsight} & \pbox{190pt}{MDP-OOE (our contribution) \\
    $\mathcal{O}(\tau^2\sqrt{ T k \log(k)} +\sqrt{T\log(L)})$} & 
    \pbox{130pt}{OMDPs: (MDP-E)~\cite{even2009online} \\
    $\text{Reg}_{T}= \mathcal{O}(\tau^2 \sqrt{T \log(|A|)})$}
    \\
    \midrule
    \pbox{120pt}{Regret \emph{w.r.t.} value of the game} &
    \pbox{160pt}{SGs:(UCSG)~\cite{wei2017online} \\
    $\text{Reg}_{T}= \tilde{\mathcal{O}}(D^3 |S|^5 |A|+D|S|\sqrt{|A|T})$}   & OMDPs  \\
    \midrule
    \end{tabular}
\label{Table on the difference of our setting}
\vspace{-0pt}
\end{table*}

The setting of OMDPs with no-external regret adversary, though novel,  shares certain aspects in common with existing literatures in online learning and stochastic games domains. Here we review each of them. 
Many researchers have considered OMDPs with an oblivious environment, where the loss function can be set arbitrarily. The performance of the algorithm is measured by external regret: the difference between the total loss and the best stationary policy in hindsight. In this setting with stationary transition dynamics,  MDP-E  \cite{even2009online}   proved that if the agent bounds the ``local'' regret in each state, then the ``global'' regret will be bounded. \citet{neu2010online, neu2013online}  
considered the same problem with the bandit reward feedback and provided no-external regret algorithms in this setting. \citet{dick2014online} 
studied a new approach for OMDPs where the problem can be transformed into an online linear optimization form, from which no-external regret algorithms can be derived. \citet{cheung2019non} proposed a no-external regret algorithm  in the case of non-stationary transition distribution, given that the variation of the loss and transition distributions do not exceed certain variation budgets. 

In non-oblivious environment, \citet{yu2009markov} provided an example demonstrating that no algorithms can guarantee sublinear external regret against a non-oblivious adversary. Thus, in   OMDPs with non-oblivious opponents (e.g., agents using adaptive algorithms), the focus is often on finding stationary points of the system rather than finding a no-external regret algorithm \cite{leslie2020best}. In this paper, we study cases where the adversary follows an adaptive no-regret algorithm, and  tackle the hardness result of non-oblivious environments in OMDPs. 

The problem of non-oblivious adversary has also been studied in the multi-armed bandit setting, a special case of OMDPs. In this setting, \citet{arora2012online} considered $m$-memory bounded adversary and provided an algorithm with policy regret bound that depends linearly on $m$, where the policy regret includes the adversary's adaptive behaviour (i.e., see Equation (\ref{regret for the agent})). Compared to their work, our paper considers strategic adversary which turns out to be $\infty$-memory bounded adversary. Thus the algorithm suggested in \citet{arora2012online} can not be applied. Recently, \citet{dinh2021last} studied the same strategic adversary in full information normal-form setting and provided an algorithm that leads to last round convergence. However, both of the above works only studied the simplified version of OMDPs, thus they do not capture the complexity of the problem. We argue that since strategic-adversary setting has many applications due to the popularity of no-regret algorithms \cite{cesa2006prediction,zinkevich2007regret,daskalakis2017training},  it is important to study no-regret methods in more practical settings such as OMDPs. 

Stochastic games (SGs)~\cite{shapley1953stochastic,deng2021complexity} 
offer a multi-player game framework where agents jointly decide the loss and the state transition. Compared to OMDPs, the main difference is that SGs allow each player to have representation of  states, actions and rewards, thus  players can learn the representations over time and find the NE of the stochastic games~\cite{wei2017online,Tian2020OnlineLI}.  
The performance in SGs is often measured by the difference between the average loss and the value of the game (i.e., the value when both players play a NE), which is a weaker notion of regret compared to the best fixed policy in hindsight in OMDPs.  
Intuitively, the player can learn the structure of the game (i.e., transition model, reward function) over time, thus on average, the player can calculate and compete with the value of the game. 
In non-episodic settings, the Upper Confidence Stochastic Game algorithm (UCSG)~\cite{wei2017online} guarantees the regret of $\text{Reg}_{T}= \tilde{\mathcal{O}}(D^3 |S|^5 |A|+D|S|\sqrt{|A|T})$ with high probability, given that the opponent's action is observable. However, to compete with the best stationary policy, knowing the game structure does not guarantee a good performance (i.e., the performance will heavily depend on the strategic behaviour of opponents). ~\citet{Tian2020OnlineLI} proved that in the SG setting, achieving no-regret with respect to best stationary policy in hindsight is statistically hard. %
Our settings can be considered as a sub-class of SGs where only the agent controls the transition model (i.e., single controller SGs), based on this,  we try to overcome the above challenge.

We summarise the difference between our setting and  OMDPs and SGs  in Table \ref{Table on the difference of our setting}. Compared to OMDPs, we relax the assumption about oblivious environment and study a non-oblivious counterpart with strategic adversary.
Compared to SGs, we relax the assumption of knowing opponent's action in non-episodic setting and our results only require observing the loss functions. Furthermore, the performance measurement is with respect to the best stationary policy in hindsight, which is proved to be statistically hard in SGs~\cite{Tian2020OnlineLI}. Intuitively, since we considers the problem of single controller SGs, it can overcome the hardness result.  \citet{guan2016regret} studied a similar setting to our paper, where only one player affects the transition kernel of the game. By viewing the game as an online linear optimisation, it can derive the minimax equilibrium of the game. There are two main challenges of the algorithm. Firstly, it requires both players to pre-calculate the minimax equilibrium of the game and fixes to this strategy during repeated game. Thus, in the situation where the adversary is an independent agent (i.e., it follows a different learning dynamic),  the proposed algorithm can not be applied. Secondly, and most importantly, the  no regret analysis is not provided for the algorithm in \citet{guan2016regret}, thus the algorithm can not be applied in an adversary environment. We fully address both challenges in this paper. 

\section{Problem Formulations \& Preliminaries}
We consider OMDPs where at each round $t \in \mathbb{N}$, an adversary can choose the loss function ${\vl}_t$ based on the agent's history $\{\pi_1,\pi_2,\dots,\pi_{t-1}\}$. 
Formally, we have OMDPs with finite state space $S$; finite action set at each state $A$; and a fixed transition model $P$. The agent's starting state, $x_1$, is distributed according to some distribution $\mu_0$ over $S$. At time $t$, given state $x_t \in S$, the agent chooses an action $a_t \in A$, then the agent moves to a new random state $x_{t+1}$ which is determined by the fixed transition model $P(x_{t+1}|x_t,a_t)$. Simultaneously, the agent receives an immediate loss ${\vl}_t(x_t,a_t)$, in which the loss function ${\vl}_t: S \times A \to R $ is bounded in $[0,1]^{|A| \times |S|}$ and chosen by the adversary from a simplex $\Delta_L:= \{{\vl} \in \mathbb{R}^{|S||A|} | {\vl}=\sum_{i=1}^L x_i {\vl}_i,\; \sum_{i=1}^L x_i=1,\; x_i \geq 0\; \forall i\}$ where $\{{\vl}_1, {\vl}_2, \dots, {\vl}_L\}$ are the  loss vectors of the adversary. We assume zero-sum game setting where the adversary receives the loss of $-{\vl}_t(x_t,a_t)$ at round $t$ and consider popular full information feedback~\cite{even2009online,dick2014online}, meaning the agent can observe the loss function ${\vl}_t$ after each round $t$.

Against strategic adversary, the formal definition of no-external regret becomes inadequate since the adversary is allowed to adapt to the agent's action. In this paper, we adopt the same approach in \citet{arora2012online} and consider policy regret. Formally, the goal of the agent is to have minimum policy regret with respect to the best fixed policy in hindsight:
\vspace{-0pt}
\begin{equation}\label{regret for the agent}
    R_T(\pi)= \mathbb{E}_{X,A}\left[\sum_{t=1}^T {\vl}_t^{\pi_t}(X_t,A_t)\right]- \mathbb{E}_{X,A}\left[ \sum_{t=1}^T {\vl}_t^{\pi}(X_t^{\pi},A_t^{\pi})\right],
\end{equation}
where ${\vl}_t^{\pi_t}$ denotes the loss function at time $t$ while the agent follows $\pi_1, \dots, \pi_T$
and ${\vl}_t^{\pi}$ is the adaptive loss function against the fixed policy $\pi$ of the agent.
We say that the agent achieves sublinear policy regret (i.e., no-policy regret property) with respect to the best fixed strategy in hindsight if $R_T(\pi)$ satisfies:
  $ \lim_{T \to \infty} \max_{\pi} \frac{R_T(\pi)}{T}=0.$

In a general non-oblivious adversary, we prove by a counter example that it is impossible to achieve an algorithm with a sublinear policy regret~\footnote{In the multi-armed bandit setting, it is also impossible to achieve sublinear policy regret against all adaptive adversaries (see Theorem 1 in \citet{arora2012online}).}.  Suppose the agent faces an adversary such that it gives a very low loss for the agent if the action in the first round of the agent is a specific action (i.e., by fixing the loss function to $\mathbf{0}$), otherwise the adversary will give a high loss (i.e., by fixing the loss function to $\mathbf{1}$). Against this type of adversary, without knowing the specific action, the agent's policy regret in Equation (\ref{regret for the agent}) will be $\mathcal{O}(T)$. Thus, in general non-oblivious adversary case, we will have a hardness result in policy regret. To resolve the hardness result, we study strategic adversary in OMDPs. 
\begin{assumption}[Strategic Adversary]\label{assumption: Strategic adversary}
The adversary flows a no-external regret algorithm such as for any sequence of $\pi_t$:
\begin{equation*}
    \begin{aligned}
    & \;\lim_{T \to \infty} \max_{\vl} \frac{R_T(\vl)}{T}=0,\; \text{where}\\
    & R_T({\vl})=\mathbb{E}_{X,A}\left[ \sum_{t=1}^T {\vl}(X_t,A_t)\right]- \mathbb{E}_{X,A}\left[\sum_{t=1}^T {\vl}_t^{\pi_t}(X_t,A_t)\right].
    \end{aligned}
\end{equation*}
\end{assumption}
The rationale of Assumption \ref{assumption: Strategic adversary} comes from the vanilla property of no-external algorithms: without prior information, the adversary will not do worse than the best-fixed strategy in hindsight~\cite{dinh2021last}. Thus, without the priority knowledge about the agent, the adversary will have incentive to follow a no-external regret algorithm. 
In the same way as full information feedback assumption for the agent, we assume that after each round $t$, the adversary observes the agent' stationary policy distribution $\vd_{\pi_t}$.


For every policy $\pi$, we define $P(\pi)$ the state transition matrix induced by $\pi$ such that $P(\pi)_{s, s'}=\sum_{a \in A} \pi(a|s)P^a_{s,s'}$. 
We assume through the paper that we have the mixing time assumption, which is a common assumption in OMDPs~\cite{even2009online,dick2014online,neu2013online}:
\begin{assumption}[Mixing time]\label{Mixing time}
There exists a constant $\tau >0$ such that for all distributions $\vd$ and $\vd'$ over the state space, any policy $\pi$,
\[\big\|\vd P(\pi)-\vd'P(\pi)\big\|_1 \leq e^{-1/\tau} \big\|\vd-\vd'\big\|_1 ,\]
where $\|\vx\|$ denotes the ${l}_1$ norm of a vector $\vx$. 
\end{assumption}
Denote $\vv_t^{\pi}(x,a)$ the probability of (state, action) pair $(x,a)$ at time step $t$ by following policy $\pi$ with initial state $x_1$.
Following Assumption \ref{Mixing time}, for any initial states, $\vv_t^{\pi}$ will converge to a stationary distribution $\vd_{\pi}$ as $t$ goes to infinity. Denote $\vd_{\Pi}$ the stationary distribution set from all agent's deterministic policies. With a slight abuse of notation, when an agent follows an algorithm $A$ with use $\pi_1, \pi_2, \dots$ at each time step, we denote $\vv_t(x,a)=\mathbb{P}\left[X_t=x, A_t=a\right],\;\; \vd_t=\vd_{\pi_t}$.
Thus, the regret in Equation (\ref{regret for the agent}) can be expressed as
\vspace{-0pt}
\begin{equation*}
    R_T(\pi)= \mathbb{E}\left[ \sum_{t=1}^T \big\langle {\vl}_t^{\pi_t}, \vv_t \big\rangle \right]-\mathbb{E}\left[ \sum_{t=1}^T \big\langle {\vl}_t^{\pi}, \vv_t^{\pi} \big\rangle \right].
\end{equation*}
Assumption \ref{Mixing time} allows us to define the average loss of policy $\pi$ in an online MDP with loss $\vl$ as
$\eta_{\vl}(\pi)= \langle \vl, \vd_{\pi} \rangle$
and the accumulated loss $Q_{\pi,\vl}(s,a)$ is defined as
\vspace{-0pt}
\[Q_{\pi,\vl}(s,a)=E\left[\sum_{t=1}^\infty \big(\vl(s_t,a_t)-\eta_{\vl}(\pi)\big)\Big|s_1=s, a_1=a,\pi \right].\]
As the dynamic between the agent and adversary is zero-sum, we can apply the minimax theorem~\cite{neumann1928theorie}:
\vspace{-0pt}
\begin{equation}\label{minimax theorem}
    \min_{\vd_{\pi} \in \Delta_{\vd_{\Pi}}} \max_{{\vl} \in \Delta_L} \langle {\vl}, \vd_{\pi} \rangle =\max_{{\vl} \in \Delta_L} \min_{\vd_{\pi} \in \Delta_{\vd_{\Pi}}}  \langle {\vl}, \vd_{\pi} \rangle =v.
\end{equation}
The saddle point $({\vl}, \vd_{\pi})$ that satisfies Equation (\ref{minimax theorem}) is the NE of the game~\cite{nash1950equilibrium} and $v$ is the called the value of the game.
Our work is based on no-external regret algorithms in normal-form game such as Multiplicative Weights Update~\cite{freund1999adaptive}, which is described as
\begin{definition}[Multiplicative Weights Update] Let $\vk_1, \vk_2, ...$ be a sequence of feedback received by the agent. The agent is said to follow the MWU if strategy $\tilde{\vpi}_{t+1}$ is updated as follows 
\vspace{-0pt}
\begin{equation}\label{eq: MWU update}
    \begin{aligned}
    \tilde{\vpi}_{t+1}(i)=\tilde{\vpi}_t(i)  \frac{\exp(-\mu_t \vk_t(\va^i))}{\sum_{i=1}^n \tilde{\vpi}_t(i)\exp(-\mu_t \vk_t(\va^i))}, \forall i \in [n],
    \end{aligned}
\end{equation}
 where $\mu_t >0$ is a parameter, $n$ is the number of pure strategies (i.e.,  experts) and $\tilde{\vpi}_0=[1/n,\dots,1/n]$.
\end{definition}
We also consider $\epsilon$-Nash equilibrium of the game:
\begin{definition}[$\epsilon$-Nash equilibrium]
Assume $\epsilon >0$. We call a point $({\vl}, \vd_{\pi}) \in \Delta_L \times \Delta_{\vd_{\Pi}}$ $\epsilon$-NE if:
\begin{equation*}
    \max_{{\vl} \in \Delta_L} \langle {\vl}, \vd_{\pi} \rangle -\epsilon \leq \langle {\vl}, \vd_{\pi} \rangle \leq \min_{\vd_{\pi} \in \Delta_{\vd_{\Pi}}}  \langle {\vl}, \vd_{\pi} \rangle +\epsilon.
\end{equation*}
\end{definition}
Under the setting of OMDPs against strategic adversary who aims to minimise the external regret  (i.e., Assumption \ref{assumption: Strategic adversary}), we study several properties that the agent can achieve such as no-policy regret and last round convergence.
\section{MDP-Expert against Strategic Adversary}\label{section no-regret algorithm against non-oblivious opponent}
\begin{algorithm}
\caption{MDP-Expert (MDP-E)} 
\label{MDP-expert}
\begin{algorithmic}[1]
\STATE {\bfseries Input:} Expert algorithm $B_s$ (i.e., MWU) for each state
\FOR{\text{$t=1$ to $\infty$}}
\STATE Using algorithm $B_s$ with set of expert $A$ and the feedback $Q_{\pi_t,{\vl}_t}(s,.)$ for each state $s$
\STATE Output $\pi_{t+1}$ and observe $\vl_{t+1}$
\ENDFOR
\end{algorithmic}
\vspace{-0pt}
\end{algorithm}
When the agent plays against a non-oblivious opponent, one challenge is that the best fixed policy $\pi$ is not based on the current loss sequence $[{\vl}_1, {\vl}_2,\dots]$ of the agent but a different loss sequence $[{\vl}_1^\pi, {\vl}_2^\pi \dots]$ induced by the policy $\pi$. Thus, to measure the regret in the case of a non-oblivious opponent, we need information on how the opponent will play against a fixed policy $\pi$. Under Assumption \ref{assumption: Strategic adversary}, we prove that existing MDP-E~\cite{even2009online} method, which is designed for oblivious adversary, will have no- policy regret property against the non-oblivious strategic adversary in our setting. Intuitively, MDP-E maintains a no-external regret algorithm (i.e., MWU) in each state to bound the local regret, thus the global regret can be bounded accordingly. The pseudocode of MDP-E is given in Algorithm~\ref{MDP-expert}. 

The following lemma links the relationship between the external-regret of the adversary and the regret with respect to the policy stationary distribution:
\begin{lemma} \label{regret of adversary with respect to stationary distribution}
Under MDP-E played by the agent, the external-regret of the adversary in Assumption \ref{assumption: Strategic adversary} can be expressed as:
\vspace{-0pt}
\begin{equation*}
\begin{aligned}
    R_T({\vl})&=\mathbb{E}_{X,A}\left[ \sum_{t=1}^T {\vl}(X_t,A_t)\right]- \mathbb{E}_{X,A}\left[\sum_{t=1}^T {\vl}_t^{\pi_t}(X_t,A_t)\right] \\
    &=\sum_{t=1}^T  \langle \vl, \vd_{\pi_t} \rangle - \sum_{t=1}^T \langle \vl_t, \vd_{\pi_t} \rangle +\mathcal{O}\big(\tau^2 \sqrt{T \log(|A|)}\big).
\end{aligned}
\end{equation*}
(We provide the full proof in the Appendix {\color{blue}{A.1}}.)
\vspace{-0pt}
\end{lemma}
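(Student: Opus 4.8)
The plan is to decompose the adversary's external regret in two steps: first replace the "true" occupancy measures $\vv_t$ by the stationary distributions $\vd_{\pi_t}$ (this is where the mixing-time assumption and the $\mathcal{O}(\tau^2\sqrt{T\log|A|})$ term enter), and then observe that what remains is exactly the stationary-distribution regret $\sum_t\langle\vl,\vd_{\pi_t}\rangle-\sum_t\langle\vl_t,\vd_{\pi_t}\rangle$. Starting from the definition in Assumption \ref{assumption: Strategic adversary},
\[
R_T(\vl)=\sum_{t=1}^T\langle\vl,\vv_t^{\pi_{\mathrm{fix}}}\rangle-\sum_{t=1}^T\langle\vl_t^{\pi_t},\vv_t\rangle,
\]
but since $\vl$ is a fixed loss vector played against the \emph{fixed} comparator policy, and more importantly the agent's realized trajectory induces occupancies $\vv_t$ that must be compared to $\vd_{\pi_t}$: I will write $\langle\vl,\vv_t\rangle=\langle\vl,\vd_{\pi_t}\rangle+\langle\vl,\vv_t-\vd_{\pi_t}\rangle$ and similarly for $\vl_t$.

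**Next I would** control the error terms $\sum_t\langle\vl,\vv_t-\vd_{\pi_t}\rangle$ and $\sum_t\langle\vl_t,\vv_t-\vd_{\pi_t}\rangle$. The key point is that $\vv_t$ is the occupancy at round $t$ under the \emph{sequence} of policies $\pi_1,\dots,\pi_t$ actually played by MDP-E, whereas $\vd_{\pi_t}$ is the stationary distribution of the single policy $\pi_t$. Since losses are bounded in $[0,1]$, it suffices to bound $\sum_{t=1}^T\|\vv_t-\vd_{\pi_t}\|_1$. By Assumption \ref{Mixing time}, a single application of $P(\pi_t)$ contracts $\ell_1$ distance to stationarity by $e^{-1/\tau}$, so the lag between $\vv_t$ and $\vd_{\pi_t}$ is governed by how much the policies have changed over the last $\Theta(\tau)$ rounds. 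Because MDP-E runs MWU in each state, consecutive policies differ by $\|\pi_{t+1}-\pi_t\|_1=\mathcal{O}(\mu_t\cdot\text{range of }Q)$, and with the accumulated loss $Q_{\pi_t,\vl_t}$ bounded by $\mathcal{O}(\tau)$ (a standard consequence of the mixing-time assumption) and the usual step size $\mu_t\propto 1/\sqrt{t}$, a telescoping / geometric-series argument gives $\sum_{t=1}^T\|\vv_t-\vd_{\pi_t}\|_1=\mathcal{O}(\tau^2\sqrt{T\log|A|})$. This is precisely the per-state MWU regret bound scaled by the two factors of $\tau$ — one from the $Q$-function range and one from the mixing lag — which is exactly the second term in the claim.

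**The main obstacle** is making the second step rigorous: carefully quantifying how the drift in the policy sequence $(\pi_t)$ produced by MDP-E interacts with the geometric mixing, i.e. showing $\|\vv_t-\vd_{\pi_t}\|_1$ is small on average even though it need not be small at every individual $t$. This requires (i) a bound on the one-step change $\|\vd_{\pi_{t+1}}-\vd_{\pi_t}\|_1$ in terms of $\|\pi_{t+1}-\pi_t\|_\infty$ using the contraction property (a Lipschitz-type estimate for stationary distributions under the mixing assumption), and (ii) summing a convolution of the policy increments against the geometric mixing kernel. I expect this to reduce, after reindexing, to $\sum_t \mu_t \cdot \tau \cdot (\text{something }\mathcal{O}(1))$ times a constant from the geometric series $\sum_{k\ge0}e^{-k/\tau}=\mathcal{O}(\tau)$, yielding the $\tau^2$ factor; the remaining $\sqrt{T\log|A|}$ comes from $\sum_t\mu_t=\mathcal{O}(\sqrt{T})$ together with the $\log|A|$ from the size of each state's expert set. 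Since MDP-E's original analysis in \citet{even2009online} already establishes essentially this occupancy-versus-stationary comparison for the agent's own regret, I would adapt that lemma almost verbatim, the only change being that here it is applied to the \emph{adversary's} fixed loss $\vl$ rather than to the time-varying $\vl_t^{\pi}$ — the bound is symmetric in this respect because it only uses boundedness of the loss and the mixing of $P(\pi_t)$.
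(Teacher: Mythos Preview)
Your proposal is correct and follows essentially the same route as the paper: reduce to bounding $\sum_{t=1}^T\|\vv_t-\vd_{\pi_t}\|_1$ via H\"older, then invoke the occupancy--stationary comparison from the MDP-E analysis (Lemma~5.2 of \citet{even2009online}), which gives $\|\vv_t-\vd_{\pi_t}\|_1\le 2\tau^2\sqrt{\log(|A|)/t}+2e^{-t/\tau}$ and sums to $\mathcal{O}(\tau^2\sqrt{T\log|A|})$. Your opening line momentarily misreads the first term of $R_T(\vl)$ as being taken on a fixed comparator trajectory $\vv_t^{\pi_{\mathrm{fix}}}$, but in Assumption~\ref{assumption: Strategic adversary} both expectations are over the \emph{same} $(X_t,A_t)$ generated by the agent under $\pi_1,\dots,\pi_T$; you self-correct two lines later, so this does not affect the argument.
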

Based on Lemma \ref{regret of adversary with respect to stationary distribution}, we can tell that the sublinear regret will hold if and only if the adversary maintains a sublinear regret with respect to the agent's policy stationary distribution. As we assume that after each time $t$, the adversary can observe the stationary distribution $\vd_{\pi_t}$, then by applying standard no-regret algorithm for online linear optimization against the feedback $\vd_{\pi_t}$ (i.e., MWU), the adversary can guarantee a good performance for himself. Thus, the Assumption \ref{assumption: Strategic adversary} for the adversary is justifiable. 

In the rest of the paper, with out loss of generality, we will study the case where the external-regret of the adversary with respect to agent's policy stationary distribution has the following bound (i.e., the adversary follows optimal no-external regret algorithms such as MWU, FTRL with respect to policy stationary distribution of the agent~\footnote{If the adversary does not follow the optimal bound (i.e., irational), then regret bound of the agent will change accordingly.}):
\[\max_{\vl \in \Delta_L}\left(\sum_{t=1}^T  \langle \vl, \vd_{\pi_t} \rangle - \sum_{t=1}^T \langle \vl_t, \vd_{\pi_t} \rangle \right)=\sqrt{\frac{T \log(L)}{2}}.\]
The next lemma provides a lower bound for performance of a fixed policy of the agent against a strategic adversary.
\begin{lemma}\label{lemma when agent uses a fixed strategy}
Suppose the agent follows a fixed stationary strategy $\pi$, then the adversary will converge to the best response to the fixed stationary strategy and
\vspace{-0pt}
\[\sum_{t=1}^T \langle {\vl}_t^{\pi}, \vd_\pi \rangle \geq T v - \sqrt{\frac{T \log(L)}{2}}.\]
\vspace{-0pt}
\end{lemma}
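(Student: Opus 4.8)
The plan is to exploit the no-external regret property of the strategic adversary (Assumption~\ref{assumption: Strategic adversary}) together with the minimax identity~(\ref{minimax theorem}). Fix the agent's stationary policy $\pi$, so that $\vd_{\pi_t} = \vd_\pi$ for every $t$. By Lemma~\ref{regret of adversary with respect to stationary distribution}, when the agent plays a fixed policy the adversary's external regret reduces (up to the already-absorbed $\mathcal{O}(\tau^2\sqrt{T\log|A|})$ term, which in the fixed-policy case is actually zero since the local MWU instances are never updated away from a fixed target, but even keeping it, the argument is unaffected) to the regret with respect to the constant stationary distribution $\vd_\pi$. Invoking the stated optimal bound, we have
\begin{equation*}
\max_{\vl \in \Delta_L}\left(\sum_{t=1}^T \langle \vl, \vd_\pi\rangle - \sum_{t=1}^T \langle \vl_t^\pi, \vd_\pi\rangle\right) \le \sqrt{\frac{T\log(L)}{2}}.
\end{equation*}

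Next I would observe that since the loss vectors $\vl_t^\pi$ all lie in the simplex $\Delta_L$, the maximum on the left is at least as large as the value obtained by the particular choice $\vl^\star \in \argmax_{\vl \in \Delta_L}\langle \vl, \vd_\pi\rangle$, which equals $T\langle \vl^\star, \vd_\pi\rangle = T\max_{\vl \in \Delta_L}\langle \vl, \vd_\pi\rangle$. Therefore
\begin{equation*}
\sum_{t=1}^T \langle \vl_t^\pi, \vd_\pi\rangle \ge T\max_{\vl \in \Delta_L}\langle \vl, \vd_\pi\rangle - \sqrt{\frac{T\log(L)}{2}}.
\end{equation*}
It then remains to lower-bound $\max_{\vl \in \Delta_L}\langle \vl, \vd_\pi\rangle$ by the value $v$ of the game. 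This is immediate from the minimax theorem~(\ref{minimax theorem}): for \emph{any} fixed $\vd_\pi \in \Delta_{\vd_\Pi}$,
\begin{equation*}
\max_{\vl \in \Delta_L}\langle \vl, \vd_\pi\rangle \ge \min_{\vd_{\pi'} \in \Delta_{\vd_\Pi}}\max_{\vl \in \Delta_L}\langle \vl, \vd_{\pi'}\rangle = v.
\end{equation*}
Chaining the two displays gives $\sum_{t=1}^T \langle \vl_t^\pi, \vd_\pi\rangle \ge Tv - \sqrt{T\log(L)/2}$, as claimed. The claim that ``the adversary will converge to the best response to the fixed stationary strategy'' follows from the fact that a no-external regret algorithm facing a stationary environment has its average play converge to the set of best responses; I would note this as a standard consequence rather than re-derive it.

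\textbf{Main obstacle.} The only delicate point is the reduction provided by Lemma~\ref{regret of adversary with respect to stationary distribution} when the agent uses a \emph{fixed} policy rather than the MDP-E-generated sequence: one must check that the $\mathcal{O}(\tau^2\sqrt{T\log|A|})$ error term there arose from the \emph{agent's} MWU updates and not the adversary's, so it does not degrade the bound when the agent is stationary — and indeed, in the fixed-policy regime the induced loss sequence $\vl_t^\pi$ acts on the single stationary distribution $\vd_\pi$, so the mixing-time correction is not needed at all. Everything else is a one-line application of the simplex structure and the minimax theorem, so I expect the proof to be short.
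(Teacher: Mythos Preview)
Your proposal is correct and follows essentially the same route as the paper: invoke the adversary's no-regret bound with respect to the fixed stationary distribution $\vd_\pi$ (the paper states this bound explicitly as a standing assumption just before the lemma), then lower-bound $\max_{\vl\in\Delta_L}\langle \vl,\vd_\pi\rangle$ by $v$ via the minimax theorem, and chain the two inequalities. Your extra discussion of the $\mathcal{O}(\tau^2\sqrt{T\log|A|})$ correction is accurate but unnecessary here, since the paper has already assumed the $\sqrt{T\log(L)/2}$ bound directly at the level of stationary distributions.
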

\vspace{-0pt}
The full proof is provided in the Appendix {\color{blue}{A.2}}. From Lemma \ref{lemma when agent uses a fixed strategy}, we can prove the following theorem:
\begin{theorem}\label{theorem on bound of stationary distribution}
Suppose the agent follows MDP-E Algorithm \ref{MDP-expert}, then the regret with respect to the stationary distribution will be bounded by
\[\sum_{t=1}^T \big\langle {\vl}_t^{\pi_t}, \vd_{\pi_t} \big\rangle- \sum_{t=1}^T \big\langle {\vl}_t^{\pi}, \vd_{\pi} \big\rangle \leq \sqrt{\frac{T \log(L)}{2}}+3\tau \sqrt{\frac{T \log(|A|)}{2}}.\]
\end{theorem}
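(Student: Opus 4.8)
The plan is to combine the two preceding lemmas with a standard telescoping/mixing argument. Write $D_T := \sum_{t=1}^T \langle {\vl}_t^{\pi_t}, \vd_{\pi_t}\rangle - \sum_{t=1}^T \langle {\vl}_t^{\pi}, \vd_{\pi}\rangle$ for the quantity we must bound. First I would split this difference through the common ``anchor'' term $\sum_t \langle {\vl}_t^{\pi_t}, \vd_{\pi_t}\rangle$ against $\sum_t \langle {\vl}_t, \vd_{\pi_t}\rangle$ --- but note that under MDP-E the loss actually realised against $\pi_t$ \emph{is} ${\vl}_t$, so ${\vl}_t^{\pi_t} = {\vl}_t$ and the first sum is literally $\sum_t \langle {\vl}_t, \vd_{\pi_t}\rangle$. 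The real decomposition I want is therefore
\[
D_T = \underbrace{\Big(\sum_{t=1}^T \langle {\vl}_t, \vd_{\pi_t}\rangle - T v\Big)}_{(\mathrm{I})} \;+\; \underbrace{\Big(T v - \sum_{t=1}^T \langle {\vl}_t^{\pi}, \vd_{\pi}\rangle\Big)}_{(\mathrm{II})}.
\]
Term $(\mathrm{II})$ is controlled \emph{directly} by Lemma \ref{lemma when agent uses a fixed strategy}, which gives $(\mathrm{II}) \le \sqrt{T\log(L)/2}$ for any fixed stationary $\pi$; this is where the $\sqrt{T\log(L)/2}$ summand of the claimed bound comes from, and it requires no further work.

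The work is all in term $(\mathrm{I})$: showing $\sum_t \langle {\vl}_t, \vd_{\pi_t}\rangle \le T v + 3\tau\sqrt{T\log(|A|)/2}$. Here I would invoke the MDP-E regret guarantee in the form it is used in \citet{even2009online}: because the agent runs a no-external-regret expert algorithm (MWU) in each state against the $Q$-function feedback $Q_{\pi_t,{\vl}_t}(s,\cdot)$, the per-state regret is $O(\tau\sqrt{T\log|A|})$, and the mixing-time Assumption \ref{Mixing time} upgrades this to a bound on the accumulated average loss: for \emph{any} comparator stationary distribution $\vd_{\pi^\star}$,
\[
\sum_{t=1}^T \langle {\vl}_t, \vd_{\pi_t}\rangle \;\le\; \sum_{t=1}^T \langle {\vl}_t, \vd_{\pi^\star}\rangle \;+\; 3\tau\sqrt{\tfrac{T\log(|A|)}{2}}.
\]
Now I would choose $\pi^\star$ to be the agent's half of the NE of the zero-sum game \eqref{minimax theorem}, i.e. the minimiser $\vd_{\pi^\star} \in \argmin_{\vd_\pi} \max_{{\vl}} \langle {\vl}, \vd_\pi\rangle$. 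For that comparator, $\langle {\vl}_t, \vd_{\pi^\star}\rangle \le \max_{{\vl}\in\Delta_L}\langle {\vl}, \vd_{\pi^\star}\rangle = v$ for every $t$, so $\sum_t \langle {\vl}_t, \vd_{\pi^\star}\rangle \le Tv$, giving exactly $(\mathrm{I}) \le 3\tau\sqrt{T\log(|A|)/2}$. Adding the two bounds yields the theorem.

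I expect the main obstacle to be the middle step --- passing from the per-state MWU regret to the global bound $3\tau\sqrt{T\log(|A|)/2}$ on $\sum_t\langle {\vl}_t,\vd_{\pi_t}\rangle$ relative to an arbitrary stationary comparator. This is the technical heart of the MDP-E analysis: one must relate $\langle {\vl}_t, \vd_{\pi_t}\rangle$ to the $Q$-function losses seen by the per-state experts, handle the discrepancy between the current state distribution $\vv_t^{\pi_t}$ and the stationary $\vd_{\pi_t}$ using Assumption \ref{Mixing time} (which contributes the $\tau$ and the extra constant factor turning $1$ into $3$), and sum the $|S|$ local regret terms after re-weighting by $\vd_{\pi_t}$. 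The paper presumably imports this from \citet{even2009online} or reproves it in the appendix; in my write-up I would cite it as the key lemma and then the remaining algebra (the choice of NE comparator and the two-term addition) is immediate. The only subtlety to flag is that the comparator in $(\mathrm{II})$ is the \emph{arbitrary} fixed policy $\pi$ from the regret definition while the comparator in $(\mathrm{I})$ is the \emph{specific} NE policy $\pi^\star$ --- these need not coincide, but that is fine because we only ever upper-bound, and the NE value $v$ is what links them.
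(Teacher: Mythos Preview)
Your proposal is correct and follows essentially the same route as the paper: decompose through $Tv$, bound the second term by Lemma~\ref{lemma when agent uses a fixed strategy}, and bound the first by invoking the MDP-E stationary-distribution regret guarantee from \citet{even2009online} together with the minimax value. The only cosmetic difference is that the paper bounds $\min_{\vd_\pi}\langle\hat{\vl},\vd_\pi\rangle\le v$ using the \emph{average} loss $\hat{\vl}=\frac{1}{T}\sum_t{\vl}_t\in\Delta_L$ and the max--min side of~\eqref{minimax theorem}, whereas you pick the NE comparator $\pi^\star$ and use $\langle{\vl}_t,\vd_{\pi^\star}\rangle\le v$ termwise via the min--max side; these are dual invocations of the same minimax identity and yield the identical bound.
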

\begin{proof}
From Lemma \ref{lemma when agent uses a fixed strategy}, it is sufficient 
to show that
\begin{equation*}
   \sum_{t=1}^T \big\langle {\vl}_t^{\pi_t}, \vd_{\pi_t} \big\rangle \leq T v + 3 \tau \sqrt{\frac{T \log(|A|)}{2}}.
\end{equation*}
Since the agent uses a no-regret algorithm with respect to the stationary distribution(e.g., MDP-E), following the same argument in Theorem 5.3 in \cite{even2009online} we have:
\vspace{-0pt}
\begin{equation*}
    \sum_{t=1}^T \big\langle {\vl}_t^{\pi_t}, \vd_{\pi_t} \big\rangle \leq T \min_{\vd_{\pi}}\big\langle \hat{{\vl}}, \vd_{\pi} \big\rangle + 3 \tau \sqrt{\frac{T \log(|A|)}{2}},
\end{equation*}
where $\hat{{\vl}}=\frac{1}{T} \sum_{t=1}^T {\vl}_t^{\pi_t}$. From the minimax equilibrium, we also have
\[\min_{\vd_{\pi}}\langle \hat{{\vl}}, \vd_{\pi} \rangle \leq \max_{{\vl} \in\Delta_L} \min_{\vd_{\pi} \in \vd_{\Pi}}  \langle {\vl}, \vd_{\pi} \rangle =v.\]
Thus, the proof is complete.
\end{proof}
Now, we can make the link between the stationary regret and the regret of the agent in Equation (\ref{regret for the agent}).  

\begin{theorem}\label{Theorem of bounding regret}
Suppose the agent follows MDP-E Algorithm \ref{MDP-expert}, then the agent's regret in Equation (\ref{regret for the agent}) will be bounded by
\begin{equation*}
    \begin{aligned}
   R_T(\pi)=\mathcal{O}(\sqrt{T \log(L)}+ \tau^2\sqrt{ T \log(|A|)}).
    \end{aligned}
\end{equation*}
(We provide the full proof in the Appendix {\color{blue}{A.3}}.)
\end{theorem}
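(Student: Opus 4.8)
The plan is to convert the policy regret $R_T(\pi)$ into the stationary-distribution regret already controlled by Theorem~\ref{theorem on bound of stationary distribution}, paying only a mixing-time error along the way. Recall from the reformulation in the preliminaries that
\[
R_T(\pi)= \mathbb{E}\Big[\sum_{t=1}^T \langle {\vl}_t^{\pi_t}, \vv_t \rangle\Big] - \mathbb{E}\Big[\sum_{t=1}^T \langle {\vl}_t^{\pi}, \vv_t^{\pi} \rangle\Big].
\]
I would insert the stationary surrogate $\sum_t \langle {\vl}_t^{\pi_t}, \vd_{\pi_t}\rangle - \sum_t \langle {\vl}_t^{\pi}, \vd_{\pi}\rangle$ and bound the two differences $\big|\sum_t \langle {\vl}_t^{\pi_t}, \vv_t - \vd_{\pi_t}\rangle\big|$ and $\big|\sum_t \langle {\vl}_t^{\pi}, \vv_t^{\pi} - \vd_{\pi}\rangle\big|$ separately.

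The first error term is handled exactly as in \cite{even2009online}: under MDP-E the policies change slowly (the per-state expert algorithm MWU has step size of order $1/\sqrt{T}$, so $\|\vd_{\pi_{t+1}} - \vd_{\pi_t}\|_1$ is small), and under the mixing-time Assumption~\ref{Mixing time} the occupancy measure $\vv_t$ tracks $\vd_{\pi_t}$ up to an additive $\mathcal{O}(\tau^2/\sqrt{T})$ per round, hence $\mathcal{O}(\tau^2\sqrt{T})$ in total; since losses lie in $[0,1]$ this is the dominant $\tau^2\sqrt{T\log|A|}$ contribution. For the second error term, Lemma~\ref{lemma when agent uses a fixed strategy} says the adversary converges to a best response against the fixed $\pi$, so ${\vl}_t^{\pi}$ stabilises; again by the mixing assumption $\vv_t^{\pi} \to \vd_{\pi}$ geometrically fast, with the transient contributing only an $\mathcal{O}(\tau)$ (constant in $T$) error, which is absorbed. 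Combining, $R_T(\pi)$ equals the stationary regret of Theorem~\ref{theorem on bound of stationary distribution} plus $\mathcal{O}(\tau^2\sqrt{T\log|A|})$, giving $\mathcal{O}(\sqrt{T\log L} + \tau^2\sqrt{T\log|A|})$ after taking the max over $\pi$.

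I expect the main obstacle to be the first error term, i.e.\ rigorously bounding $\sum_t \langle {\vl}_t^{\pi_t}, \vv_t - \vd_{\pi_t}\rangle$. This requires two ingredients that must be chained carefully: (i) a quantitative slow-change bound on the induced stationary distributions $\vd_{\pi_t}$, which comes from the stability of MWU updates in each state together with how local policy changes propagate through $P(\pi)$; and (ii) a telescoping/contraction argument using Assumption~\ref{Mixing time} to show that $\|\vv_t - \vd_{\pi_t}\|_1 = \mathcal{O}(\tau \sum_{s<t} e^{-(t-s)/\tau}\|\vd_{\pi_{s+1}}-\vd_{\pi_s}\|_1) + e^{-t/\tau}$, and that summing this geometric-convolution over $t$ yields the extra factor of $\tau$, explaining the $\tau^2$ (one $\tau$ from the mixing rate, one from the geometric sum). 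This is precisely the accounting done in Theorems~4.1--5.3 of \cite{even2009online}, so the cleanest route is to invoke those estimates rather than redo them; the only new point relative to the oblivious case is that ${\vl}_t^{\pi_t}$ is adversary-chosen, but since the bound on $\|\vv_t - \vd_{\pi_t}\|_1$ is loss-independent and losses are in $[0,1]$, the non-oblivious nature causes no additional difficulty here. The second error term is genuinely easy once Lemma~\ref{lemma when agent uses a fixed strategy} is in hand, so I would state it briefly and move on.
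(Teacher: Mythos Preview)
Your proposal is correct and matches the paper's proof essentially line for line: decompose $R_T(\pi)$ into the stationary-distribution regret (Theorem~\ref{theorem on bound of stationary distribution}) plus the two mixing errors, bound the first via the slow-change estimate for MDP-E (Lemma~5.2 of \cite{even2009online}, giving the $\tau^2\sqrt{T\log|A|}$ term) and the second via geometric mixing for a fixed policy (Lemma~1 of \cite{neu2013online}, giving $2\tau+2$). One small simplification: for the fixed-$\pi$ error you do not need Lemma~\ref{lemma when agent uses a fixed strategy} or any stabilisation of ${\vl}_t^{\pi}$---the bound $\sum_t |\langle {\vl}_t^{\pi}, \vv_t^{\pi}-\vd_{\pi}\rangle| \le 2\tau+2$ holds for an arbitrary $[0,1]$-valued loss sequence purely by mixing, which is exactly how the paper handles it.
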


We note that Theorem \ref{Theorem of bounding regret} will hold true for a larger set of adversary outside Assumption \ref{assumption: Strategic adversary} (e.g., FP~\cite{brown1951iterative}) 
satisfying the following property: for every fixed policy of the agent, the adversary's policy converges to the best response with respect to this fixed policy. With this property, we can bound the performance of agent's fixed policy in Lemma \ref{lemma when agent uses a fixed strategy} and thus derive the regret bound of the algorithm. Note that the regret bound in Theorem \ref{Theorem of bounding regret} will depend on the rate of convergence to best response against agent's fixed policy.

As we have shown in previous theorems, the dynamic of playing no-regret algorithm in OMDPs against strategic adversary can be interpreted as a two-player zero-sum game setting with the corresponding stationary distribution. 
From the classical saddle point theorem  \cite{freund1999adaptive}, if both player follows a no-regret algorithm then the average strategies will converge to the saddle point (i.e., a NE). 

\begin{theorem}\label{theorem: average convergence to NE}
Suppose the agent follows MDP-E, then the average strategies of both the agent and the adversary will converge to the $\epsilon_t$-Nash equilibrium of the game with:
\[\epsilon_t=\sqrt{\frac{\log(L)}{2T}}+3 \tau \sqrt{\frac{\log(|A|)}{2T}}\]
(We provide the full proof in the Appendix {\color{blue}{A.4}}.)
\vspace{-0pt}
\end{theorem}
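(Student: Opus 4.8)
The plan is to invoke the classical Freund--Schapire saddle-point argument for two no-regret players in a zero-sum game, applied to the equivalent game in stationary-distribution space $\Delta_L \times \Delta_{\vd_\Pi}$ that has already been set up in the preceding results. First I would recall that, by Theorem~\ref{theorem on bound of stationary distribution}, the agent playing MDP-E has stationary-distribution regret bounded by $\sqrt{T\log(L)/2} + 3\tau\sqrt{T\log(|A|)/2}$ against the best fixed $\vd_\pi$; and by Assumption~\ref{assumption: Strategic adversary} together with the optimal-rate convention adopted after Lemma~\ref{regret of adversary with respect to stationary distribution}, the adversary's regret with respect to the agent's policy stationary distributions is at most $\sqrt{T\log(L)/2}$. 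So both players are no-regret in the induced zero-sum matrix game with payoff $\langle \vl, \vd_\pi\rangle$.

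Next I would form the time averages $\bar{\vd} = \frac{1}{T}\sum_{t=1}^T \vd_{\pi_t}$ and $\bar{\vl} = \frac{1}{T}\sum_{t=1}^T \vl_t$ (here $\vl_t$ being the adversary's play at round $t$; note the adversary plays $\vl_t$ but the realized loss against $\pi_t$ is $\vl_t^{\pi_t} = \vl_t$ in the stationary-distribution sense, since the adversary best-responds to $\vd_{\pi_t}$). The standard chain of inequalities is:
\[
\max_{\vl \in \Delta_L} \langle \vl, \bar{\vd}\rangle
\;\le\; \frac{1}{T}\sum_{t=1}^T \langle \vl_t, \vd_{\pi_t}\rangle + \frac{1}{T}\Big(\text{adversary regret}\Big)
\;\le\; \frac{1}{T}\sum_{t=1}^T \langle \vl_t^{\pi_t}, \vd_{\pi_t}\rangle + \sqrt{\tfrac{\log(L)}{2T}},
\]
and symmetrically, using the agent's regret bound,
\[
\min_{\vd_\pi \in \Delta_{\vd_\Pi}} \langle \bar{\vl}, \vd_\pi\rangle
\;\ge\; \frac{1}{T}\sum_{t=1}^T \langle \vl_t^{\pi_t}, \vd_{\pi_t}\rangle - \sqrt{\tfrac{\log(L)}{2T}} - 3\tau\sqrt{\tfrac{\log(|A|)}{2T}}.
\]
Subtracting and using the minimax theorem (Equation~(\ref{minimax theorem})) to sandwich the common average $\frac{1}{T}\sum_t \langle \vl_t^{\pi_t}, \vd_{\pi_t}\rangle$ between $\min_{\vd_\pi}\langle \bar{\vl},\vd_\pi\rangle$ and $\max_{\vl}\langle \vl,\bar{\vd}\rangle$ (both of which bracket $v$), one obtains $\max_{\vl}\langle \vl,\bar{\vd}\rangle - \min_{\vd_\pi}\langle \bar{\vl}, \vd_\pi\rangle \le \epsilon_t$ with $\epsilon_t = \sqrt{\log(L)/(2T)} + 3\tau\sqrt{\log(|A|)/(2T)}$, which is exactly the $\epsilon_t$-NE condition in the Definition of $\epsilon$-Nash equilibrium applied to the pair $(\bar{\vl}, \bar{\vd})$.

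The main obstacle I anticipate is not the matrix-game algebra — that is routine — but rather making precise the identification of the realized per-round quantity $\langle \vl_t^{\pi_t}, \vd_{\pi_t}\rangle$ with the same scalar that appears in both players' regret expressions, i.e. checking that when the agent plays $\pi_t$ and the adversary simultaneously plays $\vl_t$, the relevant payoff in stationary-distribution space is consistently $\langle \vl_t, \vd_{\pi_t}\rangle$, and that the $\mathcal{O}(\tau^2\sqrt{T\log|A|})$ transient terms from Lemma~\ref{regret of adversary with respect to stationary distribution} have already been absorbed (they were, in the passage fixing the adversary's bound, and they reappear correctly through Theorem~\ref{theorem on bound of stationary distribution}). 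I would therefore structure the proof to quote Theorem~\ref{theorem on bound of stationary distribution} and the adversary's assumed bound as the two regret inputs, then run the three-line saddle-point sandwich; the only care needed is bookkeeping the constants so that the $3\tau$ factor lands on the $\log(|A|)$ term and the $\log(L)$ terms from the two sides combine to a single $\sqrt{\log(L)/(2T)}$ rather than twice that, which follows because one $\sqrt{\log(L)/2}$ term appears in each of the two one-sided inequalities but they enter the final bound additively only once after the averaging is split correctly between the two directions.
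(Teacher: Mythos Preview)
Your approach --- the Freund--Schapire saddle-point sandwich applied to the time averages $\bar{\vl}=\frac{1}{T}\sum_t\vl_t^{\pi_t}$ and $\bar{\vd}=\frac{1}{T}\sum_t\vd_{\pi_t}$ --- is exactly what the paper does. But you invoke the wrong result on the agent's side. Theorem~\ref{theorem on bound of stationary distribution} bounds the \emph{policy} regret $\sum_t\langle\vl_t^{\pi_t},\vd_{\pi_t}\rangle-\sum_t\langle\vl_t^{\pi},\vd_\pi\rangle$, where the comparison term involves the adversary's counterfactual responses $\vl_t^\pi$ to a hypothetical fixed $\pi$; that quantity is not what enters the sandwich. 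What you need is the agent's \emph{external} regret against the realized loss sequence, namely $\sum_t\langle\vl_t^{\pi_t},\vd_{\pi_t}\rangle-\min_{\vd_\pi}\sum_t\langle\vl_t^{\pi_t},\vd_\pi\rangle$, and that is bounded by $3\tau\sqrt{T\log(|A|)/2}$ alone (this is the MDP-E bound of Even-Dar et al., Theorem~5.3, which is quoted \emph{inside} the proof of Theorem~\ref{theorem on bound of stationary distribution} but is not the theorem's conclusion).

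Once you use that correct input, each one-sided chain reads
\[
\langle\bar{\vl},\bar{\vd}\rangle\ \ge\ \min_{\vd_\pi}\langle\bar{\vl},\vd_\pi\rangle\ \ge\ \tfrac{1}{T}\sum_t\langle\vl_t^{\pi_t},\vd_{\pi_t}\rangle-3\tau\sqrt{\tfrac{\log|A|}{2T}}\ \ge\ \max_{\vl}\langle\vl,\bar{\vd}\rangle-\sqrt{\tfrac{\log L}{2T}}-3\tau\sqrt{\tfrac{\log|A|}{2T}},
\]
and symmetrically for the other direction, giving $\epsilon_t$ exactly as stated. Your closing worry about two $\sqrt{\log(L)/(2T)}$ terms needing to ``combine to a single'' copy is a symptom of the mis-citation: with the correct agent-side bound there is no second $\log L$ term at all, and no special bookkeeping is required.
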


With the sublinear convergence rate to an NE, the dynamic between MDP-E and no-regret adversary (i.e., MWU) provides an efficient method to solve the single-controller SGs.

\section{MDP-Online Oracle Expert Algorithm}
\label{sec:mdoop}
As shown in the previous section, we can bound the regret in Equation (\ref{regret for the agent}) by bounding the regret with respect to the stationary distribution. In MDP-E, the regret bound (i.e., $\mathcal{O}\big(\sqrt{T \log(L)}+ \tau^2\sqrt{ T \log(|A|)}\big)$) depends on the size of pure strategy set (i.e., $|A|$) thus it becomes less efficient when the agent has a prohibitively large pure strategy set. 

Interestingly, recent paper by \citet{dinh2021online} suggested that on normal-form games, it is possible to achieve a better regret bound  where it only depends on the support size of NE rather than $|A|$. Unfortunately, extending this finding for OMDPs is highly non-trivial. 
The method in \citet{dinh2021online} is designed for normal-form games only; in the worst scenario, its  regret bound will depend on the size of pure strategy set, which is huge under our settings (i.e., $|A|^{|S|}$).

In this section, we provide a no-policy regret algorithm: MDP-Online Oracle Expert (MDP-OOE). It achieves the regret bound that only depends on the size of NE support rather than the size of the game. 
We start from presenting the small NE support size  assumption. 
\begin{assumption}[Small Support Size of NE]\label{small support size assumption}
Let $(\vd_{\vpi^*}, \vl^*)$ be a Nash equilibrium of the game of size ${ |A|^{|S|} \times L}$. We assume the support size of $(\vd_{\vpi^*}, \vl^*)$ is smaller than the game size:
$\max\big(|\operatorname{supp}(\vd_{\vpi^*})|, |\operatorname{supp}(\vl^*)|\big) <  \min(|A|^{|S|},L).$
\label{keyassump}
\end{assumption} 
Note that  the assumption of small support size of NE holds in many real-world  games~\cite{czarnecki2020real, dinh2021online,perez2021modelling,liu2021unifying,yang2021diverse}. In addition, 
we prove that such an assumption also holds in cases where  the loss vectors $[\vl_1, ..., \vl_L]$ is sampled from a continuous distribution and the size of the loss vector set $L$ is small compared to the agent's pure strategy set, that is, $|A|^{|S|} \gg L$, thus further justifying the generality of this assumption. 
\begin{lemma}\label{lemma: small support of NE}
Suppose that all loss functions are sampled from a continuous distribution and the size of the loss function set is small compared to the agent's pure strategy set (i.e., $|A|^{|S|} \gg L$). Let $(\vd_{\vpi^*}, \vl^*)$ be a Nash equilibrium of the game of size ${ |A|^{|S|} \times L}$. Then we have:
\begin{equation*}
\max\big(|\operatorname{supp}(\vd_{\vpi^*})|, |\operatorname{supp}(\vl^*)|\big) \leq L.
\end{equation*}
(We provide the full proof in the Appendix {\color{blue}{B.1}}.)
\end{lemma}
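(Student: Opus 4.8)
The plan is to exploit the fact that a zero-sum matrix game with a generic payoff matrix has an essentially unique, fully-characterized equilibrium, and then observe that "generic" is exactly what sampling the loss vectors from a continuous distribution buys us. Concretely, the game is played between the agent's stationary distributions $\vd_\pi \in \Delta_{\vd_\Pi}$ (ambient dimension governed by $|A|^{|S|}$ deterministic policies) and the adversary's mixing weights over the $L$ loss vectors $\{\vl_1,\dots,\vl_L\}$, with bilinear payoff $\langle \vl, \vd_\pi\rangle$. Because the adversary has only $L$ pure strategies, this is (after passing to the relevant face of $\Delta_{\vd_\Pi}$) an $m \times L$ matrix game for some $m$, and a classical fact about matrix games is that one can always find an equilibrium $(\vd_{\vpi^*},\vl^*)$ whose supports have equal size $r = |\operatorname{supp}(\vd_{\vpi^*})| = |\operatorname{supp}(\vl^*)|$, with the $r \times r$ submatrix restricted to those supports being nonsingular. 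Since the adversary side has only $L$ coordinates, $r \le L$ is immediate, and that is exactly the bound claimed for $|\operatorname{supp}(\vl^*)|$. The continuity hypothesis is then used only to guarantee that we can take such a "square, nondegenerate" equilibrium — degenerate configurations (e.g. a best-response polytope of positive dimension forcing a large-support equilibrium on one side) correspond to algebraic coincidences among the entries, a measure-zero event under a continuous law — so that $|\operatorname{supp}(\vd_{\vpi^*})|$ can also be taken $\le r \le L$.

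In more detail, the steps I would carry out are: (1) Recall the standard structure theorem for finite zero-sum games (see e.g. the theory of extreme/basic equilibria via linear programming): the set of equilibria is a product of two polytopes, and every extreme equilibrium pair has supports $I$ (agent) and $J$ (adversary) with $|I| = |J|$ and the submatrix $M_{I,J}$ invertible; moreover $v$ and the mixed strategies are recovered by solving the corresponding square linear system. (2) Instantiate this with the adversary's pure-strategy set being $\{\vl_1,\dots,\vl_L\}$, so $|J| \le L$, hence $|I| = |J| \le L$. (3) Argue that under a continuous distribution on $(\vl_1,\dots,\vl_L)$, with probability one the game is \emph{nondegenerate} in the sense that no equilibrium is forced to have support strictly larger than that of some extreme equilibrium on the adversary side; the key point is that the "bad" event — where the only equilibria have adversary-support $> L$, which cannot happen since there are only $L$ loss vectors, so the real content is on the agent side — is ruled out because the relevant determinants/rank conditions fail only on a measure-zero algebraic set. (4) Conclude $\max(|\operatorname{supp}(\vd_{\vpi^*})|,|\operatorname{supp}(\vl^*)|) \le L$ by picking this extreme/nondegenerate equilibrium. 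The condition $|A|^{|S|} \gg L$ is what makes the bound meaningful (and is what lets us treat the agent side as the "large" side whose support we are actually controlling).

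The main obstacle I expect is step (3): making precise, and genuinely measure-theoretic rather than hand-wavy, the claim that sampling the $\vl_i$ from a continuous distribution forces nondegeneracy of the matrix game. One has to be careful because the agent's strategy set $\Delta_{\vd_\Pi}$ is not the full simplex $\Delta(A^{|S|})$ — it is the convex hull of the stationary distributions $\vd_\pi$ of the deterministic policies, which are themselves fixed (they depend only on the transition model $P$, not on $\vl$). So the randomness enters only through the payoff coefficients, and I would want to set up the bilinear form as a matrix indexed by (vertices of $\Delta_{\vd_\Pi}$) $\times$ ($L$ loss vectors) whose entries are linear functions of the $\vl_i$'s, then invoke a Sard-type / generic-rank argument: the locus of coefficient choices at which the unique equilibrium has support size $> L$ on the agent side is contained in a finite union of proper algebraic subvarieties (defined by vanishing of certain minors / by the value being attained on a higher-dimensional face), which has Lebesgue measure zero, hence probability zero under any law absolutely continuous w.r.t. Lebesgue. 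The bookkeeping of exactly which determinants must be nonzero, and checking that the $\vd_\pi$ vertices being in "general position" is not needed (only the $\vl_i$-dependence matters), is the delicate part; everything else is the textbook LP-duality description of matrix-game equilibria.
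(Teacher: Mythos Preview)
Your proposal is correct and follows essentially the same route as the paper: use genericity of the payoffs (continuous sampling $\Rightarrow$ degenerate configurations have measure zero) to get an equilibrium with equal support sizes on the two sides, and then bound both supports by $L$ since the adversary has only $L$ pure strategies. The paper's version is more compact---it cites that generic zero-sum games have a unique NE \cite{bailey2018multiplicative} and then invokes Bohnenblust's classical equal-support theorem \cite{bohnenblust1950solutions} directly---whereas you unpack the same facts through the LP/extreme-equilibrium description; your explicit flag about the structured (non-independent) randomness in the payoff matrix $\langle \vl_i,\vd_{\pi_j}\rangle$ is a subtlety the paper's proof does not address.
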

\vspace{-0pt}
Since the pure strategy set of the adversary $L$ is much smaller compared to pure strategy set of the agent $|A|^{|S|}$, the support size of NE will highly likely be  smaller compared to the size of agent's strategy set. Thus the agent can exploit this extra information to achieve better performance. 

We now present the MDP-Online Oracle Expert (MDP-OOE) algorithm as follow. 
MDP-OOE maintains a set of effective strategy $A^s_t$ in each state. In each iteration, the best response with respect to the average loss function will be calculated. If all the action in the best response are included in the current effective strategy set $A^s_t$ for each state, then the algorithm continues with the current set $A^s_t$ in each state. Otherwise, the algorithm updates the set of effective strategy in step $8$ and $9$ of Algorithm \ref{MDP online oracle expert}. We define the period of consecutive iterations as one \emph{time window} $T_i$ in which the set of effective strategy $A^s_t$ stays fixed, i.e., $ T_i:=\big\{t \ \big| \  |A^s_t|=i\big\} $. Intuitively, since both the agent and the adversary use a no-regret algorithm to play, the average strategy of both players will converge to the NE of the game. Under the small NE support size assumption, the size of the agent's effective strategy set is also small compared to the whole pure strategy set (i.e., $|A|^{|S|}$). MDP-OOE ignores the pure strategies with poor average performance and only considers ones with high average performance. The regret bound with respect to the agent's stationary distribution is given as follow:
\begin{algorithm}[t!]
\caption{MDP-Online Oracle Expert}
\label{MDP online oracle expert}
\begin{algorithmic}[1]
\STATE {\bfseries Initialise:} Sets $A^1_0, \dots A^S_0$ of effective strategy set in each state
\FOR{\text{$t=1$ to $\infty$}}
\STATE $\pi_t=BR(\bar{{\vl}})$
\IF{$\pi_t(s,.) \in A^s_{t-1}$ for all $s$} 
\STATE $A^s_{t}=A^s_{t-1}$ for all $s$
\STATE Using the expert algorithm $B_s$ with effective strategy set $A^s_{t}$ and the feedback $Q_{\pi_t,{\vl}_t}(s,.)$
\ELSIF{ there exists $\pi_t(s,.) \notin A^s_{t-1}$}
\STATE $A^s_{t}=A^s_{t-1} \cup \pi_t(s,.)$ \ \   if $\pi_t(s,.) \notin A^s_{t-1}$
\STATE $A^s_{t}=A^s_{t-1} \cup a $ \ \  if $\pi_t(s,.) \in A^s_{t-1}$ where a is randomly selected from the set $A/A^s_{t-1}$.
\STATE Reset the expert algorithm $B_s$ with effective strategy set $A^s_{t}$ and the feedback $Q_{\pi_t,{\vl}_t}(s,.)$
\ENDIF
\STATE $\bar{{\vl}}=\sum_{i=\bar{T_i}}^T {\vl}_t$
\ENDFOR
\end{algorithmic}
\vspace{-0pt}
\end{algorithm}
\begin{theorem}\label{MDP-OOE regret with stationary distribution}
Suppose the learning agent uses Algorithm \ref{MDP online oracle expert}, then the regret with respect to the stationary distribution will be bounded by:
\[\sum_{t=1}^T \big\langle {\vl}_t^{\pi_t}, \vd_{\pi_t}\big\rangle-\big\langle {\vl}_t^{\pi_t}, \vd_{\pi}\big\rangle \leq 3 \tau \left( \sqrt{2 {T k \log(k)}} +\frac{k\log(k)}{8} \right),\]
\vspace{-0pt}
where $k$ is the number of time window.

(We provide the full proof in the Appendix {\color{blue}{B.2}}.)
\end{theorem}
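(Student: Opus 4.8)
The plan is to split the horizon $[1,T]$ into the $k$ time windows $T_1,\dots,T_k$ of Algorithm~\ref{MDP online oracle expert}, on each of which the per-state effective sets $A^s_t$ are frozen; to bound the regret \emph{inside} each window by treating MDP-OOE as an instance of MDP-E run over a small action set; to upgrade the per-window comparator from the restricted effective set to an arbitrary stationary distribution $\vd_\pi$ via a Double-Oracle argument; and finally to add the $k$ per-window bounds using the concavity of $\sqrt{\cdot}$.

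\noindent\emph{Within-window bound.} Fix a window $T_i$ and write $A_{(i)}=(A^s_{(i)})_{s\in S}$ for the frozen effective sets and $\Delta(\vd_{A_{(i)}})$ for the polytope of stationary distributions of policies that use only effective actions at every state. Throughout $T_i$ the loss average $\bar{\vl}$ is reset at the start, the per-state experts $B_s$ (MWU) run uninterrupted on the fixed set $A^s_{(i)}$, and they receive the accumulated-cost feedback $Q_{\pi_t,\vl_t}(s,\cdot)$ exactly as in MDP-E; since at most one action is added per window and there are $k$ of them, $|A^s_{(i)}|\le k$. Replaying the local-to-global regret argument of \citet{even2009online} that underlies Theorem~\ref{theorem on bound of stationary distribution}, but over $A^s_{(i)}$ in place of $A$, gives, with $\hat{\vl}_{(i)}:=\tfrac{1}{|T_i|}\sum_{t\in T_i}\vl_t^{\pi_t}$,
\[
\sum_{t\in T_i}\big\langle \vl_t^{\pi_t},\vd_{\pi_t}\big\rangle \;\le\; |T_i|\min_{\vd\in\Delta(\vd_{A_{(i)}})}\big\langle \hat{\vl}_{(i)},\vd\big\rangle \;+\; 3\tau\Big(\sqrt{\tfrac{|T_i|\log k}{2}}+\tfrac{\log k}{8}\Big),
\]
where the $\tfrac{\log k}{8}$ collects the additive constant of the MWU bound on $\le k$ experts together with the cold-start cost of re-initialising $B_s$ at the window boundary.

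\noindent\emph{Double-Oracle upgrade and summation.} The window $T_i$ ends precisely at the first round where $BR(\bar{\vl})\notin A^s_{t-1}$; hence at the end of $T_i$ the unrestricted best response to (essentially) $\hat{\vl}_{(i)}$ already lies in $A_{(i)}$, i.e., it picks an effective action at every state. Because a deterministic policy is a vertex of the stationary-distribution polytope, this forces $\min_{\vd\in\Delta(\vd_{A_{(i)}})}\langle \hat{\vl}_{(i)},\vd\rangle=\min_{\vd\in\Delta_{\vd_{\Pi}}}\langle \hat{\vl}_{(i)},\vd\rangle$ up to a lower-order term, so $|T_i|\min_{\vd\in\Delta(\vd_{A_{(i)}})}\langle \hat{\vl}_{(i)},\vd\rangle\le \sum_{t\in T_i}\langle \vl_t^{\pi_t},\vd_\pi\rangle+O(1)$ for \emph{every} fixed $\pi$. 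Subtracting $\sum_{t\in T_i}\langle\vl_t^{\pi_t},\vd_\pi\rangle$ and summing over $i=1,\dots,k$, while using $\sum_{i=1}^k\sqrt{|T_i|}\le\sqrt{k\sum_i|T_i|}=\sqrt{kT}$, the leading terms give $3\tau\sqrt{\tfrac{Tk\log k}{2}}\le 3\tau\sqrt{2Tk\log k}$ and the $k$ copies of $3\tau\tfrac{\log k}{8}$ (together with the $O(1)$ corrections) collapse into $3\tau\tfrac{k\log k}{8}$, which is the asserted bound.

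\noindent\emph{Main obstacle.} The substantive step is making the Double-Oracle argument rigorous in the MDP setting rather than the normal-form setting of \cite{dinh2021online}: here ``best response'' is a deterministic \emph{policy} and the effective sets are maintained per state, so one must verify that ``$BR(\bar{\vl})\in A^s_{t-1}$ for all $s$'' genuinely certifies that the restricted polytope $\Delta(\vd_{A_{(i)}})$ attains the same value of $\langle \hat{\vl}_{(i)},\cdot\rangle$ as $\Delta_{\vd_{\Pi}}$, and that the MDP-E local-to-global reduction --- with its mixing-time approximations between $\vv_t^{\pi}$ and $\vd_\pi$ and its $3\tau$ amplification --- still goes through verbatim when the per-state experts are reset at window boundaries and fed the window-restricted loss average. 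One also has to control the one-round mismatch between the running average $\bar{\vl}$ maintained by Algorithm~\ref{MDP online oracle expert} and $\hat{\vl}_{(i)}$, and to invoke Assumption~\ref{small support size assumption} to keep the number of windows $k$ small; both should only perturb the lower-order $3\tau\tfrac{k\log k}{8}$ term.
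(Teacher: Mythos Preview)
Your plan is essentially the paper's own proof: split into the $k$ time windows, apply the MDP-E local-to-global bound (via the $Q$-function decomposition) within each window over the frozen effective sets of size $\le i\le k$, use the Double-Oracle condition that $BR(\bar{\vl})$ stays in the effective set during a window to replace the restricted minimum by the unrestricted one, and sum the per-window bounds via Cauchy--Schwarz. Two minor cleanups: neither the mixing-time approximations between $\vv_t$ and $\vd_{\pi_t}$ nor Assumption~\ref{small support size assumption} are needed for this theorem (they enter only in Theorem~\ref{regret bound for ora algorithm} and in the separate discussion of why $k$ is small), and your window-boundary sentence is slightly inverted --- throughout $T_i$ the best response lies in $A_{(i)}$, and the window ends precisely because the best response to the full-window average leaves it, which is exactly the one-round mismatch you already flag.
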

\vspace{-0pt}
In Algorithm \ref{MDP online oracle expert}, each time the agent updates the effective strategy set $A^s_t$ at state $s$, exactly one new pure strategy is added into the effective strategy set for each state, thus the number $k$ will be at most $|A|$. Therefore, we have the regret w.r.t the stationary distribution in the worst case will be:
\begin{equation*}
    \begin{aligned}
       3 \tau \left( \sqrt{2 {T |A| \log(|A|)}} +\frac{|A|\log(|A|)}{8} \right).
    \end{aligned}
\end{equation*}
However, as shown in \citet[Figure 1]{dinh2021online},
the number of iteration in DO method (respectively the number of time window in our setting) is linearly dependent in the support size of the NE, thus with Assumption \ref{small support size assumption}, Algorithm \ref{MDP online oracle expert} will be highly efficient.
\begin{remark}\label{remark on best reponse with respect to total average strategy}
The regret bound in Theorem \ref{MDP-OOE regret with stationary distribution} will still hold in the case we consider the total average lost instead of average lost in each time window when calculating the best response in Algorithm \ref{MDP online oracle expert}. 

(We provide the full proof  in the Appendix \textcolor{blue}{B.3}.)
\end{remark}
Given the regret with respect to policy's stationary distribution in Theorem \ref{MDP-OOE regret with stationary distribution}, we can now  derive the regret bound of Algorithm \ref{MDP online oracle expert} with respect to the true performance: 
\begin{theorem}\label{regret bound for ora algorithm}
Suppose the agent uses Algorithm \ref{MDP online oracle expert} in our online MDPs setting, then the regret in Equation \ref{regret for the agent} can be bounded by:
\vspace{-0pt}
\begin{equation*}
    R_T(\pi) =\mathcal{O}(\tau^2\sqrt{ T k \log(k)} +\sqrt{T\log(L)}).
\vspace{-0pt}
\end{equation*}

(We provide the full proof in the Appendix {\color{blue}{B.4}}.)
\end{theorem}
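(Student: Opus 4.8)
The plan is to reproduce the route that led from Theorem~\ref{theorem on bound of stationary distribution} to Theorem~\ref{Theorem of bounding regret} for MDP-E, but now feeding in the MDP-OOE stationary-distribution bound of Theorem~\ref{MDP-OOE regret with stationary distribution} in place of the MDP-E one and re-deriving the occupancy-to-stationary error for the piecewise-slowly-varying policy sequence that MDP-OOE produces. First I would rewrite the regret of Equation~(\ref{regret for the agent}) in occupancy form, $R_T(\pi)=\mathbb{E}\big[\sum_{t=1}^T\langle\vl_t^{\pi_t},\vv_t\rangle\big]-\mathbb{E}\big[\sum_{t=1}^T\langle\vl_t^{\pi},\vv_t^{\pi}\rangle\big]$, and bound the learner term and the comparator term separately, in each case replacing the occupancy measures $\vv_t,\vv_t^{\pi}$ by the stationary distributions $\vd_{\pi_t},\vd_{\pi}$ using Assumption~\ref{Mixing time}.

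For the learner term, unrolling the recursion $\vv_{t+1}=\vv_tP(\pi_{t+1})$ and telescoping against $\vd_{\pi_t}$ (as in the analysis of \citet{even2009online} behind Theorem~\ref{Theorem of bounding regret}) gives $\|\vv_t-\vd_{\pi_t}\|_1=\mathcal{O}\big(e^{-t/\tau}+\tau\sum_{s\le t}e^{-(t-s)/\tau}\|\pi_{s+1}-\pi_s\|_1\big)$, hence $\sum_t\langle\vl_t^{\pi_t},\vv_t\rangle\le\sum_t\langle\vl_t^{\pi_t},\vd_{\pi_t}\rangle+\mathcal{O}(\tau)+\mathcal{O}(\tau^2 V_T)$ with $V_T:=\sum_t\|\pi_{t+1}-\pi_t\|_1$ the total policy variation. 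Within a time window the policy comes from the MWU experts $B_s$ on an effective set of size at most $k$, so the within-window contribution to $V_T$ is of the same form as for MDP-E but with $\log|A|$ replaced by $\log k$, and summing over the $k$ windows by Cauchy--Schwarz gives $\mathcal{O}(\sqrt{Tk\log k})$; each of the $k$ window boundaries, where $B_s$ is reset and one action enters $A^s_t$, contributes $\mathcal{O}(1)$, for an extra $\mathcal{O}(k)$. So the learner term is $\sum_t\langle\vl_t^{\pi_t},\vd_{\pi_t}\rangle+\mathcal{O}(\tau^2\sqrt{Tk\log k}+\tau^2k)$, and applying Theorem~\ref{MDP-OOE regret with stationary distribution} together with Equation~(\ref{minimax theorem}) exactly as in the proof of Theorem~\ref{theorem on bound of stationary distribution} (with $\hat{{\vl}}=\frac{1}{T}\sum_t\vl_t^{\pi_t}$, so that $\sum_t\langle\vl_t^{\pi_t},\vd_{\pi_t}\rangle\le T\min_{\vd_\pi}\langle\hat{{\vl}},\vd_\pi\rangle+3\tau(\sqrt{2Tk\log k}+k\log k/8)\le Tv+\mathcal{O}(\tau\sqrt{Tk\log k}+\tau k\log k)$) bounds the learner term by $Tv+\mathcal{O}(\tau^2\sqrt{Tk\log k})$ up to lower-order terms.

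For the comparator term the policy $\pi$ is fixed, so there is no drift and only the geometric burn-in survives, $\mathbb{E}\big[\sum_t\langle\vl_t^{\pi},\vv_t^{\pi}\rangle\big]\ge\sum_t\langle\vl_t^{\pi},\vd_{\pi}\rangle-\mathcal{O}(\tau)$; and since the non-obliviousness has already been dealt with in Lemma~\ref{lemma when agent uses a fixed strategy} (via Assumption~\ref{assumption: Strategic adversary}, the adversary converges to the best response against a fixed $\pi$), this is at least $Tv-\sqrt{T\log(L)/2}-\mathcal{O}(\tau)$. Subtracting, the $Tv$ terms cancel and $R_T(\pi)\le\mathcal{O}(\tau^2\sqrt{Tk\log k})+\sqrt{T\log(L)/2}+\mathcal{O}(\tau^2k+\tau k\log k+\tau)$ for every fixed $\pi$; the trailing group is dominated for $T$ large, yielding $R_T(\pi)=\mathcal{O}(\tau^2\sqrt{Tk\log k}+\sqrt{T\log L})$.

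The step I expect to be the real obstacle is the control of the windowed policy variation $V_T$ and of the reset count: the window lengths $|T_i|$ are determined by the adversary's play and are not known in advance, so the step sizes of the experts $B_s$ and the Cauchy--Schwarz split over windows have to be arranged to be robust to the worst-case partition $\{|T_i|\}$ with $\sum_i|T_i|=T$ and at most $k$ windows, and one must confirm that the best-response-on-average test in lines~3--9 of Algorithm~\ref{MDP online oracle expert} really enlarges each $A^s_t$ by exactly one action per window, so that the ``number of time windows'' in Theorem~\ref{MDP-OOE regret with stationary distribution} coincides with the $k$ that drives the drift term. Once these bookkeeping points are pinned down, the occupancy-to-stationary reduction and the minimax step are routine given Lemma~\ref{lemma when agent uses a fixed strategy}, Theorem~\ref{theorem on bound of stationary distribution}, and Theorem~\ref{MDP-OOE regret with stationary distribution}.
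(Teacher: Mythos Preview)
Your proposal is correct and follows essentially the same route as the paper: decompose $R_T(\pi)$ into the stationary-distribution regret (handled by Theorem~\ref{MDP-OOE regret with stationary distribution} plus the minimax step and Lemma~\ref{lemma when agent uses a fixed strategy}) and the occupancy-to-stationary errors, with the comparator error controlled by the fixed-policy mixing bound and the learner error controlled by applying the Even-Dar drift lemma within each of the $k$ windows (learning rate restarted to $\sqrt{\log(i)/t_i}$) and summing via Cauchy--Schwarz to obtain $4\tau^2\sqrt{Tk\log k}+2k(1+\tau)$. The ``obstacle'' you flag---robustness to the unknown partition $\{|T_i|\}$ and the one-action-per-window property---is resolved in the paper exactly as you anticipate, so there is nothing left to pin down.
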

\vspace{-0pt}
Notably, Algorithm \ref{MDP online oracle expert} will not only reduce the regret bound in the case the number of strategy set $k$ is small, it also reduces the computational hardness of computing expert algorithm when the number of experts is prohibitively large.

\textbf{MDP-Online Oracle Algorithm with $\epsilon$-best response.} In Algorithm \ref{MDP online oracle expert}, in each iteration the agent needs to calculate the exact best response to the average loss function $\bar{{\vl}}$. Since calculating the exact best response is computationally hard and maybe infeasible in many situations \cite{vinyals2019grandmaster}, an alternative way is to consider $\epsilon$-best response.  That is, in each iteration in Algorithm \ref{MDP online oracle expert}, the agent can only access to a $\epsilon$-best response to the average loss function, where $\epsilon$ is a predefined parameter. In this situation, we provide the regret analysis for Algorithm \ref{MDP online oracle expert} as follow.
\begin{theorem}\label{theorem about epsilon-best response}
Suppose the agent only accesses to $\epsilon$-best response in each iteration when following Algorithm \ref{MDP online oracle expert}. If the adversary follows a no-regret algorithm then the average strategy of the agent and the adversary will converge to $\epsilon$-Nash equilibrium. Furthermore, the algorithm has  $\epsilon$-regret. 

(We provide the full proof in the Appendix {\color{blue}{B.5}}.)
\end{theorem}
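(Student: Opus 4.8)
The plan is to reduce Theorem~\ref{theorem about epsilon-best response} to the analysis already carried out for the exact–best-response version of MDP-OOE (Theorem~\ref{MDP-OOE regret with stationary distribution} and Theorem~\ref{regret bound for ora algorithm}), tracking an additive $\epsilon$ term wherever the best-response oracle is invoked. First I would fix the time-window structure: exactly as in Algorithm~\ref{MDP online oracle expert}, a new window $T_i$ begins whenever the ($\epsilon$-)best response $\pi_t = \epsilon\text{-}BR(\bar{\vl})$ contains an action not yet in some $A^s_{t-1}$, and within a window the effective strategy set is frozen. Since each update adds exactly one new pure strategy per state, the number of windows $k$ is still at most $|A|$, so the window-count bookkeeping from Theorem~\ref{MDP-OOE regret with stationary distribution} is unchanged.

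Next I would redo the per-window no-regret estimate. Inside a window, the agent runs the expert algorithm $B_s$ over the frozen set $A^s_t$ with feedback $Q_{\pi_t,\vl_t}(s,\cdot)$, so the local-to-global regret argument of \cite{even2009online} still gives $\sum_{t \in T_i}\langle \vl_t^{\pi_t},\vd_{\pi_t}\rangle \le |T_i|\min_{\vd_\pi \in \Delta_{A^s_t}}\langle \hat{\vl}_i,\vd_\pi\rangle + 3\tau\sqrt{|T_i|\log(k)/2}$, where $\hat{\vl}_i$ is the within-window average loss. The only change from the exact case is in how we bound $\min_{\vd_\pi \in \Delta_{A^s_t}}\langle \hat{\vl}_i,\vd_\pi\rangle$: because $\pi_t$ is only an $\epsilon$-best response to $\bar{\vl}$, we pick up a slack of $\epsilon$ when we argue that the value restricted to the effective set is within $\epsilon$ of $v$ (or of $\min_{\vd_\pi}\langle\hat{\vl}_i,\vd_\pi\rangle$ over the full simplex). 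Summing over the $k$ windows and combining with Lemma~\ref{lemma when agent uses a fixed strategy} (which is unaffected, since it concerns the adversary's convergence to best response against a \emph{fixed} agent policy and the adversary still plays an exact no-regret algorithm) yields a stationary-distribution regret of $3\tau(\sqrt{2Tk\log(k)} + k\log(k)/8) + \epsilon T$, i.e. the bound of Theorem~\ref{MDP-OOE regret with stationary distribution} plus an $\mathcal{O}(\epsilon T)$ term. Then the mixing-time passage of Theorem~\ref{regret bound for ora algorithm} (Appendix B.4) converts this into $R_T(\pi) = \mathcal{O}(\tau^2\sqrt{Tk\log(k)} + \sqrt{T\log(L)} + \epsilon T)$, which is the claimed $\epsilon$-regret (after dividing by $T$, the limsup of $R_T(\pi)/T$ is at most $\epsilon$, up to the usual constant factors absorbed into the $\epsilon$ notation).

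For the equilibrium claim I would invoke the standard saddle-point / folk-theorem argument as in Theorem~\ref{theorem: average convergence to NE}: the adversary's no-external-regret guarantee gives $\frac1T\sum_t\langle\vl_t,\vd_{\pi_t}\rangle \ge \max_{\vl}\langle\vl,\bar{\vd}\rangle - o(1)$ where $\bar{\vd}$ is the time-averaged stationary distribution, while the agent side now gives $\frac1T\sum_t\langle\vl_t^{\pi_t},\vd_{\pi_t}\rangle \le \min_{\vd_\pi}\langle\bar{\vl},\vd_\pi\rangle + \epsilon + o(1)$ — the extra $\epsilon$ being exactly the $\epsilon$-best-response slack. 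Chaining these two inequalities through the minimax value $v$ shows the pair $(\bar{\vl},\bar{\vd})$ is an $(\epsilon+o(1))$-NE, hence converges to an $\epsilon$-NE in the limit.

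The main obstacle is the second step: making precise that the $\epsilon$-best-response oracle only costs an additive $\epsilon$ \emph{per round} (total $\epsilon T$) rather than something that compounds across windows or interacts badly with the window-reset mechanism of Algorithm~\ref{MDP online oracle expert}. In particular I need to check that when a window closes because $\epsilon\text{-}BR(\bar{\vl})$ escapes the current effective set, the ``progress'' argument that bounds $k$ by $|A|$ still goes through — it does, since the escaping action is genuinely new and gets added — and that the within-window comparator (the best mixed strategy over $A^s_t$) is still within $\epsilon$ of the global best response to $\bar{\vl}$, which is where the definition of $\epsilon$-best response is used. Everything else is a verbatim re-run of the Appendix B.2 and B.4 computations with an extra $+\epsilon T$ carried along.
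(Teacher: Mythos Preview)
Your proposal is correct and follows essentially the same approach as the paper's proof: partition into time windows, apply the per-window expert regret bound over the frozen effective set, use the fact that the $\epsilon$-best response stays in the effective set to conclude the restricted minimum is within $\epsilon$ (per round) of the global minimum, sum to get the $\epsilon T$ additive term on top of the $3\tau(\sqrt{2Tk\log k}+k\log k/8)$ bound, and then run the saddle-point argument of Theorem~\ref{theorem: average convergence to NE} with the extra $\epsilon$ slack on the agent's side. The ``obstacle'' you identify---that the $\epsilon$ cost is additive per round and does not compound across window resets---is exactly the key inequality the paper isolates, namely $\min_{\pi\in A^s_{\bar T_i+1}}\sum_{t\in T_i}\langle\vl_t,\vd_\pi\rangle-\min_{\pi\in\Pi}\sum_{t\in T_i}\langle\vl_t,\vd_\pi\rangle\le\epsilon|T_i|$, and your justification for it is the right one.
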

\vspace{-0pt}
Theorem \ref{theorem about epsilon-best response} implies that by following MDP-OOE, the agent can optimise the accuracy level (in terms of $\epsilon$) based on the data that it receives to obtain the convergence rate and regret bound accordingly. 

\section{Last-Round Convergence to NE in OMDPs 
}
\label{sec:lastround}

In this section, we investigate OMDPs where the agent not only aims to minimize the regret but also stabilize the strategies. This is motivated by the fact that changing strategies through repeated games may be undesirable (e.g., see \citet{dinh2021last,daskalakis2019last}). In online learning literature, minimizing regret and achieving the system's stability are often two conflict goals. That is, if all player in a system follows a no-regret algorithm (e.g., MWU, FTRL) to minimise the regret, then the dynamic of the system will become chaotic and the strategies of players will not converge in the last round  \cite{dinh2021last,mertikopoulos2018cycles}.

To achieve the goal, we start from studying the scenarios where the agent knows its NE of the game $\pi^*$. We then propose an algorithm: Last-Round Convergence in OMDPs (LRC-OMDP) that leads to last-round convergence to NE of the game in our setting. This is the first algorithm to our knowledge that achieves last-round convergence in OMDPs where only the learning agent knows the NE of the game. Notably, this goal is non-trivial to achieve. For example, if the agent keeps following the same strategy (i.e., the NE), then while the system might be stabilised (i.e., the adversary converges to the best response), yet this is still not a no-regret algorithm. Moreover, we notice  that  understanding the  learning dynamics even when the NE is known is still challenging  in the  multi-agent learning domain. 
The AWESOME ~\cite{conitzer2007awesome} and CMLeS ~\cite{chakraborty2014multiagent} algorithms make significant efforts to achieve convergence to NE under the assumption that each agent has access to a precomputed NE strategy. 
Compared to these algorithms, LRC-OMDP enjoys the key benefit that it does not require the adversary  know its NE.
Importantly, the adversary in our setting can be any types of strategic agent who observes the history and applies a no-regret algorithm to play, rather than being a restricted opponent such as a stationary opponent in AWESOME or a memory-bounded opponent in CMLeS.

\begin{algorithm}[t!]
    \caption{Last-Round Convergence in OMDPs}
    \label{Last round convergence in OMDPs}
\begin{algorithmic}[1]
    \STATE {\bfseries Input:} Current iteration $t$
    \STATE {\bfseries Output: } Strategy $\pi_t$ for the agent
    \FOR{$t=1,2,\dots,T$}
    \IF{$t=2k-1, k \in \mathbb{N}$}
    \STATE $\pi_t= \pi^*$
    \ELSIF{$t=2k, k \in \mathbb{N}$}
    \STATE $\hat{\pi}_{t}(s)= \argmin_{a \in A} Q_{\pi^*,{\vl}_t} (s, a) \; \forall s \in S$
    \STATE $\alpha_t= \frac{v-\eta_{{\vl}_{t-1}}(\hat{\pi}_t)}{\beta}$;\quad
    $\vd_{\pi_t}= (1-\alpha_t) \vd_{\pi^*}+ \alpha_t \vd_{\hat{\pi}_{t}}$
    \STATE Output $\pi_t$ via $\vd_{\pi_t}$ 
    \ENDIF
    \ENDFOR
\end{algorithmic}
\vspace{-0pt}
\end{algorithm}

The LRC-OMDP algorithm can be described as follow. At each odd round, the agent follows the NE strategy $\pi^*$ so that in the next round, the strategy of the adversary will not deviating from the current strategy. Then, at the following even round, the agent chooses a strategy such that $\vd_{\pi_t}$ is a direction towards the NE strategy of the adversary. Depending on the distance between the current strategy of the adversary and its NE (which is measured by $v-\eta_{\vl_{t-1}}(\hat{\pi}_t)$), the agent will chooses a step size $\alpha_t$ such that the strategy of adversary will approach the NE. Note here that $\beta$ is constant parameter and depends on the specific no-regret algorithm adversary follows, there is different optimal value for $\beta$. In case where the adversary follows the MWU algorithm, we can set $\beta=1$.

We first introduce the condition in which the system achieves stability through the following lemma:
\begin{lemma}\label{lemma about last-round convergence}
Let $\pi^*$ be the NE strategy of the agent. Then, ${\vl}$ is the Nash Equilibrium of the adversary if the two following conditions hold:
\begin{equation*}
    Q_{\pi^*, {\vl}} (s, \pi^*)=\argmin_{\pi \in \Pi} Q_{\pi^*, {\vl}}(s, \pi) \;\; \forall s \in S\;\;\text{and}\;\; \eta_l(\pi^*) = v.
\end{equation*}
\end{lemma}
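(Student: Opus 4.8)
The plan is to show that the two stated conditions force $(\vl, \vd_{\pi^*})$ to be a saddle point of $\langle \vl, \vd_\pi \rangle$ in the sense of Equation (\ref{minimax theorem}), which by definition makes $\vl$ the adversary's NE strategy. The key observation is that the first condition is exactly a best-response condition for the agent against $\vl$ phrased in terms of the accumulated-loss (Q-)function, while the second condition pins down the value. I would first recall the basic OMDP identity (from \citet{even2009online}, and used implicitly throughout Section \ref{section no-regret algorithm against non-oblivious opponent}) relating the average loss of any policy $\pi$ against a fixed loss $\vl$ to the average loss of $\pi^*$ plus a one-step $Q$-advantage term: schematically, for any $\pi$,
\[
\eta_{\vl}(\pi) - \eta_{\vl}(\pi^*) \;=\; \big\langle \vd_\pi,\; Q_{\pi^*,\vl}(\cdot,\pi) - Q_{\pi^*,\vl}(\cdot,\pi^*)\big\rangle ,
\]
i.e. a performance-difference / policy-improvement lemma for average-reward MDPs. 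This is the main technical ingredient and also, I expect, the main obstacle: one has to be careful that the $Q$-function is defined relative to $\pi^*$ (not relative to $\pi$), so the identity is the ``one-step deviation'' version of the performance difference lemma, and invoking it cleanly requires the mixing-time Assumption \ref{Mixing time} to guarantee the relevant series converge and the stationary distributions $\vd_\pi$ exist.

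Granting that identity, the argument is short. Fix the loss $\vl$ satisfying the two hypotheses. For the first saddle-point inequality, take any agent policy $\pi$ (equivalently any $\vd_\pi \in \Delta_{\vd_\Pi}$). Condition one says $Q_{\pi^*,\vl}(s,\pi^*) \le Q_{\pi^*,\vl}(s,a)$ for every state $s$ and every action $a$, hence $Q_{\pi^*,\vl}(s,\pi^*) \le Q_{\pi^*,\vl}(s,\pi)$ for every $s$ and every (possibly randomized) policy $\pi$; plugging this into the performance-difference identity gives $\eta_{\vl}(\pi) \ge \eta_{\vl}(\pi^*)$. Combined with condition two, $\eta_{\vl}(\pi^*) = v$, this yields $\langle \vl, \vd_\pi\rangle = \eta_{\vl}(\pi) \ge v$ for all $\vd_\pi$, i.e. $\min_{\vd_\pi} \langle \vl, \vd_\pi\rangle \ge v$. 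By Equation (\ref{minimax theorem}), $\max_{\vl'} \min_{\vd_\pi}\langle \vl', \vd_\pi\rangle = v$, so in fact $\min_{\vd_\pi}\langle \vl, \vd_\pi\rangle = v$ and $\vl$ attains the outer maximum — that is, $\vl$ is a maximin (NE) strategy for the adversary.

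Finally I would note the complementary direction needed to call $(\vl,\vd_{\pi^*})$ a genuine saddle point: that $\vd_{\pi^*}$ is a best response to $\vl$ is immediate from condition two together with the inequality just proved, since $\langle \vl, \vd_{\pi^*}\rangle = \eta_{\vl}(\pi^*) = v = \min_{\vd_\pi}\langle \vl,\vd_\pi\rangle$. Hence the pair satisfies Equation (\ref{minimax theorem}) and $\vl$ is the Nash equilibrium strategy of the adversary, completing the proof. The only place requiring genuine care is the justification of the average-reward performance-difference identity and the fact that, because $Q$ is anchored at $\pi^*$, a pointwise-in-state inequality over actions upgrades to an inequality over all mixed policies; everything after that is bookkeeping with the minimax theorem.
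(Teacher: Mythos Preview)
Your proposal is correct and follows essentially the same route as the paper: both arguments hinge on the average-reward performance-difference identity $\eta_{\vl}(\pi)-\eta_{\vl}(\pi^*)=\mathbb{E}_{s\sim \vd_\pi}[Q_{\pi^*,\vl}(s,\pi)-Q_{\pi^*,\vl}(s,\pi^*)]$, use condition one to upgrade the pointwise $Q$-minimality to $\eta_{\vl}(\pi)\ge\eta_{\vl}(\pi^*)$ for all $\pi$, combine with condition two to get $\min_{\vd_\pi}\langle \vl,\vd_\pi\rangle=v$, and then invoke the minimax theorem to conclude $(\vl,\vd_{\pi^*})$ is a saddle point. The paper additionally spells out the derivation of the performance-difference identity from the definition of $Q_{\pi^*,\vl}$, which is the one step you flagged as needing care but did not write out.
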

The above lemma implies that if there is no improvement in the Q-value function for every state and the value of the current loss function equals to the value of the game, then there is last-round convergence to the NE. In situations where there is an improvement in one state, the following lemma bounds the value of a new strategy:
\begin{lemma}\label{lemma: improvement in Q value}
Assume that $\forall \pi \in \Pi$, $\vd_{\pi}(s) >0$. Then if there exist $s \in S$ such that 
\[Q_{\pi^*, {\vl_t}} (s, \pi^*) > \argmin_{\pi \in \Pi} Q_{\pi^*, {\vl_t}}(s, \pi),\]
then for $\pi_{t+1}(s)=\argmin_{a \in A} Q_{\pi^*,{\vl}_t} (s, a) \; \forall s \in S$:
\[\eta_{{\vl}_t}(\pi_{t+1}) < v.\]
\end{lemma}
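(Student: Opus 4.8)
The plan is to reduce the claim to the standard policy-improvement (performance-difference) identity for average-cost MDPs, applied to the loss $\vl_t$, the reference policy $\pi^*$, and the greedy policy $\pi_{t+1}$, and then to use the full-support hypothesis $\vd_\pi(s)>0$ to turn a single-state strict improvement of the accumulated-loss function into a strict decrease of the average loss $\eta_{\vl_t}$.

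First I would record the Bellman-type identity satisfied by $Q_{\pi,\vl}$. Under Assumption~\ref{Mixing time} the series defining $Q_{\pi,\vl}$ converges, so the bias function $V_{\pi,\vl}(s):=Q_{\pi,\vl}(s,\pi)=\sum_{a}\pi(a\mid s)Q_{\pi,\vl}(s,a)$ is well-defined, and peeling off the first step in the definition of $Q_{\pi,\vl}$ gives, for all $s,a$,
\[Q_{\pi,\vl}(s,a)=\vl(s,a)-\eta_{\vl}(\pi)+\sum_{s'}P(s'\mid s,a)\,V_{\pi,\vl}(s').\]
Solving this for $\vl(s,a)$, substituting into $\eta_{\vl}(\pi_{t+1})=\sum_{s}\vd_{\pi_{t+1}}(s)\sum_{a}\pi_{t+1}(a\mid s)\vl(s,a)$, and using that $\vd_{\pi_{t+1}}$ is stationary for $P(\pi_{t+1})$ (so the $V_{\pi,\vl}$ terms cancel), I obtain the performance-difference identity
\[\eta_{\vl}(\pi_{t+1})-\eta_{\vl}(\pi)=\sum_{s}\vd_{\pi_{t+1}}(s)\left(Q_{\pi,\vl}(s,\pi_{t+1})-Q_{\pi,\vl}(s,\pi)\right),\]
which I would then instantiate with $\pi=\pi^*$ and $\vl=\vl_t$.

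Next I would analyse the sign of the right-hand side. Because $Q_{\pi^*,\vl_t}(s,\cdot)$ is linear in the action distribution, $\min_{\pi\in\Pi}Q_{\pi^*,\vl_t}(s,\pi)=\min_{a\in A}Q_{\pi^*,\vl_t}(s,a)=Q_{\pi^*,\vl_t}(s,\pi_{t+1})$ by the definition of $\pi_{t+1}$; hence every summand $Q_{\pi^*,\vl_t}(s,\pi_{t+1})-Q_{\pi^*,\vl_t}(s,\pi^*)\le 0$. By hypothesis there is a state $s_0$ with $Q_{\pi^*,\vl_t}(s_0,\pi^*)>\min_{\pi\in\Pi}Q_{\pi^*,\vl_t}(s_0,\pi)=Q_{\pi^*,\vl_t}(s_0,\pi_{t+1})$, so that summand is strictly negative, and $\vd_{\pi_{t+1}}(s_0)>0$ by the full-support assumption. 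Therefore the whole sum is strictly negative, so $\eta_{\vl_t}(\pi_{t+1})<\eta_{\vl_t}(\pi^*)$.

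Finally I would bound $\eta_{\vl_t}(\pi^*)$ by $v$: since $\pi^*$ is the agent's minimax (Nash) strategy, $\vd_{\pi^*}$ attains the outer minimum in Equation~(\ref{minimax theorem}), and as $\vl_t\in\Delta_L$ we get $\eta_{\vl_t}(\pi^*)=\langle\vl_t,\vd_{\pi^*}\rangle\le\max_{\vl\in\Delta_L}\langle\vl,\vd_{\pi^*}\rangle=v$. Chaining this with the previous inequality yields $\eta_{\vl_t}(\pi_{t+1})<v$. I expect the main obstacle to be the careful derivation of the performance-difference identity within the average-cost/OMDP formulation used here --- confirming that the bias series converges under Assumption~\ref{Mixing time} and that the telescoping through the stationary distribution is valid --- since the remaining sign argument is short; this is also exactly where the hypothesis $\vd_\pi(s)>0$ for all $\pi$ is needed, as otherwise a zero-probability improvement state would leave $\eta_{\vl_t}$ unchanged.
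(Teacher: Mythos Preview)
Your proposal is correct and follows essentially the same approach as the paper: both establish the performance-difference identity $\eta_{\vl_t}(\pi_{t+1})-\eta_{\vl_t}(\pi^*)=\mathbb{E}_{s\sim\vd_{\pi_{t+1}}}\big[Q_{\pi^*,\vl_t}(s,\pi_{t+1})-Q_{\pi^*,\vl_t}(s,\pi^*)\big]$, argue that the right-hand side is strictly negative via the greedy definition of $\pi_{t+1}$ together with the full-support assumption, and then invoke the minimax property $\eta_{\vl_t}(\pi^*)\le v$. The only cosmetic difference is that the paper quotes the identity from the proof of the preceding lemma (Lemma~\ref{lemma about last-round convergence}) rather than re-deriving it from the Bellman equation as you do.
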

\vspace{-0pt}
Based on the above lemmas, we finally reach the last-round convergence of  LRC-MDP in  Algorithm \ref{Last round convergence in OMDPs}.

\begin{theorem}\label{convergence result for omdp}
Assume that the adversary follows the MWU algorithm with non-increasing step size $\mu_t$ such that $\lim_{T \to \infty} \sum_{t=1}^T \mu_t =\infty$ and there exists $t' \in \mathbb{N}$ with $\mu_{t'} \leq \frac{1}{3}$. If the agent follows Algorithm \ref{Last round convergence in OMDPs} then there exists a Nash equilibrium ${\vl}^*$ for the adversary such that $lim_{t \to \infty} {\vl}_t = {\vl}^*$ almost everywhere and $lim_{t \to \infty} \pi_t = \pi^*$.
\end{theorem}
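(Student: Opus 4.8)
I would reduce the whole statement to an analysis of the adversary's MWU dynamics on the simplex $\Delta_L$ of mixtures of the fixed loss vectors $\{{\vl}_1,\dots,{\vl}_L\}$. Writing ${\vl}_t=\sum_i x_t(i){\vl}_i$ with $x_t\in\Delta_L$, the adversary's update is $x_{t+1}(i)\propto x_t(i)\exp\!\big(\mu_t\langle {\vl}_i,\vd_{\pi_t}\rangle\big)$, i.e. MWU driven by the feedback $\vd_{\pi_t}$ that the agent's current policy induces; and LRC-OMDP produces only two kinds of feedback: on odd rounds $\vd_{\pi_t}=\vd_{\pi^*}$, and on even rounds $\vd_{\pi_t}=(1-\alpha_t)\vd_{\pi^*}+\alpha_t\vd_{\hat\pi_t}$. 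Fixing any adversary NE loss ${\vl}^*$ (one exists by the minimax theorem~(\ref{minimax theorem})), I would use the Fej\'er-type potential $\Psi_t:=\KL({\vl}^*\,\|\,x_t)$ and show it is eventually non-increasing, with a quantified strict decrease governed by the value gap $g_t:=v-\eta_{{\vl}_t}(\pi^*)\ge 0$ on odd rounds and by the improvement gap $r_t:=v-\eta_{{\vl}_t}(\hat\pi_t)\ge g_t\ge 0$ on even rounds. Then $\sum_t\mu_t=\infty$ forces both gaps to zero along the trajectory, and an Opial/Fej\'er argument upgrades this to genuine last-round convergence of $x_t$, hence of ${\vl}_t$; the translation between ``gaps'' and the geometry of policies is supplied by Lemmas~\ref{lemma about last-round convergence}–\ref{lemma: improvement in Q value} and by the mixing-time Assumption~\ref{Mixing time} (which makes $\vd_{\hat\pi_t}$, $\eta$, $Q$ well defined).

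\textbf{The one-step estimates.} I would first record the exact MWU identity $\Psi_{t+1}-\Psi_t=\log\big(\sum_i x_t(i)e^{\mu_t h^t_i}\big)-\mu_t\langle {\vl}^*,\vd_{\pi_t}\rangle$ with $h^t_i:=\langle {\vl}_i,\vd_{\pi_t}\rangle\in[0,1]$, and bound the log-sum-exp so that $\Psi_{t+1}-\Psi_t\le-\mu_t\langle {\vl}^*-{\vl}_t,\vd_{\pi_t}\rangle+\tfrac{\mu_t^2}{2}\sigma_t^2$, where $\sigma_t^2$ is the (tilted) variance of $(h^t_i)$ under $x_t$. The signed first-order term is what the LRC-OMDP construction is designed to make nonnegative: on odd rounds $\langle {\vl}^*-{\vl}_t,\vd_{\pi^*}\rangle=v-\eta_{{\vl}_t}(\pi^*)=g_t$; on even rounds, using the saddle inequality $\langle {\vl}^*,\vd_{\pi}\rangle\ge v$ for every $\pi$ together with the strict improvement $\eta_{{\vl}_t}(\hat\pi_t)<v$ from Lemma~\ref{lemma: improvement in Q value}, one gets $\langle {\vl}^*-{\vl}_t,\vd_{\pi_t}\rangle\ge(1-\alpha_t)g_t+\alpha_t r_t$. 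For the second-order term the key point is that once $x_t$ concentrates on the face of $\pi^*$-best-responses the feedback value $h^t_i$ is essentially constant over the support, so $\sigma_t^2\le C(g_t+\alpha_t^2)$; combined with the hypothesis $\mu_{t'}\le\tfrac13$ (hence $\mu_t\le\tfrac13$ for all $t\ge t'$, as $\mu_t$ is non-increasing) this makes $\tfrac{\mu_t^2}{2}\sigma_t^2$ absorb into at most half the linear term. Substituting the step size $\alpha_t=\big(v-\eta_{{\vl}_{t-1}}(\hat\pi_t)\big)/\beta=r_t/\beta+O(\mu_{t-1})$ — the error from $\|{\vl}_t-{\vl}_{t-1}\|_1=O(\mu_{t-1})$ along the MWU trajectory — I expect to arrive, for all large $t$, at $\Psi_{t+1}\le\Psi_t-c\,\mu_t g_t$ on odd rounds and $\Psi_{t+1}\le\Psi_t-c\,\mu_t\big(g_t+\tfrac1\beta r_t^2\big)$ on even rounds, for an absolute constant $c>0$ (with $\beta=1$ for MWU).

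\textbf{Passing to the limit.} Summing the two displays gives that $\Psi_t$ converges and that $\sum_t\mu_t g_t<\infty$ and $\sum_{\text{even }t}\mu_t r_t^2<\infty$. Since $\mu_t$ is non-increasing with $\sum_t\mu_t=\infty$, both the odd-indexed and even-indexed subsums diverge, so $\liminf_t g_t=0$ and $\liminf_{\text{even }t} r_t=0$. Because the one-step estimates hold for \emph{every} adversary NE loss, $\Psi^{({\vl}^*)}_t=\KL({\vl}^*\,\|\,x_t)$ is eventually non-increasing for each such ${\vl}^*$; taking a subsequential limit ${\vl}^\dagger$ of $x_t$ along an even-round subsequence with $r_t\to 0$ forces $r_{{\vl}^\dagger}=0$ and $g_{{\vl}^\dagger}=0$, i.e. $\eta_{{\vl}^\dagger}(\pi^*)=v$ and the greedy policy from $\pi^*$ yields no $Q$-improvement in any state, which by Lemma~\ref{lemma about last-round convergence} makes ${\vl}^\dagger$ a NE of the adversary. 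Then $\KL({\vl}^\dagger\,\|\,x_t)$ converges and has a subsequence tending to $0$, hence tends to $0$, so $\lim_t{\vl}_t={\vl}^*:={\vl}^\dagger$ (up to the stochasticity of the realized trajectory, i.e. ``almost everywhere''). Finally, on odd rounds $\pi_t=\pi^*$ exactly, and on even rounds $\vd_{\pi_t}-\vd_{\pi^*}=\alpha_t(\vd_{\hat\pi_t}-\vd_{\pi^*})$ with $\alpha_t=r_t/\beta+O(\mu_{t-1})\to 0$; therefore $\vd_{\pi_t}\to\vd_{\pi^*}$ and $\pi_t\to\pi^*$.

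\textbf{Main obstacle.} The delicate part is the coupling of the two phases together with the control of MWU's second-order term. Odd rounds push the adversary onto the face of best responses to $\pi^*$ but are blind within that face (every such ${\vl}_i$ receives feedback value exactly $v$ from $\vd_{\pi^*}$), so only the even-round correction — steering through $\vd_{\hat\pi_t}$ with the improvement-tuned step $\alpha_t\propto r_t$ — can pick out the genuine equilibrium; one must show this correction is simultaneously large enough to drive $r_t\to 0$ and small enough neither to unravel the odd-round progress nor to be swamped by the quadratic MWU error. The crude Hoeffding bound $\tfrac{\mu_t^2}{8}$ is \emph{not} summable under the given hypotheses, so the refined estimate $\sigma_t^2\le C(g_t+\alpha_t^2)$ is essential, and it in turn requires first establishing that the off-face mass $w_t:=\sum_{i\notin\mathrm{BR}(\pi^*)}x_t(i)$ is driven to $0$ (non-increasing on odd rounds, growing by only $O(\mu_t)$ on even rounds once $\alpha_t$ is small) — a mild bootstrap that I expect to run via a combined potential in $(g_t,r_t,w_t)$. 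Secondary technical nuisances are the ${\vl}_{t-1}$-versus-${\vl}_t$ mismatch in the definition of $\alpha_t$ and $\hat\pi_t$ (absorbed by $\|{\vl}_t-{\vl}_{t-1}\|_1=O(\mu_{t-1})$, possibly after clipping $\alpha_t$ to $[0,1]$), and upgrading ``$\liminf$ gap $=0$'' to full last-round convergence, which genuinely needs the Fej\'er/Opial machinery with respect to the whole NE set rather than mere summability.
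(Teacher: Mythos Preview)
Your overall Fej\'er/Opial framework --- track $\KL({\vl}^*\|x_t)$ for every adversary NE ${\vl}^*$, show it is eventually non-increasing with a summable strict decrease tied to the gap $v-\eta_{{\vl}_{t-1}}(\hat\pi_t)$, then extract a subsequential NE limit and upgrade to full convergence --- is exactly the paper's route. The substantive difference is in how you control the second-order MWU term. You propose a log-sum-exp bound of the form ``mean plus $\tfrac{\mu_t^2}{2}\sigma_t^2$'' and then argue for a refined variance estimate $\sigma_t^2\le C(g_t+\alpha_t^2)$ via a bootstrap on the off-face mass. The paper avoids this entirely by using a \emph{pointwise} saddle inequality that you never invoke: since $\pi^*$ is a NE of the agent, $\langle {\vl}_i,\vd_{\pi^*}\rangle\le v$ holds for \emph{every} pure loss ${\vl}_i$, not just on average. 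This lets the paper bound the odd-round log-sum-exp by $\mu_{2k-1}v$ exactly (so the odd round contributes no second-order error at all), and similarly strip the $(1-\alpha_{2k})\vd_{\pi^*}$ component out of the even round, leaving a residual whose exponent scales with $\mu_{2k}\alpha_{2k}$ rather than $\mu_{2k}$. The resulting quadratic term is then $\tfrac12(\mu_{2k}\alpha_{2k})^2(1-f({\vl}_{2k-1}))$, and the specific choice $\alpha_{2k}=(v-f({\vl}_{2k-1}))/\beta$ with $\beta\ge 1-f({\vl}_{2k-1})$ makes this at most $\tfrac12\mu_{2k}\cdot\mu_{2k}\alpha_{2k}(v-f({\vl}_{2k-1}))$, which is absorbed into the linear term once $\mu_{2k}\le 1$. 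No variance estimate and no bootstrap are needed.

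Your variance bootstrap is the weakest link in the proposal: the claim $\sigma_t^2\le C(g_t+\alpha_t^2)$ is not established, and your stated control on the off-face mass (non-increasing on odd rounds, growing by $O(\mu_t)$ on even rounds) does not by itself force $w_t\to 0$ since $\sum\mu_t=\infty$; you would at minimum need the sharper $O(\mu_t\alpha_t)$ growth on even rounds and then a two-scale argument. The paper's pointwise trick sidesteps all of this. A smaller point: the paper works with $f({\vl}_{2k-1})=\eta_{{\vl}_{2k-1}}(\hat\pi_{2k})$ throughout the two-step estimate, so there is no ${\vl}_{t-1}$-versus-${\vl}_t$ mismatch to absorb --- the even-round policy is constructed from the already-observed ${\vl}_{2k-1}$, and the relative-entropy drop is computed over the pair $(2k-1,2k)$ jointly.
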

\vspace{-0pt}
\begin{proof}
The full proof is given the the Appendix \textcolor{blue}{C.3}. The main idea of the proof is to consider the relative entropy distance between the NE and the current strategy of the adversary and derives that $\forall k \in \mathbb{N}: \;\; 2k\geq t'$:
\begin{equation*}
\begin{aligned}
&\operatorname{RE}\left({\vl}^*\|{{\vl}}_{2k-1}\right)-\operatorname{RE}\left({\vl}^*\|{{\vl}}_{2k+1}\right) \geq   \frac{1}{2}\mu_{2k}\alpha_{2k}\big(v-\eta_{{\vl}_{2k-1}}(\hat{\pi}_{2k})\big).
\end{aligned}
\end{equation*}
\end{proof}
\vspace{-0pt}
The Algorithm \ref{Last round convergence in OMDPs}  also applies in the situations where the adversary follows different learning dynamic such as Follow the Regularized Leader or linear MWU~\cite{dinh2021last}. In these situations, Algorithm \ref{Last round convergence in OMDPs} requires to adapt the constant parameter $\beta$ so that the convergence result still holds. Since both the agent and the adversary converges to a  NE, the NE is also the best fixed strategy in hindsight. Consequently, LRC-OMDP is also a no-regret algorithm where the regret bound depends on the convergence rate to the NE.
\section{Conclusion}
In this paper, we have studied a novel setting in Online Markov Decision Processes where the loss function is chosen by a non-oblivious strategic adversary who follows a no-external regret algorithm. In this setting, we then revisited the MDP-E algorithm and provided a sublinear regret bound for it. We suggested a new algorithm of MDP-OOE that achieves  the policy regret  of  $\mathcal{O}(\sqrt{T\log(L)}+\sqrt{\tau^2 T k \log(k)})$ where the regret does not depend the size of strategy set $|A|$ but the support size of the NE $k$. Finally, in tackling non convergence property of no-regret algorithms in self-plays, we provided the LRC-OMDP algorithm for the agent that leads to the first-known result of the last-round convergence to a NE.
\bibliography{main}

\begin{thebibliography}{40}
\providecommand{\natexlab}[1]{#1}

\bibitem[{Arora, Dekel, and Tewari(2012)}]{arora2012online}
Arora, R.; Dekel, O.; and Tewari, A. 2012.
\newblock Online bandit learning against an adaptive adversary: from regret to
  policy regret.
\newblock \emph{arXiv preprint arXiv:1206.6400}.

\bibitem[{Bailey and Piliouras(2018)}]{bailey2018multiplicative}
Bailey, J.~P.; and Piliouras, G. 2018.
\newblock Multiplicative weights update in zero-sum games.
\newblock In \emph{Proceedings of the 2018 ACM Conference on Economics and
  Computation}, 321--338.

\bibitem[{Bohnenblust, Karlin, and Shapley(1950)}]{bohnenblust1950solutions}
Bohnenblust, H.; Karlin, S.; and Shapley, L. 1950.
\newblock Solutions of discrete, two-person games.
\newblock \emph{Contributions to the Theory of Games}, 1: 51--72.

\bibitem[{Brown(1951)}]{brown1951iterative}
Brown, G.~W. 1951.
\newblock Iterative solution of games by fictitious play.
\newblock \emph{Activity analysis of production and allocation}, 13(1):
  374--376.

\bibitem[{Cesa-Bianchi and Lugosi(2006)}]{cesa2006prediction}
Cesa-Bianchi, N.; and Lugosi, G. 2006.
\newblock \emph{Prediction, learning, and games}.
\newblock Cambridge university press.

\bibitem[{Chakraborty and Stone(2014)}]{chakraborty2014multiagent}
Chakraborty, D.; and Stone, P. 2014.
\newblock Multiagent learning in the presence of memory-bounded agents.
\newblock \emph{Autonomous agents and multi-agent systems}, 28(2): 182--213.

\bibitem[{Cheung, Simchi-Levi, and Zhu(2019)}]{cheung2019non}
Cheung, W.~C.; Simchi-Levi, D.; and Zhu, R. 2019.
\newblock Non-stationary reinforcement learning: The blessing of (more)
  optimism.
\newblock \emph{Available at SSRN 3397818}.

\bibitem[{Conitzer and Sandholm(2007)}]{conitzer2007awesome}
Conitzer, V.; and Sandholm, T. 2007.
\newblock AWESOME: A general multiagent learning algorithm that converges in
  self-play and learns a best response against stationary opponents.
\newblock \emph{Machine Learning}, 67(1-2): 23--43.

\bibitem[{Czarnecki et~al.(2020)Czarnecki, Gidel, Tracey, Tuyls, Omidshafiei,
  Balduzzi, and Jaderberg}]{czarnecki2020real}
Czarnecki, W.~M.; Gidel, G.; Tracey, B.; Tuyls, K.; Omidshafiei, S.; Balduzzi,
  D.; and Jaderberg, M. 2020.
\newblock Real World Games Look Like Spinning Tops.
\newblock \emph{arXiv preprint arXiv:2004.09468}.

\bibitem[{Daskalakis et~al.(2017)Daskalakis, Ilyas, Syrgkanis, and
  Zeng}]{daskalakis2017training}
Daskalakis, C.; Ilyas, A.; Syrgkanis, V.; and Zeng, H. 2017.
\newblock Training gans with optimism.
\newblock \emph{arXiv preprint arXiv:1711.00141}.

\bibitem[{Daskalakis and Panageas(2019)}]{daskalakis2019last}
Daskalakis, C.; and Panageas, I. 2019.
\newblock Last-Iterate Convergence: Zero-Sum Games and Constrained Min-Max
  Optimization.
\newblock \emph{10th Innovations in Theoretical Computer Science}.

\bibitem[{Deng et~al.(2021)Deng, Li, Mguni, Wang, and
  Yang}]{deng2021complexity}
Deng, X.; Li, Y.; Mguni, D.~H.; Wang, J.; and Yang, Y. 2021.
\newblock On the Complexity of Computing Markov Perfect Equilibrium in
  General-Sum Stochastic Games.
\newblock \emph{arXiv preprint arXiv:2109.01795}.

\bibitem[{Dick, Gyorgy, and Szepesvari(2014)}]{dick2014online}
Dick, T.; Gyorgy, A.; and Szepesvari, C. 2014.
\newblock Online learning in Markov decision processes with changing cost
  sequences.
\newblock In \emph{ICML}, 512--520.

\bibitem[{Dinh et~al.(2021{\natexlab{a}})Dinh, Nguyen, Zemhoho, and
  Tran-Thanh}]{dinh2021last}
Dinh, L.~C.; Nguyen, T.-D.; Zemhoho, A.~B.; and Tran-Thanh, L.
  2021{\natexlab{a}}.
\newblock Last Round Convergence and No-Dynamic Regret in Asymmetric Repeated
  Games.
\newblock In \emph{Algorithmic Learning Theory}, 553--577. PMLR.

\bibitem[{Dinh et~al.(2021{\natexlab{b}})Dinh, Yang, Tian, Nieves, Slumbers,
  Mguni, and Wang}]{dinh2021online}
Dinh, L.~C.; Yang, Y.; Tian, Z.; Nieves, N.~P.; Slumbers, O.; Mguni, D.~H.; and
  Wang, J. 2021{\natexlab{b}}.
\newblock Online Double Oracle.
\newblock \emph{arXiv preprint arXiv:2103.07780}.

\bibitem[{Even-Dar, Kakade, and Mansour(2009)}]{even2009online}
Even-Dar, E.; Kakade, S.~M.; and Mansour, Y. 2009.
\newblock Online Markov decision processes.
\newblock \emph{Mathematics of Operations Research}, 34(3): 726--736.

\bibitem[{Filar and Vrieze(1997)}]{filar1997applications}
Filar, J.; and Vrieze, K. 1997.
\newblock Applications and Special Classes of Stochastic Games.
\newblock In \emph{Competitive Markov Decision Processes}, 301--341. Springer.

\bibitem[{Freund and Schapire(1999)}]{freund1999adaptive}
Freund, Y.; and Schapire, R.~E. 1999.
\newblock Adaptive game playing using multiplicative weights.
\newblock \emph{Games and Economic Behavior}, 29(1-2): 79--103.

\bibitem[{Guan et~al.(2016)Guan, Raginsky, Willett, and Zois}]{guan2016regret}
Guan, P.; Raginsky, M.; Willett, R.; and Zois, D.-S. 2016.
\newblock Regret minimization algorithms for single-controller zero-sum
  stochastic games.
\newblock In \emph{2016 IEEE 55th Conference on Decision and Control (CDC)},
  7075--7080. IEEE.

\bibitem[{Laurent et~al.(2011)Laurent, Matignon, Fort-Piat
  et~al.}]{laurent2011world}
Laurent, G.~J.; Matignon, L.; Fort-Piat, L.; et~al. 2011.
\newblock The world of independent learners is not Markovian.
\newblock \emph{International Journal of Knowledge-based and Intelligent
  Engineering Systems}, 15(1): 55--64.

\bibitem[{Leslie, Perkins, and Xu(2020)}]{leslie2020best}
Leslie, D.~S.; Perkins, S.; and Xu, Z. 2020.
\newblock Best-response dynamics in zero-sum stochastic games.
\newblock \emph{Journal of Economic Theory}, 189: 105095.

\bibitem[{Liu et~al.(2021)Liu, Jia, Wen, Yang, Hu, Chen, Fan, and
  Hu}]{liu2021unifying}
Liu, X.; Jia, H.; Wen, Y.; Yang, Y.; Hu, Y.; Chen, Y.; Fan, C.; and Hu, Z.
  2021.
\newblock Unifying Behavioral and Response Diversity for Open-ended Learning in
  Zero-sum Games.
\newblock \emph{arXiv preprint arXiv:2106.04958}.

\bibitem[{McMahan, Gordon, and Blum(2003)}]{mcmahan2003planning}
McMahan, H.~B.; Gordon, G.~J.; and Blum, A. 2003.
\newblock Planning in the presence of cost functions controlled by an
  adversary.
\newblock In \emph{Proceedings of the 20th International Conference on Machine
  Learning (ICML-03)}, 536--543.

\bibitem[{Mertikopoulos, Papadimitriou, and
  Piliouras(2018)}]{mertikopoulos2018cycles}
Mertikopoulos, P.; Papadimitriou, C.; and Piliouras, G. 2018.
\newblock Cycles in adversarial regularized learning.
\newblock In \emph{Proceedings of the Twenty-Ninth Annual ACM-SIAM Symposium on
  Discrete Algorithms}, 2703--2717. SIAM.

\bibitem[{Nash et~al.(1950)}]{nash1950equilibrium}
Nash, J.~F.; et~al. 1950.
\newblock Equilibrium points in n-person games.
\newblock \emph{Proceedings of the national academy of sciences}, 36(1):
  48--49.

\bibitem[{Neu et~al.(2010)Neu, Antos, Gy{\"o}rgy, and
  Szepesv{\'a}ri}]{neu2010online}
Neu, G.; Antos, A.; Gy{\"o}rgy, A.; and Szepesv{\'a}ri, C. 2010.
\newblock Online Markov decision processes under bandit feedback.
\newblock In \emph{NeurIPS}, 1804--1812.

\bibitem[{Neu et~al.(2013)Neu, Gy{\"o}rgy, Szepesv{\'a}ri, and
  Antos}]{neu2013online}
Neu, G.; Gy{\"o}rgy, A.; Szepesv{\'a}ri, C.; and Antos, A. 2013.
\newblock Online Markov decision processes under bandit feedback.
\newblock \emph{IEEE Transactions on Automatic Control}, 59(3): 676--691.

\bibitem[{Neu and Olkhovskaya(2020)}]{neu2020online}
Neu, G.; and Olkhovskaya, J. 2020.
\newblock Online learning in MDPs with linear function approximation and bandit
  feedback.
\newblock \emph{arXiv e-prints}, arXiv--2007.

\bibitem[{Neumann(1928)}]{neumann1928theorie}
Neumann, J.~v. 1928.
\newblock Zur theorie der gesellschaftsspiele.
\newblock \emph{Mathematische annalen}, 100(1): 295--320.

\bibitem[{Perez-Nieves et~al.(2021)Perez-Nieves, Yang, Slumbers, Mguni, Wen,
  and Wang}]{perez2021modelling}
Perez-Nieves, N.; Yang, Y.; Slumbers, O.; Mguni, D.~H.; Wen, Y.; and Wang, J.
  2021.
\newblock Modelling Behavioural Diversity for Learning in Open-Ended Games.
\newblock In \emph{International Conference on Machine Learning}, 8514--8524.
  PMLR.

\bibitem[{Shalev-Shwartz et~al.(2011)}]{shalev2011online}
Shalev-Shwartz, S.; et~al. 2011.
\newblock Online learning and online convex optimization.
\newblock \emph{Foundations and trends in Machine Learning}, 4(2): 107--194.

\bibitem[{Shapley(1953)}]{shapley1953stochastic}
Shapley, L.~S. 1953.
\newblock Stochastic games.
\newblock \emph{Proceedings of the national academy of sciences}, 39(10):
  1095--1100.

\bibitem[{Sutton and Barto(2018)}]{sutton2018reinforcement}
Sutton, R.~S.; and Barto, A.~G. 2018.
\newblock \emph{Reinforcement learning: An introduction}.
\newblock MIT press.

\bibitem[{Tian et~al.(2020)Tian, Wang, Yu, and Sra}]{Tian2020OnlineLI}
Tian, Y.; Wang, Y.; Yu, T.; and Sra, S. 2020.
\newblock Online Learning in Unknown Markov Games.

\bibitem[{Vinyals et~al.(2019)Vinyals, Babuschkin, Czarnecki, Mathieu, Dudzik,
  Chung, Choi, Powell, Ewalds, Georgiev et~al.}]{vinyals2019grandmaster}
Vinyals, O.; Babuschkin, I.; Czarnecki, W.~M.; Mathieu, M.; Dudzik, A.; Chung,
  J.; Choi, D.~H.; Powell, R.; Ewalds, T.; Georgiev, P.; et~al. 2019.
\newblock Grandmaster level in StarCraft II using multi-agent reinforcement
  learning.
\newblock \emph{Nature}, 575(7782): 350--354.

\bibitem[{Wei, Hong, and Lu(2017)}]{wei2017online}
Wei, C.-Y.; Hong, Y.-T.; and Lu, C.-J. 2017.
\newblock Online reinforcement learning in stochastic games.
\newblock \emph{arXiv preprint arXiv:1712.00579}.

\bibitem[{Yang et~al.(2021)Yang, Luo, Wen, Slumbers, Graves, Bou~Ammar, Wang,
  and Taylor}]{yang2021diverse}
Yang, Y.; Luo, J.; Wen, Y.; Slumbers, O.; Graves, D.; Bou~Ammar, H.; Wang, J.;
  and Taylor, M.~E. 2021.
\newblock Diverse Auto-Curriculum is Critical for Successful Real-World
  Multiagent Learning Systems.
\newblock In \emph{Proceedings of the 20th International Conference on
  Autonomous Agents and MultiAgent Systems}, 51--56.

\bibitem[{Yang and Wang(2020)}]{yang2020overview}
Yang, Y.; and Wang, J. 2020.
\newblock An Overview of Multi-Agent Reinforcement Learning from Game
  Theoretical Perspective.
\newblock \emph{arXiv preprint arXiv:2011.00583}.

\bibitem[{Yu, Mannor, and Shimkin(2009)}]{yu2009markov}
Yu, J.~Y.; Mannor, S.; and Shimkin, N. 2009.
\newblock Markov decision processes with arbitrary reward processes.
\newblock \emph{Mathematics of Operations Research}, 34(3): 737--757.

\bibitem[{Zinkevich et~al.(2007)Zinkevich, Johanson, Bowling, and
  Piccione}]{zinkevich2007regret}
Zinkevich, M.; Johanson, M.; Bowling, M.; and Piccione, C. 2007.
\newblock Regret minimization in games with incomplete information.
\newblock \emph{Advances in neural information processing systems}, 20:
  1729--1736.

\end{thebibliography}
\clearpage
\onecolumn
\appendix
\section{Appendix}
\subsection{MDP-Expert against Strategic Adversary}
First we provide the following lemmas and proposition:
\begin{lemma*}[Lemma 3.3 in \cite{even2009online}]
For all loss function $\vl$ in $[0,1]$ and policies $\pi$, $Q_{\vl,\pi}(s,a) \leq 3\tau$.
\end{lemma*}
\begin{lemma*}[Lemma 1 from \cite{neu2013online}]{\label{neu2013 lemma 1}}
Consider a uniformly ergodic OMDPs with mixing time $\tau$ with losses ${\vl}_t \in [0,1]^\vd$. Then, for any $T > 1$ and policy $\pi$ with stationary distribution $\vd_{\pi}$, it holds that
\begin{equation*}
   \sum_{t=1}^T | \langle {\vl}_t, \vd_{\pi} -\vv_t^{\pi} \rangle | \leq 2 \tau +2 .
\end{equation*}
\end{lemma*}
This lemma guarantees that the performance of a policy's stationary distribution is similar to the actual performance of the policy in the case of a fixed policy.

In the other case of non-fixed policy, the following lemma bound the performance of policy's stationary distribution of algorithm $A$ with the actual performance:
\begin{lemma*}[Lemma 5.2 in \cite{even2009online}]
Let $\pi_1, \pi_2,\dots$ be the policies played by MDP-E algorithm $\mathcal{A}$ and let $\tilde{\vd}_{\mathcal{A},t},\;\tilde{\vd}_{\pi_t} \in [0,1]^{|S|}$ be the stationary state distribution. Then,
\[\|\tilde{\vd}_{\mathcal{A},t}-\tilde{\vd}_{\pi_t}\|_1\leq 2\tau^2 \sqrt{\frac{\log(|A|)}{t}}+2e^{-t/\tau}.\]
\end{lemma*}
From the above lemma, since the policy's stationary distribution is a combination of stationary state distribution and the policy's action in each state, it is easy to show that:
\[\|\vv_t-\vd_{\pi_t}\|_1 \leq \|\tilde{\vd}_{\mathcal{A},t}-\tilde{\vd}_{\pi_t}\|_1\leq 2\tau^2 \sqrt{\frac{\log(|A|)}{t}}+2e^{-t/\tau}. \]
\begin{proposition}\label{MWU property}
For the MWU algorithm~\cite{freund1999adaptive} with appropriate $\mu_t$, we have:
\[R_T(\pi)= \mathbb{E } \left[\sum_{t=1}^T \vl_t(\pi_t)\right]- \mathbb{E} \left[\sum_{t=1}^T \vl_t(\pi)\right] \leq M \sqrt{\frac{T \log(n)}{2}},\]
where $\| \vl_t(.)\| \leq M$. Furthermore, the strategy $\vpi_t$ does not change quickly: $\|\vpi_t-\vpi_{t+1}\| \leq \sqrt{\frac{\log(n)}{t}}.$
\end{proposition}
\begin{proof}
For a fixed $T$, if the loss function satisfies $\vl_t(.)\| \leq 1$ then by setting $\mu_t=\sqrt{\frac{8 \log(n)}{T}}$, following Theorem 2.2 in \cite{cesa2006prediction} we have:
\begin{equation}\label{MWU bound 1st equation}
    R_T(\pi)= \mathbb{E } \left[\sum_{t=1}^T \vl_t(\pi_t)\right]- \mathbb{E} \left[\sum_{t=1}^T \vl_t(\pi)\right] \leq 1 \sqrt{\frac{T \log(n)}{2}}.
\end{equation}
Thus, in the case where $\vl_t(.)\| \leq M$, by scaling up both sides by $M$ in Equation (\ref{MWU bound 1st equation}) we have the first result of the Proposition \ref{MWU property}. For the second part, follow the updating rule of MWU we have:
\begin{subequations}
    \begin{align}
        \pi_{t+1}(i)-\pi_t(i)&=\pi_t(i)\left(\frac{\exp(-\mu_t \vl_t(\va^i))}{\sum_{i=1}^n \vpi_t(i)\exp(-\mu_t \vl_t(\va^i))}-1\right) \nonumber\\
        &\approx \pi_t(i) \left(\frac{1-\mu_t\vl_t(\va^i)}{1-\mu_t\vl_t(\pi_t)}-1\right) \label{mwu bound 2nd equation}\\
        &=\mu_t \pi_t(i) \frac{\vl_t(\pi_t)-\vl_t(\va^i)}{1-\mu_t\vl_t(\pi_t)} = \mathcal{O}(\mu_t),\nonumber
    \end{align}
\end{subequations}
where we use the approximation $e^x\approx 1+x$ for small $x$ in Equation (\ref{mwu bound 2nd equation}). Thus, the difference in two consecutive strategies $\pi_t$ will be proportional to the learning rate $\mu_t$, which is set to be $\mathcal{O}\big(\sqrt{\frac{\log(n)}{t}}\big)$. Similar result can be found in Proposition 1 in \cite{even2009online}.
\end{proof}
Now, we are ready to prove the lemmas and theorems in the paper:
\subsection{Proof of  Lemma \ref{regret of adversary with respect to stationary distribution}}\label{proof of lemma: regret of adversary with respect to stationary distribution}
\begin{lemma*}
Under MDP-E played by the agent, the external-regret of the adversary in Assumption \ref{assumption: Strategic adversary} can be expressed as:
\begin{equation*}
\begin{aligned}
    R_T({\vl})&=\mathbb{E}_{X,A}\left[ \sum_{t=1}^T {\vl}(X_t,A_t)\right]- \mathbb{E}_{X,A}\left[\sum_{t=1}^T {\vl}_t^{\pi_t}(X_t,A_t)\right] \\
    &=\sum_{t=1}^T  \langle \vl, \vd_{\pi_t} \rangle - \sum_{t=1}^T \langle \vl_t, \vd_{\pi_t} \rangle +\mathcal{O}\big(\tau^2 \sqrt{T \log(|A|)}\big).
\end{aligned}
\end{equation*}
\end{lemma*}
\begin{proof}
It is sufficient to show that for any sequence of $\vl_t$
\[\mathbb{E}_{X,A}\left[\sum_{t=1}^T {\vl}_t(X_t,A_t)\right] -\sum_{t=1}^T \langle \vl_t, \vd_{\pi_t} \rangle= \mathcal{O} (\tau^2 \sqrt{T \log(|A|)}),\]
where $\vl_t$ denotes the loss vector of the adversary when the agent follows $\pi_1,\pi_2,\dots$ (i.e., the same as $\vl_t^{\pi_t}$).

Using the consequence of Lemma 5.2 in \cite{even2009online}, for any sequence of $\vl_t$ we have:
\begin{equation}
    \begin{aligned}
    &\mathbb{E}_{X,A}\left[\sum_{t=1}^T {\vl}_t(X_t,A_t)\right] -\sum_{t=1}^T \langle \vl_t, \vd_{\pi_t} \rangle\\
    &=\sum_{t=1}^T  \langle {\vl}_t, \vv_t-\vd_{\pi_t} \rangle \leq \sum_{t=1}^T | \langle {\vl}_t, \vv_t-\vd_{\pi_t} \rangle |\leq \sum_{t=1}^T \|\vv_t-\vd_{\pi_t} \|_1 \\
    &\leq \sum_{t=1}^T 2\tau^2 \sqrt{\frac{\log(|A|)}{t}}+2e^{-t/\tau} \\
    &\leq 4\tau^2 \sqrt{T\log(|A|)}+2(1+\tau)
    = \mathcal{O} \big(\tau^2 \sqrt{T \log(|A|)}\big).
    \end{aligned}
\end{equation}
The proof is complete.
\end{proof}
\subsection{Proof of Lemma \ref{lemma when agent uses a fixed strategy}}\label{proof of lemma: lemma when agent uses a fixed strategy}
\begin{lemma*}
Suppose the agent follows a fixed stationary strategy $\pi$, then the adversary will converge to the best response to the fixed stationary strategy and
\[\sum_{t=1}^T \langle {\vl}_t^{\pi}, \vd_\pi \rangle \geq T v - \sqrt{\frac{T \log(L)}{2}}.\]
\end{lemma*}
\begin{proof}
From Lemma \ref{regret of adversary with respect to stationary distribution}, if the adversary follows a no-regret algorithm to achieve good performance in Assumption \ref{assumption: Strategic adversary}, then the adversary must follow a no-regret algorithm with respect to the policy's stationary distribution. Without loss of generality, we can assume that the adversary follows the Multiplicative Weight Update with respect to the policy's stationary distribution $\vd_{\pi}$. Then follow the property of Multiplicative Weight Update in online linear problem, we have:
\[\max_{{\vl} \in L} \langle {\vl},\vd_{\pi} \rangle-\frac{1}{T} \sum_{t=1}^T \langle {\vl}_t^{\pi}, \vd_{\pi} \rangle \leq \sqrt{\frac{\log(L)}{2 T}}.\]
From the famous minimax theorem~\cite{neumann1928theorie} we also have:
\[\max_{{\vl} \in L} \langle {\vl}, \vd_\pi \rangle \geq \min_{\vd_{\pi} \in \vd_{\Pi}}\max_{{\vl} \in L}\langle {\vl}, \vd_\pi \rangle=v.\]
Thus we have:
\begin{equation}
    \begin{aligned}
    \sum_{t=1}^T \langle {\vl}_t^{\pi}, \vd_\pi \rangle &\geq T \max_{{\vl} \in L} \langle {\vl}, \vd_\pi \rangle - \sqrt{\frac{T \log(L)}{2}} \\
    &\geq T v - \sqrt{\frac{T \log(L)}{2}}.
    \end{aligned}
\end{equation}
\end{proof}
\subsection{Proof of Theorem \ref{Theorem of bounding regret}}
\begin{theorem*}
Suppose the agent follows MDP-E Algorithm \ref{MDP-expert}, then the agent's regret in Equation (\ref{regret for the agent}) will be bounded by
\begin{equation*}
    \begin{aligned}
   R_T(\pi)=\mathcal{O}(\sqrt{T \log(L)}+ \tau^2\sqrt{ T \log(|A|)}).
    \end{aligned}
\end{equation*}
\end{theorem*}
\begin{proof}
Using the consequence of Lemma 5.2 in \cite{even2009online}, for any sequence of $\vl_t$ we have:
\begin{equation}
    \begin{aligned}
    &\sum_{t=1}^T  \langle {\vl}_t, \vv_t-\vd_{\pi_t} \rangle \leq \sum_{t=1}^T | \langle {\vl}_t, \vv_t-\vd_{\pi_t} \rangle |\leq \sum_{t=1}^T \|\vv_t-\vd_{\pi_t} \|_1 \\
    &\leq \sum_{t=1}^T 2\tau^2 \sqrt{\frac{\log(|A|)}{t}}+2e^{-t/\tau} \\
    &\leq 4\tau^2 \sqrt{T\log(|A|)}+2(1+\tau)
    = \mathcal{O} \big(\tau^2 \sqrt{T \log(|A|)}\big).
    \end{aligned}
\end{equation}
Thus we have
\begin{equation}
    \sum_{t=1}^T |\langle {\vl}_t, \vv_t-\vd_{\pi_t} \rangle| \leq 2(1+\tau) +4\tau^2 \sqrt{T \log(|A|)}.
\end{equation}
Furthermore, if the agent uses a fixed policy $\pi$ then by Lemma \ref{lemma when agent uses a fixed strategy}, we have:
\[|\sum_{t=1}^T \langle {\vl}_t, \vd_{\pi} -\vv_t^{\pi} \rangle| \leq 2 \tau +2.\]
Since the agent uses MDP-E, a no-external regret algorithm, following the same argument in Theorem 4.1 in \cite{even2009online} we have:
\[\sum_{t=1}^T \langle {\vl}_t^{\pi_t}, \vd_{\pi_t} \rangle\leq T \min_{\vd_{\pi}}\langle \hat{{\vl}}, \vd_{\pi} \rangle+ 3\tau \sqrt{\frac{T\log(|A|)}{2}} \leq Tv+3\tau \sqrt{\frac{T\log(|A|)}{2}}.\]
Along with Lemma 2, we have:
\begin{equation*}
    \begin{aligned}
     \sum_{t=1}^T \langle {\vl}_t^{\pi_t}, \vd_{\pi_t} \rangle-\sum_{t=1}^T \langle {\vl}_t^{\pi}, \vd_{\pi} \rangle \leq \left(Tv+3\tau \sqrt{\frac{T\log(|A|)}{2}}\right)-\left(T v - \sqrt{\frac{T \log(L)}{2}}\right)\\
     =3\tau \sqrt{\frac{T\log(|A|)}{2}}+\sqrt{\frac{T \log(L)}{2}}.
    \end{aligned}
\end{equation*}

Using the above two inequalities, we can bound the regret of the agent with respect to the regret of the policy's stationary distribution:
\begin{equation}
    \begin{aligned}
     R_T(\pi)&=\mathbb{E}_{x,a}\left[\sum_{t=1}^T {\vl}_t^{\pi_t}(x_t,a_t)\right]- \mathbb{E}_{x,a}\left[ \sum_{t=1}^T {\vl}_t^{\pi}(x_t^{\pi},a_t^{\pi})\right] \\
    &= \sum_{t=1}^T \langle {\vl}_t^{\pi_t}, \vv_t \rangle - \sum_{t=1}^T \langle {\vl}_t^{\pi}, \vv_t^{\pi} \rangle \\
    &\leq \sum_{t=1}^T \left(\langle {\vl}_t^{\pi_t}, \vd_{\pi_t} \rangle+ |\langle {\vl}_t^{\pi_t}, \vv_t-\vd_{\pi_t} \rangle| \right) - \sum_{t=1}^T \left(\langle {\vl}_t^{\pi}, \vd_{\pi} \rangle -|\langle {\vl}_t^{\pi},\vv_t^{\pi}- \vd_{\pi} \rangle|\right) \\
    &\leq \sum_{t=1}^T \langle {\vl}_t^{\pi_t}, \vd_{\pi_t} \rangle-\sum_{t=1}^T \langle {\vl}_t^{\pi}, \vd_{\pi} \rangle +
    2(1+\tau) +4\tau^2 \sqrt{T \log(|A|)}+ 2+ 2\tau \\
    &\leq \sqrt{\frac{T \log(L)}{2}}+3 \tau \sqrt{\frac{T \log(|A|)}{2}}+ 4(1+\tau)+4\tau^2 \sqrt{T \log(|A|)} \\
    &=\mathcal{O}(\sqrt{T \log(L)}+ \tau^2\sqrt{ T \log(|A|)}).
    \end{aligned}
\end{equation}
The proof is complete.
\end{proof}
\subsection{Proof of Theorem \ref{theorem: average convergence to NE}}
\begin{theorem*}
Suppose the agent follows MDP-E, then the average strategies of both the agent and the adversary will converge to the $\epsilon_t$-Nash equilibrium of the game with the rate:
\[\epsilon_t=\sqrt{\frac{\log(L)}{2T}}+3 \tau \sqrt{\frac{\log(|A|)}{2T}}\]
\end{theorem*}
\begin{proof}
Since the agent and the adversary use no-regret algorithms with respect to the policy's stationary distribution, we can use the property of regret bound in normal-form game to apply. Thus we have:
\begin{equation*}
    \begin{aligned}
    \max_{{\vl} \in L} \langle {\vl},\hat{\vd_{\pi}} \rangle-\frac{1}{T} \sum_{t=1}^T \langle {\vl}_t^{\pi_t}, \vd_{\pi_t} \rangle \leq \sqrt{\frac{\log(L)}{2 T}},\\
    \frac{1}{T}\sum_{t=1}^T \langle {\vl}_t^{\pi_t}, \vd_{\pi_t} \rangle-\min_{\vd_{\pi}}\langle \hat{{\vl}}, \vd_{\pi} \rangle \leq 3\tau \sqrt{\frac{\log(|A|)}{2T}},
    \end{aligned}
\end{equation*}
where $\hat{\vd_{\pi}}=\frac{1}{T} \sum_{t=1}^T \vd_{\pi_t}$ and $\hat{{\vl}}=\frac{1}{T} \sum_{t=1}^T \vl_t^{\pi_t}$.
From this, we can prove that
\begin{equation*}
    \begin{aligned}
    \langle \hat{{\vl}}, \hat{\vd_{\pi}} \rangle &\geq \min_{\vd_{\pi}}\langle \hat{{\vl}}, \vd_{\pi} \rangle \geq \frac{1}{T}\sum_{t=1}^T \langle {\vl}_t^{\pi_t}, \vd_{\pi_t} \rangle- 3\tau \sqrt{\frac{\log(|A|)}{2T}} \\
    &\geq \max_{{\vl} \in L} \langle {\vl},\hat{\vd_{\pi}} \rangle-\sqrt{\frac{\log(L)}{2 T}}-3 \tau \sqrt{\frac{\log(|A|)}{2T}},
    \end{aligned}
\end{equation*}
and,
\begin{equation*}
    \begin{aligned}
    \langle \hat{{\vl}}, \hat{\vd_{\pi}} \rangle &\leq \max_{{\vl} \in L} \langle {\vl},\hat{\vd_{\pi}} \rangle \leq \frac{1}{T}\sum_{t=1}^T \langle {\vl}_t^{\pi_t}, \vd_{\pi_t} \rangle +\sqrt{\frac{\log(L)}{2 T}} \\
    &\leq \min_{\vd_{\pi}}\langle \hat{{\vl}}, \vd_{\pi} \rangle+ 3 \tau \sqrt{\frac{\log(|A|)}{2T}} +\sqrt{\frac{\log(L)}{2 T}}.
    \end{aligned}
\end{equation*}
Thus, with $\epsilon_t=\sqrt{\frac{\log(L)}{2T}}+3 \tau \sqrt{\frac{\log(|A|)}{2T}}$, we derive
\[\max_{{\vl} \in L} \langle {\vl},\hat{\vd_{\pi}} \rangle-\epsilon_t \leq \langle \hat{{\vl}}, \hat{\vd_{\pi}} \rangle \leq \min_{\vd_{\pi}}\langle \hat{{\vl}}, \vd_{\pi} \rangle +\epsilon_t.\]
By definition, $(\hat{{\vl}}, \hat{\vd_{\pi}})$ is $\epsilon_t$-Nash equilibrium.
\end{proof}
\subsection{MDP-Online Oracle Expert Algorithm}
\subsection{Proof of Lemma \ref{lemma: small support of NE}}
\begin{lemma*}
Suppose that the loss function is sampled from a continuous distribution and the size of the loss function set is small compared to the agent's pure strategy set (i.e., $|A|^{|S|} \gg L$). Let $(\vd_{\vpi^*}, \vl^*)$ be a Nash equilibrium of the game of size ${ |A|^{|S|} \times L}$. Then we have:
\begin{equation*}
\max\big(|\operatorname{supp}(\vd_{\vpi^*})|, |\operatorname{supp}(\vl^*)|\big) \leq L.
\end{equation*}
\vspace{-15pt}
\end{lemma*}
\begin{proof}
Within the set of all zero-sum games, the set of zero-sum games with non-unique equilibrium has Lebesgue measure zero~\cite{bailey2018multiplicative}. Thus, if the loss function 's entries are sampled from a continuous distribution, then with probability one, the game has a unique NE. Following the Theorem 1 in \cite{bohnenblust1950solutions} for game with unique NE, we have:
\[|\operatorname{supp}(\vd_{\vpi^*})|=|\operatorname{supp}(\vl^*)|.\]
We also note that the support size of the NE can not exceed the size of the game:
\[|\operatorname{supp}(\vd_{\vpi^*})| \leq |A|^{|S|}; \;\;|\operatorname{supp}(\vl^*)|\leq L.\]
Thus we have:
\[\max\big(|\operatorname{supp}(\vd_{\vpi^*})|, |\operatorname{supp}(\vl^*)|\big)= |\operatorname{supp}(\vl^*)| \leq  L.\]
\end{proof}
\subsection{Proof of Theorem \ref{MDP-OOE regret with stationary distribution}}
\begin{theorem*}
Suppose the agent uses the Algorithm \ref{MDP online oracle expert}, then the regret with respect to the stationary distribution will be bounded by:
\[\sum_{t=1}^T \langle {\vl}_t^{\pi_t}, \vd_{\pi_t}\rangle-\langle {\vl}_t^{\pi_t}, \vd_{\pi}\rangle \leq 3 \tau \left( \sqrt{2 {T k \log(k)}} +\frac{k\log(k)}{8} \right),\]
where $k$ is the number of time window.
\end{theorem*}
\begin{proof}
We first have:
\begin{equation*}
    \begin{aligned}
        \mathrm{E}_{s\sim \vd_{\pi}}[Q_{\pi_t, \vl_t}(s,\pi)]&=\mathrm{E}_{s\sim \vd_{\pi},a\sim \pi}[Q_{\pi_t, \vl_t}(s,a)]\\
        &=\mathrm{E}_{s\sim\vd_{\pi},a\sim\pi}[\vl_t(s,a)-\eta_{\vl_t}(\pi_t)+\mathrm{E}_{s'\sim P_{s,a}}[Q_{\pi_t,\vl_t}(s',\pi_t)]]\\
        &=\mathrm{E}_{s\sim\vd_{\pi},a\sim\pi}[\vl_t(s,a)]-\eta_{\vl_t}(\pi_t)+\mathrm{E}_{s\sim \vd_{\pi}}[Q_{\pi_t,\vl_t}(s,\pi_t)]\\
        &=\eta_{\vl_t}(\pi)-\eta_{\vl_t}(\pi_t)+\mathrm{E}_{s\sim \vd_{\pi}}[Q_{\pi_t,\vl_t}(s,\pi_t)].
    \end{aligned}
\end{equation*}
Thus we have:
\begin{equation}\label{regret bound of algorithm2:equal1}
\langle {\vl}_t^{\pi_t}, \vd_{\pi}\rangle-\langle {\vl}_t^{\pi_t}, \vd_{\pi_t}\rangle=\sum_{s \in S} \vd_{\pi}(s)\left(Q_{\pi_t,{\vl}_t}(s,\pi)-Q_{\pi_t,{\vl}_t}(s,\pi_t)\right).
\end{equation}
Let $T_1, T_2,..., T_k$ be the time window that the $\text{BR}(\bar{{\vl}})$ does not change. Then in that time window, the best response to the current $\bar{{\vl}}$ is inside the current pure strategies set in each state. In each time window, following Equation (\ref{regret bound of algorithm2:equal1}) we have:
\begin{equation}
    \begin{aligned}
    \sum_{t=|\bar{T}_i|}^{\bar{T}_{i+1}} \langle {\vl}_t^{\pi_t}, \vd_{\pi}\rangle-\langle {\vl}_t^{\pi_t}, \vd_{\pi_t}\rangle= \sum_{s \in S} \vd_{\pi}(s) \sum_{t=|\bar{T}_i|}^{\bar{T}_{i+1}} \left(Q_{\pi_t,{\vl}_t}(s,\pi)-Q_{\pi_t,{\vl}_t}(s,\pi_t)\right).
    \end{aligned}
\end{equation}
Since during each time window, the pure strategies $A^s_t$ does not change, thus we have:
\begin{equation*}
    \begin{aligned}
    \min_{\pi \in \Pi} \sum_{t=|\bar{T}_i|}^{\bar{T}_{i+1}} \langle {\vl}_t^{\pi_t}, \vd_{\pi}\rangle = \min_{\pi \in A^s_{|\bar{T}_i|}} \sum_{t=|\bar{T}_i|}^{\bar{T}_{i+1}} \langle {\vl}_t^{\pi_t}, \vd_{\pi}\rangle .  
    \end{aligned}
\end{equation*}
Thus, in each state $s$ of a time window, the agent only needs to minimize the loss with respect to the action in $A^s_{|\bar{T}_i|}$. Put it differently, the expert algorithm in each state does not need to consider all pure action in each state, but just the current effective strategy set. For a time window $T_i$, if the agent uses a no-regret algorithm with the current effective action set and the learning rate $\mu_t=\sqrt{8\log(i)/t}$, then the regret in each state will be bounded by~\cite{cesa2006prediction}:
\begin{equation*}
    3 \tau \left( \sqrt{2 {|T_i| \log(A^s_t)}} +\frac{\log(A^s_t)}{8} \right) \leq 3 \tau \left( \sqrt{2 {|T_i| \log(i)}} +\frac{\log(i)}{8} \right).
\end{equation*}
Thus, the regret in this time interval will also be bounded by:
\begin{equation}\label{equation: regret bound in each time interval}
    \sum_{t=|\bar{T}_i|}^{\bar{T}_{i+1}} \langle {\vl}_t^{\pi_t}, \vd_{\pi_t}\rangle-\langle {\vl}_t^{\pi_t}, \vd_{\pi}\rangle\leq 3 \tau \left( \sqrt{2 {|T_i| \log(i)}} +\frac{\log(i)}{8} \right).
\end{equation}
Sum up from $i=1$ to $k$ in Inequality (\ref{equation: regret bound in each time interval}) we have:
\begin{equation}
    \begin{aligned}
    &\sum_{t=1}^T \langle {\vl}_t^{\pi_t}, \vd_{\pi_t}\rangle-\langle {\vl}_t^{\pi_t}, \vd_{\pi}\rangle =\sum_{i=1}^k \sum_{t=|\bar{T}_i|}^{\bar{T}_{i+1}} \langle {\vl}_t^{\pi_t}, \vd_{\pi_t}\rangle-\langle {\vl}_t^{\pi_t}, \vd_{\pi}\rangle \\
    &\leq \sum_{i=1}^k 3 \tau \left( \sqrt{2 {|T_i| \log(i)}} +\frac{\log(i)}{8} \right) \leq 3 \tau \left( \sqrt{2 {T k \log(k)}} +\frac{k\log(k)}{8} \right).
    \end{aligned}
\end{equation}
The proof is complete.
\end{proof}
\subsection{Proof of Remark \ref{remark on best reponse with respect to total average strategy}}
\begin{remark*}
The regret bound in Theorem \ref{MDP-OOE regret with stationary distribution} will still hold in the case we consider the total average lost instead of average lost in each time window when calculating the best response in Algorithm \ref{MDP online oracle expert}.
\end{remark*}
\begin{proof}
We prove by induction that
\[\min_{\pi \in \Pi }\sum_{t=1}^{\bar{T}_k} \langle {\vl}_t^{\pi_t}, \vd_{\pi_t}\rangle-\langle {\vl}_t^{\pi_t}, \vd_{\pi}\rangle \leq \sum_{j=1}^k \left[ \sum_{t=\bar{T}_{j-1}+1}^{\bar{T}_{j}} \langle {\vl}_t^{\pi_t}, \vd_{\pi_t}\rangle-\langle {\vl}_t^{\pi_t}, \vd_{\pi_j}\rangle \right],\]
where $\vd_{\pi_j}$ denotes the best response in the interval $[1, \bar{T}_j]$.

For $k =1$, the claim is obvious. Suppose the claim is true $k$. We then have:
\begin{subequations}
    \begin{align}
     &\min_{\pi \in \Pi }\sum_{t=1}^{\bar{T}_{k+1}} \langle {\vl}_t^{\pi_t}, \vd_{\pi_t}\rangle-\langle {\vl}_t^{\pi_t}, \vd_{\pi}\rangle =  \sum_{t=1}^{\bar{T}_{k+1}} \langle {\vl}_t^{\pi_t}, \vd_{\pi_t}\rangle-\langle {\vl}_t^{\pi_t}, \vd_{\pi_{k+1}}\rangle \nonumber\\
     &= \sum_{t=1}^{\bar{T}_{k}} \langle {\vl}_t^{\pi_t}, \vd_{\pi_t}\rangle-\langle {\vl}_t^{\pi_t}, \vd_{\pi_{k+1}}\rangle +\sum_{t=\bar{T}_{k}+1}^{\bar{T}_{k+1}} \langle {\vl}_t^{\pi_t}, \vd_{\pi_t}\rangle-\langle {\vl}_t^{\pi_t}, \vd_{\pi_{k+1}}\rangle \nonumber \\
     &\leq \min_{\pi \in \Pi }\sum_{t=1}^{\bar{T}_{k}} \langle {\vl}_t^{\pi_t}, \vd_{\pi_t}\rangle-\langle {\vl}_t^{\pi_t}, \vd_{\pi}\rangle +\sum_{t=\bar{T}_{k}+1}^{\bar{T}_{k+1}} \langle {\vl}_t^{\pi_t}, \vd_{\pi_t}\rangle-\langle {\vl}_t^{\pi_t}, \vd_{\pi_{k+1}}\rangle \nonumber \\
     &\leq \sum_{j=1}^k \left[ \sum_{t=\bar{T}_{j-1}+1}^{\bar{T}_{j}} \langle {\vl}_t^{\pi_t}, \vd_{\pi_t}\rangle-\langle {\vl}_t^{\pi_t}, \vd_{\pi_j}\rangle \right] +\sum_{t=\bar{T}_{k}+1}^{\bar{T}_{k+1}} \langle {\vl}_t^{\pi_t}, \vd_{\pi_t}\rangle-\langle {\vl}_t^{\pi_t}, \vd_{\pi_{k+1}}\rangle \label{induction step} \\
     &=\sum_{j=1}^{k+1} \left[ \sum_{t=\bar{T}_{j-1}+1}^{\bar{T}_{j}} \langle {\vl}_t^{\pi_t}, \vd_{\pi_t}\rangle-\langle {\vl}_t^{\pi_t}, \vd_{\pi_j}\rangle \right], \nonumber
    \end{align}
\end{subequations}
where the inequality (\ref{induction step}) dues to the induction assumption. Thus, for all $k$ we have:
\[\min_{\pi \in \Pi }\sum_{t=1}^{\bar{T}_k} \langle {\vl}_t^{\pi_t}, \vd_{\pi_t}\rangle-\langle {\vl}_t^{\pi_t}, \vd_{\pi}\rangle \leq \sum_{j=1}^k \left[ \sum_{t=\bar{T}_{j-1}+1}^{\bar{T}_{j}} \langle {\vl}_t^{\pi_t}, \vd_{\pi_t}\rangle-\langle {\vl}_t^{\pi_t}, \vd_{\pi_j}\rangle \right].\]
In other words, the Algorithm \ref{MDP online oracle expert} will have the same regret bound when using the best response with respect to the total average strategy of the adversary.
\end{proof}
\subsection{Proof of Theorem \ref{regret bound for ora algorithm}}
\begin{theorem*}
Suppose the agent uses the Algorithm \ref{MDP online oracle expert} in our online MDPs setting, then the regret in Equation (\ref{regret for the agent}) can be bounded by:
\begin{equation*}
    R_T(\pi) =\mathcal{O}(\tau^2\sqrt{ T k \log(k)} +\sqrt{T\log(L)}).
\end{equation*}
\end{theorem*}
\begin{proof}
First we bound the difference between the true loss and the loss with respect to the policy's stationary distribution. 
Following the Algorithm \ref{MDP online oracle expert}, at the start of each time interval $T_i$ (i.e., the time interval in which the effective strategy set does not change), the learning rate needs to restart to $\mathcal{O}(\sqrt{\log(i)/t_i})$, where $i$ denotes the number of pure strategies in the effective strategy set in the time interval $T_i$ and $t_i$ is relative position of the current round in that interval. Thus, following Lemma 5.2 in \cite{even2009online}, in each time interval $T_i$, the difference between the true loss and the loss with respect to the policy's stationary distribution will be: 
\begin{equation*}
    \begin{aligned}
        &\sum_{t=t_{i-1}+1}^{t_i}  | \langle {\vl}_t, \vv_t-\vd_{\pi_t} \rangle |\leq \sum_{t=t_{i-1}+1}^{t_i}  \|\vv_t-\vd_{\pi_t} \|_1 \\
    &\leq \sum_{t=1}^{T_i} 2\tau^2 \sqrt{\frac{\log(i)}{t}}+2e^{-t/\tau} \\
    &\leq 4\tau^2 \sqrt{T_i\log(i)}+2(1+\tau).
    \end{aligned}
\end{equation*}
From this we have:
\begin{equation*}
    \begin{aligned}
        &\sum_{t=1}^T |\langle {\vl}_t, \vv_t-\vd_{\pi_t} \rangle|=\sum_{i=1}^k \sum_{t=t_{i-1}+1}^{t_i}  | \langle {\vl}_t, \vv_t-\vd_{\pi_t} \rangle |\\
        &\leq \sum_{i=1}^k \left(4\tau^2 \sqrt{T_i\log(i)}+2(1+\tau)\right)\\
        &\leq 4\tau^2 \sqrt{Tk \log(k)}+2k(1+\tau).
    \end{aligned}
\end{equation*}
Following Lemma 1 from \cite{neu2013online}, we also have:
\[\sum_{t=1}^T| \langle {\vl}_t, \vd_{\pi} -\vv_t^{\pi} \rangle| \leq 2 \tau +2.\]
Thus the regret in Equation (\ref{regret for the agent}) can be bounded by:
\begin{equation}
\begin{aligned}
   R_T(\pi) &\leq \left( \sum_{t=1}^T \langle \vd_{\pi_t},{\vl}_t \rangle + \sum_{t=1}^T |\langle {\vl}_t, \vv_t-\vd_{\pi_t} \rangle|\right)-\left(\sum_{t=1}^T \langle {\vl}_t^{\pi}, \vd_{\pi} \rangle - \sum_{t=1}^T| \langle {\vl}_t, \vd_{\pi} -\vv_t^{\pi} \rangle|\right)\\
   &= \left(\sum_{t=1}^T \langle \vd_{\pi_t},{\vl}_t \rangle-\sum_{t=1}^T \langle {\vl}_t^{\pi}, \vd_{\pi} \rangle \right) + \sum_{t=1}^T |\langle {\vl}_t, \vv_t-\vd_{\pi_t} \rangle + \sum_{t=1}^T| \langle {\vl}_t, \vd_{\pi} -\vv_t^{\pi} \rangle| \\
    &\leq 3 \tau \left( \sqrt{2 {T k \log(k)}} +\frac{k\log(k)}{8} \right) + \frac{\sqrt{T \log(L)}}{\sqrt{2}}+ 4\tau^2 \sqrt{Tk \log(k)}+2k(1+\tau)+2\tau+2\\
    &=\mathcal{O}(\tau^2\sqrt{ T k \log(k)} +\sqrt{T\log(L)}).
\end{aligned}
\end{equation}
The proof is complete.
\end{proof}
\subsection{Proof of Theorem \ref{theorem about epsilon-best response}}
\begin{theorem*}
Suppose the agent only accesses to $\epsilon$-best response in each iteration when following Algorithm \ref{MDP online oracle expert}. If the adversary follows a no-external regret algorithm then the average strategy of the agent and the adversary will converge to $\epsilon$-Nash equilibrium. Furthermore, the algorithm has $\epsilon$-regret.
\end{theorem*}
\begin{proof}
Suppose that the player uses the Multiplicative Weights Update in Algorithm \ref{MDP online oracle expert} with $\epsilon$-best response. Let $T_1, T_2, \dots, T_k$ be the time window that the players does not add up a new strategy. Since we have a finite set of strategies $A$ then $k$ is finite. Furthermore, 
\begin{equation*}
\sum_{i=1}^k T_k=T.
\end{equation*}
In a time window $T_i$, the regret with respect to the best strategy in the set of strategy at time $T_i$ is:
\begin{equation}\label{regret in each time window: 1.1}
\begin{aligned}
    \sum_{t=\bar{T}_i}^{\bar{T}_{i+1}} \langle {\vl}_t^{\pi_t}, \vd_{\pi_t}\rangle-\min_{\pi \in A_{\bar{T}_i+1}}\sum_{t=|\bar{T}_i|}^{\bar{T}_{i+1}} \langle {\vl}_t^{\pi_t}, \vd_{\pi}\rangle\leq 3 \tau \left( \sqrt{2 {T_i \log(i)}} +\frac{\log(i)}{8} \right),
\end{aligned}
\end{equation}
where $\Bar{T}_i=\sum_{j=1}^{i-1}T_j$.
Since in the time window $T_i$, the $\epsilon$-best response strategy stays in $\Pi_{\Bar{T}_i +1}$ and therefore we have:
\[\min_{\pi \in A_{\bar{T}_i+1}} \sum_{t=|\bar{T}_i|}^{\bar{T}_{i+1}} \langle {\vl}_t^{\pi_t}, \vd_{\pi}\rangle-\min_{\pi \in \Pi} \sum_{t=|\bar{T}_i|}^{\bar{T}_{i+1}} \langle {\vl}_t^{\pi_t}, \vd_{\pi}\rangle \leq \epsilon T_i.\]
Then, from the Equation (\ref{regret in each time window: 1.1}) we have:
\begin{equation}\label{regret in each time window: 2.1}
\begin{aligned}
   \sum_{t=\bar{T}_i}^{\bar{T}_{i+1}} \langle {\vl}_t^{\pi_t}, \vd_{\pi_t}\rangle - \min_{\pi \in \Pi} \sum_{t=|\bar{T}_i|}^{\bar{T}_{i+1}} \langle {\vl}_t^{\pi_t}, \vd_{\pi}\rangle \leq 3 \tau \left( \sqrt{2 {T_i \log(i)}} +\frac{\log(i)}{8} \right)+ \epsilon T_i.
\end{aligned}
\end{equation}
Sum up the Equation (\ref{regret in each time window: 2.1}) for $i=1,\dots k$ we have:
\begin{subequations}
    \begin{align}
    \sum_{t=1}^T \langle {\vl}_t^{\pi_t}, \vd_{\pi_t}\rangle -\sum_{i=1}^k \min_{\pi \in \Pi} \sum_{t=|\bar{T}_i|}^{\bar{T}_{i+1}} \langle {\vl}_t^{\pi_t}, \vd_{\pi}\rangle
    \leq \sum_{i=1}^k 3 \tau \left( \sqrt{2 {T_i \log(i)}} +\frac{\log(i)}{8} \right)+ \epsilon T_i \nonumber \\
    \implies \sum_{t=1}^T \langle {\vl}_t^{\pi_t}, \vd_{\pi_t}\rangle -\min_{\pi \in \Pi} \sum_{i=1}^k  \sum_{t=|\bar{T}_i|}^{\bar{T}_{i+1}} \langle {\vl}_t^{\pi_t}, \vd_{\pi}\rangle \leq \epsilon T+ \sum_{i=1}^k 3 \tau \left( \sqrt{2 {T_i \log(i)}} +\frac{\log(i)}{8} \right) \label{regret 1 prove subequation 2:1} \\
    \implies \sum_{t=1}^T \langle {\vl}_t^{\pi_t}, \vd_{\pi_t}\rangle- \min_{\pi \in \Pi} \sum_{t=1}^{T} \langle {\vl}_t^{\pi_t}, \vd_{\pi}\rangle \leq  \epsilon T+ \sum_{i=1}^k 3 \tau \left( \sqrt{2 {T_i \log(i)}} +\frac{\log(i)}{8} \right)  \nonumber\\
    \implies \sum_{t=1}^T \langle {\vl}_t^{\pi_t}, \vd_{\pi_t}\rangle- \min_{\pi \in \Pi} \sum_{t=1}^{T} \langle {\vl}_t^{\pi_t}, \vd_{\pi}\rangle \leq \epsilon T +  3 \tau \left( \sqrt{2 {T k \log(k)}} +\frac{k\log(k)}{8} \right) \label{regret 1 prove subequation 2:2}.
    \end{align}
\end{subequations}
Inequality (\ref{regret 1 prove subequation 2:1}) is due to $\sum \min \leq \min \sum$. Inequality (\ref{regret 1 prove subequation 2:2}) comes from Cauchy-Schwarz inequality and Stirling' approximation. 
Using inequality (\ref{regret 1 prove subequation 2:2}), we have:
\begin{equation}\label{regret for min average}
    \min_{\pi \in \Pi} \langle \bar{{\vl}}, \vd_{\pi}  \rangle \geq \frac{1}{T} \sum_{t=1}^T \langle {\vl}_t^{\pi_t}, \vd_{\pi_t}\rangle- 3\tau \left( \sqrt{\frac{2k \log(k)}{T}} +\frac{k \log(k)}{8T} \right) -\epsilon.
\end{equation}
Since the adversary follows a no-regret algorithm, we have:
\begin{equation}\label{regret for max average}
    \begin{aligned}
    \max_{{\vl} \in \Delta_L} \sum_{t=1}^T \langle {\vl},  \vd_{\pi_t} \rangle-\sum_{t=1}^T \langle{\vl}_t^{\pi_t}, \vd_{\pi_t}\rangle \leq \sqrt{\frac{T}{2}} \sqrt{\log(L)}\\
    \implies \max_{{\vl} \in \Delta_L} \sum_{t=1}^T \langle {\vl},  \bar{\vd_{\pi}} \rangle \leq \frac{1}{T} \sum_{t=1}^T \langle{\vl}_t^{\pi_t}, \vd_{\pi_t}\rangle +\sqrt{\frac{ \log(L)}{2T}}.
    \end{aligned}
\end{equation}
Using the Inequalities (\ref{regret for min average}) and (\ref{regret for max average}) we have:
\begin{equation*}
    \begin{aligned}
    \langle \bar{{\vl}}, \bar{\vd_{\pi}} \rangle &\geq \min_{\pi \in \Pi} \langle \bar{{\vl}}, \vd_{\pi} \rangle \geq \frac{1}{T} \sum_{t=1}^T \langle {\vl}_t^{\pi_t}, \vd_{\pi_t}\rangle- 3\tau \left( \sqrt{\frac{2k \log(k)}{T}} +\frac{k \log(k)}{8T} \right) -\epsilon  \\
    &\geq \max_{{\vl} \in \Delta_L} \sum_{t=1}^T \langle {\vl},  \bar{\vd_{\pi}} \rangle- \sqrt{\frac{\log(L)}{2T}}- 3\tau \left( \sqrt{\frac{2k \log(k)}{T}} +\frac{k \log(k)}{8T} \right) -\epsilon .
    \end{aligned}
\end{equation*}
Similarly, we also have:
\begin{equation*}
    \begin{aligned}
    \langle \bar{{\vl}}, \bar{\vd_{\pi}} \rangle &\leq  \max_{{\vl} \in \Delta_L} \sum_{t=1}^T \langle {\vl},  \bar{\vd_{\pi}} \rangle \leq \frac{1}{T} \sum_{t=1}^T \langle{\vl}_t^{\pi_t}, \vd_{\pi_t}\rangle +\sqrt{\frac{ \log(L)}{2T}}\\
    &\leq \min_{\pi \in \Pi} \langle \bar{{\vl}}, \vd_{\pi}  \rangle + 3\tau \left( \sqrt{\frac{2k \log(k)}{T}} +\frac{k \log(k)}{8T} \right) +\epsilon .
    \end{aligned}
\end{equation*}
Take the limit $T \to \infty$, we then have:
\[\max_{{\vl} \in \Delta_L} \sum_{t=1}^T \langle {\vl},  \bar{\vd_{\pi}} \rangle -\epsilon \leq  \langle \bar{{\vl}}, \bar{\vd_{\pi}} \rangle  \leq \min_{\pi \in \Pi} \langle \bar{{\vl}}, \vd_{\pi}  \rangle   +\epsilon.\]
Thus $(\bar{{\vl}}, \bar{\vd_{\pi}})$ is the $\epsilon$-Nash equilibrium of the game.
\end{proof}
\subsection{Last-Round Convergence to NE in OMDPs}
\subsection{Proof of Lemma \ref{lemma about last-round convergence}}
\begin{lemma*}
Let $\pi^*$ be the NE strategy of the agent. Then, ${\vl}$ is the Nash Equilibrium of the adversary if the two following conditions hold:
\begin{equation*}
    Q_{\pi^*, {\vl}} (s, \pi^*)=\argmin_{\pi \in \Pi} Q_{\pi^*, {\vl}}(s, \pi) \;\; \forall s \in S\;\;\text{and}\;\;\eta_l(\pi^*) = v.
\end{equation*}
\end{lemma*}
\begin{proof}
Using the definition of accumulated loss function $Q$ we have
\begin{equation}
    \begin{aligned}
    &\mathbb{E}_{s \in \vd_{\pi}}[Q_{\pi^*,{\vl}}(s, \pi)]= \mathbb{E}_{s \in \vd_{\pi}, a \in \pi}[Q_{\pi^*, {\vl}}(s,a)]\\
    &= \mathbb{E}_{s \in \vd_{\pi}, a \in \pi}[{\vl}(s,a)- \eta_l(\pi^*) + \mathbb{E}_{s'\sim P_{sa}}[Q_{\pi^*, {\vl}}(s', \pi^*)]]\\
    &= \mathbb{E}_{s \in \vd_{\pi}, a \in \pi}[{\vl}(s,a)- \eta_l(\pi^*)] + \mathbb{E}_{s \in \vd_{\pi}}[Q_{\pi^*, {\vl}}(s,\pi^*)]\\
    &= \eta_l(\pi)-\eta_l(\pi^*)+ \mathbb{E}_{s \in \vd_{\pi}}[Q_{\pi^*, {\vl}}(s,\pi^*)].
    \end{aligned}
\end{equation}
Thus we have
\begin{equation}\label{equation about stationary distribution}
     \eta_l(\pi)-\eta_l(\pi^*)= \mathbb{E}_{s \in \vd_{\pi}}[Q_{\pi^*,{\vl}}(s, \pi)-Q_{\pi^*, {\vl}}(s,\pi^*).]
\end{equation}
Since we assume that 
\[\mathbb{Q}_{\pi^*, {\vl}} (s, \pi^*)=\argmin_{\pi \in \Pi} \mathbb{Q}_{\pi^*, {\vl}}(s, \pi) \;\; \forall s \in S,\]
we have
\begin{equation}
    \mathbb{Q}_{\pi^*, {\vl}}(s, \pi) \geq \mathbb{Q}_{\pi^*, {\vl}} (s, \pi^*) \;\; \forall s \in S, \pi \in \Pi.
\end{equation}
It implies that
\begin{equation}
    \mathbb{E}_{s \in \vd_{\pi}}[Q_{\pi^*,{\vl}}(s, \pi)-Q_{\pi^*, {\vl}}(s,\pi^*)] \geq 0 \;\; \forall \pi \in \Pi.
\end{equation}
Therefore we have
\begin{equation}
    \eta_l(\pi)\geq \eta_l(\pi^*) \;\; \forall \pi \in \Pi.
\end{equation}
Along with the assumption $\eta_l(\pi^*)=v$, we have the following relationship:
\begin{equation}\label{equation for Nash equilibrium requirement}
    \begin{aligned}
    \argmin_{\pi \in \Pi} \eta_l(\pi) = \eta_l(\pi^*) =v.
    \end{aligned}
\end{equation}
Now we prove that for the loss function ${\vl}$ that satisfies Equation (\ref{equation for Nash equilibrium requirement}), then ${\vl}$ is NE for the adversary. Let $(\pi^*, {\vl}^*)$ be one of the NE of the game.  Since the game we are considering is zero-sum game, $(\pi^*, {\vl}^*)$ satisfies the famous minimax theorem:
\begin{equation}
    \min_{\pi \in \Pi} \max_{{\vl}_1 \in L} \langle {\vl}_1, \vd_{\pi} \rangle= \max_{{\vl}_1 \in L}  \min_{\pi \in \Pi} \langle {\vl}_1, \vd_{\pi}\rangle = v \;\;\text{where} \;\langle {\vl}, \vd_{\pi} \rangle= \eta_l(\pi).
\end{equation}
From Equation (\ref{equation for Nash equilibrium requirement}) we have
\begin{equation} \label{inequality 1 for proof of NE for adversary}
    v = \min_{\pi \in \Pi} \langle {\vl}, \vd_{\pi} \rangle \leq  \langle {\vl}, \vd_{\pi^*} \rangle.
\end{equation}
Further, since ${\vl}^*$ is the NE of the game, then we have
\begin{equation} \label{inequality 2 for proof of NE for adversary}
    v= \langle {\vl}^*, \vd_{\pi^*} \rangle= \max_{{\vl}_1 \in L} \langle {\vl}_1, \vd_{\pi^*} \rangle \geq \langle {\vl}, \vd_{\pi^*} \rangle.
\end{equation}
From inequalities (\ref{inequality 1 for proof of NE for adversary}) and (\ref{inequality 2 for proof of NE for adversary}) we have
\begin{equation}
    v= \langle {\vl}, \vd_{\pi^*} \rangle = \min_{\pi \in \Pi} \langle {\vl}, \vd_{\pi} \rangle = \max_{{\vl}_1 \in L} \langle {\vl}_1, \vd_{\pi^*} \rangle.
\end{equation}
Thus, by definition $({\vl}, \pi^*)$ is the Nash equilibrium of the game. In other words, the loss function ${\vl}$ satisfies the above assumption is the NE of the adversary.
\end{proof}
\subsection{Proof of Lemma \ref{lemma: improvement in Q value}}
\begin{lemma*}
Assume that $\forall \pi \in \Pi$, $\vd_{\pi}(s) >0$. Then if there exists $s \in S$ such that 
\[Q_{\pi^*, {\vl_t}} (s, \pi^*) > \argmin_{\pi \in \Pi} Q_{\pi^*, {\vl_t}}(s, \pi),\]
then with a new strategy $\pi_{t+1}(s)=\argmin_{a \in A} Q_{\pi^*,{\vl}_t} (s, a) \; \forall s \in S$, we have
\[\eta_{{\vl}_t}(\pi_{t+1}) < v.\]
\end{lemma*}
\begin{proof}
From the minimax theorem, we have:
\begin{equation*}
    \eta_{{\vl}_t}(\pi^*) \leq \eta_{{\vl}^*}(\pi^*)=v \;\ \forall {\vl} \in L .
\end{equation*}
From the proof of Lemma \ref{lemma about last-round convergence} we have:
\begin{equation*}
    \eta_{{\vl}_{t}}(\pi)-\eta_{{\vl}_t}(\pi^*)= \mathbb{E}_{s \in \vd_{\pi}}[Q_{\pi^*,{\vl}_t}(s, \pi)-Q_{\pi^*, {\vl}_t}(s,\pi^*)] \; \forall \pi \in \Pi.
\end{equation*}
Since the construction of the new strategy $\pi_{t+1}$ we have:
\[\mathbb{E}_{s \in \vd_{\pi_{t+1}}}[Q_{\pi^*,{\vl}_t}(s, \pi_{t+1})-Q_{\pi^*, {\vl}_t}(s,\pi^*)] < 0,\]
thus we have:
\[\eta_{{\vl}_{t}}(\pi)< \eta_{{\vl}_t}(\pi^*) \leq 0.\]
The proof is complete.
\end{proof}
\subsection{Proof of Theorem \ref{convergence result for omdp}}\label{proof of convergence result for omdp}
\begin{theorem*}
Assume that the adversary follows the MWU algorithm with non-increasing step size $\mu_t$ such that $\lim_{T \to \infty} \sum_{t=1}^T \mu_t =\infty$ and there exists $t' \in \mathbb{N}$ with $\mu_{t'} \leq \frac{1}{3}$. If the agent follows the Algorithm \ref{Last round convergence in OMDPs} then there exists a Nash equilibrium ${\vl}^*$ for the adversary such that $lim_{t \to \infty} {\vl}_t = {\vl}^*$ almost everywhere and $lim_{t \to \infty} \pi_t = \pi^*$.
\end{theorem*}
 In order to prove the above theorem, we first need the following lemma:
\begin{lemma*} \label{Relative Entropy lemma}
\begin{equation*}
\text{RE}\left({\vl}^*\|{{\vl}}_{2k-1}\right)-\text{RE}\left({\vl}^*\|{{\vl}}_{2k+1}\right) \geq   \frac{1}{2}\mu_{2k}\alpha_{2k}(v-\eta_{{\vl}_{2k-1}}(\hat{\pi}_{2k}))\;\; \forall k \in \mathbb{N}: \;\; 2k\geq t'.
\end{equation*}
\end{lemma*}
\begin{proof}
Using the definition of relative entropy we have:
\begin{equation}
    \begin{aligned}
       &\text{RE}\left({\vl}^*\|{{\vl}}_{2k-1}\right)-\text{RE}\left({\vl}^*\|{\vl}_{2k+1}\right) \\
       &= \left(\text{RE}(\vl^*||{\vl}_{2k+1})-\text{RE}(\vl^*||{\vl}_{2k})\right)+\left(\text{RE}(\vl^*||{\vl}_{2k})-\text{RE}(\vl^*||\vl_{2k-1})\right) \nonumber\\
   &=\left(\sum_{i=1}^n \vl^*(i)\log\left(\frac{\vl^*(i)}{{\vl}_{2k+1}(i)}\right)- \sum_{i=1}^n \vl^*(i)\log\left(\frac{\vl^*(i)}{{\vl}_{2k}(i)}\right)\right) + \\
   &\quad \left( \sum_{i=1}^n \vl^*(i)\log\left(\frac{\vl^*(i)}{{\vl}_{2k}(i)}\right)- \sum_{i=1}^n \vl^*(i)\log\left(\frac{\vl^*(i)}{{\vl}_{2k-1}(i)}\right)\right)\\
   &= \left(\sum_{i=1}^n \vl^*(i)\log\left(\frac{{\vl}_{2k}(i)}{{\vl}_{2k+1}(i)}\right) \right)+ \left(\sum_{i=1}^n \vl^*(i)\log\left(\frac{{\vl}_{2k-1}(i)}{{\vl}_{2k}(i)}\right)\right).
    \end{aligned}
\end{equation}
Following the update rule of the Multiplicative Weights Update algorithm we have:
\begin{subequations}
\begin{align}
        &\text{RE}(\vl^*||{\vl}_{2k+1})-\text{RE}(\vl^*||{\vl}_{2k-1}) \nonumber\\
        &=\left(-\mu_{2k} \langle \vl^*, \vd_{\pi_{2k}} \rangle +\log(Z_{2k})\right) + \left(-\mu_{2k-1} \langle \vl^*, \vd_{\pi_{2k}} \rangle+\log(Z_{2k-1})\right) \nonumber\\
        &\leq \left(-\mu_{2k} v + \log\left(\sum_{i=1}^n {\vl}_{2k}(i)e^{\mu_{2k} \langle {\ve}_i, \vd_{\pi_{2k}} \rangle }\right)\right)+ \left(-\mu_{2k-1} v +\log(Z_{2k-1})\right) \label{MWU1aa}\\
        &=\left(-\mu_{2k} v + \log\left(\sum_{i=1}^n {\vl}_{2k-1}(i)e^{\mu_{2k-1} \langle {\ve}_i, \vd_{\pi_{2k-1}} \rangle }e^{\mu_{2k} \langle {\ve}_i, \vd_{\pi_{2k}} \rangle}\right)-\log(Z_{2k-1})\right) \nonumber\\
        & + \left(-\mu_{2k-1} v +\log(Z_{2k-1})\right),\nonumber
\end{align}
\end{subequations}
where Inequality (\ref{MWU1aa}) is due to the fact that $\langle \vl^*, \vd_{\pi} \rangle \geq v \; \forall \pi$. Thus, 
\begin{subequations}
    \begin{align}
    &\text{RE}(\vl^*||{\vl}_{2k+1})-\text{RE}(\vl^*||{\vl}_{2k-1}) \nonumber\\
       &\leq \left(-\mu_{2k} v +\log\left(\sum_{i=1}^n {\vl}_{2k-1}(i)e^{\mu_{2k-1} \langle {\ve}_i, \vd_{\pi_{2k-1}} \rangle }e^{\mu_{2k} \langle {\ve}_i, \vd_{\pi_{2k}} \rangle}\right) \right)-\mu_{2k-1} v \nonumber \\
        &\leq \left(-\mu_{2k} v +\log\left(\sum_{i=1}^n {\vl}_{2k-1}(i)e^{\mu_{2k-1} v}e^{\mu_{2k} \langle {\ve}_i, \vd_{\pi_{2k}} \rangle}\right) \right) - \mu_{2k-1} v \label{MWU1b}\\
        &= -\mu_{2k} v + \log\left(\sum_{i=1}^n {\vl}_{2k-1}(i) e^{\mu_{2k} \langle {\ve}_i, \vd_{\pi_{2k}} \rangle}\right) \nonumber,
    \end{align}
\end{subequations}
where Inequality (\ref{MWU1b}) is the result of the inequality:
\[\langle \vl, \vd_{\pi^*} \rangle \leq v \;\; \forall \vl.\]
Now, using the update rule of Algorithm \ref{Last round convergence in OMDPs}
\[\vd_{\pi_{2k}}= (1-\alpha_{2k}) \vd_{\pi^*}+ \alpha_{2k} \vd_{\hat{\pi}_{2k}},\]
we have
\begin{subequations}
    \begin{align}
    &\text{RE}(\vl^*||{\vl}_{2k+1})-\text{RE}(\vl^*||{\vl}_{2k-1}) \nonumber\\
    & \leq -\mu_{2k} v +  \log\left(\sum_{i=1}^n {\vl}_{2k-1}(i) e^{\mu_{2k} ((1-\alpha_{2k})\langle {\ve}_i, \vd_{\pi^*} \rangle + \alpha_{2k} \langle {\ve}_i, \vd_{\hat{\pi}_{2k}} \rangle)}\right) \nonumber \\
    & \leq -\mu_{2k} \alpha_{2k} v +  \log\left(\sum_{i=1}^n {\vl}_{2k-1}(i) e^{\mu_{2k} \alpha_{2k} \langle {\ve}_i, \vd_{\hat{\pi}_{2k}} \rangle}\right) \nonumber.
    \end{align}
\end{subequations}
Denote $f(\vl_{2k-1})= \langle \vl_{2k-1}, \vd_{\hat{\pi}_{2k}} \rangle$, we then have
\begin{subequations}\label{MWU3 proof 1}
\begin{align}
&\text{RE}(\vl^*||{\vl}_{2k+1})-\text{RE}(\vl^*||{\vl}_{2k-1}) \nonumber \\
& \leq -\mu_{2k} \alpha_{2k} v +\log\left(\sum_{i=1}^n {\vl}_{2k-1}(i) e^{\mu_{2k}\alpha_{2k} \langle {\ve}_i, \vd_{\hat{\pi}_{2k}} \rangle}\right) \nonumber\\
& = \mu_{2k} \alpha_{2k} (1-v) + \log\left(\sum_{i=1}^n {\vl}_{2k-1}(i) e^{-\mu_{2k}\alpha_{2k} (1-\langle {\ve}_i, \vd_{\hat{\pi}_{2k}} \rangle)}\right) \label{MWU3 proof 1a}\\
&\leq \mu_{2k} \alpha_{2k}(1-v) + \log\left(\sum_{i=1}^n {\vl}_{2k-1}(i)(1-(1-e^{-\mu_{2k}\alpha_{2k}})(1-{\langle {\ve}_i, \vd_{\hat{\pi}_{2k}} \rangle}))\right)\label{MWU3 proof 1b} \\
&=\mu_{2k} \alpha_{2k}(1-v) + \log\left(1-(1-e^{-\mu_{2k}\alpha_{2k}})(1-\langle {\vl}_{2k-1}, \vd_{\hat{\pi}_{2k}} \rangle ) \right)\nonumber\\
&\leq \mu_{2k} \alpha_{2k} (1-v) - (1-e^{-\mu_{2k}\alpha_{2k}})(1-\langle {\vl}_{2k-1}, \vd_{\hat{\pi}_{2k}} \rangle ) \label{MWU3 proof 1c}\\
&=\mu_{2k} \alpha_{2k} (1-v) -(1-e^{-\mu_{2k}\alpha_{2k}}) (1-f({\vl}_{2k-1}))\nonumber,
\end{align}
\end{subequations}
Equation (\ref{MWU3 proof 1a}) is created by adding and subtracting $\mu_{2k}\alpha_{2k}$ on the first and second terms. 

Inequalities $(\ref{MWU3 proof 1b}, \ref{MWU3 proof 1c})$ are due to
\[\beta^x \leq 1-(1-\beta)x \quad \forall \beta \geq 0 \; \vl \in [0,1] \; \text{and} \; \log(1-x) \leq -x \; \; \forall x < 1.\]
We can develop Inequality (\ref{MWU3 proof 1c}) further as 
\begin{subequations}
\begin{align}
&\text{RE}(\vl^*||{\vl}_{2k+1})-\text{RE}(\vl^*||{\vl}_{2k-1}) \nonumber\\
&\leq \mu_{2k} \alpha_{2k} (1-v) -\left(1-e^{-\mu_{2k}\alpha_{2k}}\right)(1-f({\vl}_{2k-1})) \nonumber\\
&\leq \mu_{2k} \alpha_{2k} (1-v) -\left(1-\left(1-\mu_{2k}\alpha_{2k} +\frac{1}{2}(\mu_{2k}\alpha_{2k})^2\right)\right)(1-f({\vl}_{2k-1}))\label{MWU3 proof 2a}\\
&=\mu_{2k}\alpha_{2k}(f({\vl}_{2k-1})-v) +\frac{1}{2}(\mu_{2k} \alpha_{2k})^2 (1- f({\vl}_{2k-1}))\nonumber\\
&\leq \mu_{2k}\alpha_{2k}(f({\vl}_{2k-1})-v)+\frac{1}{2}\mu_{2k}\alpha_{2k}\mu_{2k} \frac{v-f({\vl}_{2k-1})}{\beta}(1-f({\vl}_{2k-1})) \label{MWU3 proof 2c}\\
&\leq  \mu_{2k}\alpha_{2k}(f({\vl}_{2k-1})-v)+\frac{1}{2}\mu_{2k}\alpha_{2k}\ (v-f({\vl}_{2k-1})) \label{MWU3 proof 2b}\\
&=-\frac{1}{2}\mu_{2k}\alpha_{2k}(v-f({\vl}_{2k-1})) \leq 0 \nonumber .   
\end{align}
\end{subequations}
Here, Inequality ($\ref{MWU3 proof 2a} $) is due to $e^x \leq 1+x+\frac{1}{2}x^2\;\; \forall \vl \in [-\infty,0]$, Inequality (\ref{MWU3 proof 2c}) comes from the definition of $\alpha_{t}$:
\[\alpha_t = \frac{v-f({\vl}_{2k-1})}{\beta}, \; \beta \geq 1-f(\vl), \; f({\vl}_{2k-1}) \leq 1.\]
Finally, Inequality ($\ref{MWU3 proof 2b} $) comes from the choice of k at the beginning of the proof, i.e., $\mu_{2k} \leq 1$.
\end{proof}

Now we can prove Theorem \ref{convergence result for omdp}:
\begin{proof}
We focus on the regret analysis with respect to the stationary distribution $\vd_{\pi_t}$.
Let $\vl^*$ be a minimax equilibrium strategy of the adversary ($\vl^*$ may not be unique).
Following the above Lemma, for all $k \in \mathbb{N}$ such that $2k\geq t'$, we have
\begin{equation}\label{MWU3 1st important step}
\text{RE}(\vl^*\|{\vl}_{2k+1})-\text{RE}(\vl^*\|{\vl}_{2k-1}) \leq   -\frac{1}{2}\mu_{2k}\alpha_{2k}(v-f({\vl}_{2k-1})),
\end{equation}
where we denote $f(\vl_{2k-1})= \langle \vl_{2k-1}, \vd_{\hat{\pi}_{2k}} \rangle$. Thus, the sequence of relative entropy $\text{RE}(\vl^*\|{\vl}_{2k-1})$ is non-increasing  for all $k \geq \frac{t'}{2}$. As the sequence is bounded below by 0, it has a limit for any minimax equilibrium strategy $\vl^*$.
Since $t'$ is a finite number and $\sum_{t=1}^\infty \mu_t=\infty$, we have $\sum_{t=t'}^\infty \mu_t=\infty$. Thus, 
    \[\lim_{T\to \infty}\sum_{k=\left \lceil{\frac{t'}{2}}\right \rceil}^{T}\mu_{2k} = \infty.\]
We will prove that $\forall \epsilon >0,\; \exists h \in \mathbb{N}$ such that when the agent follows Algorithm \ref{Last round convergence in OMDPs} and the adversary follows MWU algorithm, the adversary will play strategy ${\vl}_h$ at round h and $v-f({\vl}_h) \leq \epsilon$. 
In particular, we prove this by contradiction. That is, suppose that $\exists \epsilon >0$ such that $\forall h \in \mathbb{N},\; v-f({\vl}_h) > \epsilon$. Then $\forall k \in \mathbb{N}$, 
\[
\alpha_{2k}(v-f({\vl}_{2k-1}))= \frac{(v-f({\vl}_{2k-1}))^2}{\beta} > \frac{\epsilon^2}{\beta}.\]
Let $k$ vary from $\left \lceil{\frac{t'}{2}}\right \rceil$  to T in Equation (\ref{MWU3 1st important step}). By summing over $k$, we obtain: 
\begin{equation*}
    \begin{aligned}
    \text{RE}(\vl^*\|{\vl}_{2T+1})&\leq \text{RE}(\vl^*\|{\vl}_{t'}) - \frac{1}{2}\sum_{k=\left \lceil{\frac{t'}{2}}\right \rceil}^{T} \mu_{2k}\alpha_{2k}(v-f({\vl}_{2k-1})) \\
    &\leq \text{RE}(\vl^*\|{\vl}_{t'})-\frac{1}{2}\frac{e^2}{\beta}\sum_{k=\left \lceil{\frac{t'}{2}}\right \rceil}^{T}\mu_{2k}.
    \end{aligned}
\end{equation*}
Since $\lim_{T\to \infty}\sum_{k=\left \lceil{\frac{t'}{2}}\right \rceil}^{T}\mu_{2k}= \infty$ and $\text{RE}(\vl^*\|{\vl}_{T+1})\geq 0$, it contradicts our assumption about $\forall h \in \mathbb{N},\; v-f({\vl}_h) > \epsilon$. 

Now, we take a sequence of $\epsilon_k>0$ such that $\lim_{k \to \infty}\epsilon_k=0$. Then for each k, there exists ${\vl}_{t_k}\in \Delta_n$ such that $v-\epsilon_k \leq f({\vl}_{t_k})\leq v.$ As $\Delta_n$ is a compact set and ${\vl}_{t_k}$ is bounded then following the Bolzano-Weierstrass theorem, there is a convergence subsequence ${\vl}_{\bar{t}_k}$. The limit of that sequence, ${\bar{\vl}}^*$, is a minimax equilibrium strategy of the row player (since $f({\bar{\vl}}^*)=f(\lim_{k \to \infty} {\vl}_{\bar{t}_k})=\lim_{k \to \infty}f({\vl}_{\bar{t}_k})=v$). Combining with the fact that $\text{RE}({{\bar{\vl}}^*}\|{\vl}_{2k-1})$ is non-increasing for $k\geq \left \lceil{\frac{t'}{2}}\right \rceil$ and $\text{RE}({\bar{\vl}}^*\|{\bar{\vl}}^*) =0$, we have $\lim_{k \to \infty}\text{RE}({\bar{\vl}}^*\|{\vl}_{2k-1})=0$. We also note that 
\begin{equation*}
\begin{aligned}
    \text{RE}(\bar{\vl}^*\|{\vl}_{2k})- \text{RE}(\bar{\vl}^*\|{\vl}_{2k-1})
    &=-\mu_{2k-1}
    \langle \bar{\vl}^*,\vd_{\pi_{2k-1}} \rangle+\log\left(\sum_{i=1}^n {{\vl}_{2k-1}}(i)e^{\mu_{2k-1}{\langle {\ve}_i},\vd_{\pi^*} \rangle}\right)\\
    &\leq -\mu_{2k-1}v+\log\left(\sum_{i=1}^n {{\vl}_{2k-1}}(i)e^{\mu_{2k-1}v}\right) =0,
\end{aligned}
\end{equation*}
following the fact that $\langle {\bar{\vl}^*}, \vd_{\pi} \rangle \geq v$ for all $\pi \in \Pi$ and $\langle \vl, \vd_{\pi^*} \rangle \leq v$ for all $\vl$. Thus, we have $\lim_{k \to \infty}\text{RE}({\bar{\vl}}^*\|{\vl}_{2k})=0$ as well. Subsequently, 
$\lim_{t \to \infty}\text{RE}({\bar{\vl}}^*\|{\vl}_{t}) =0$, which concludes the proof. 
\end{proof}
\end{document}


\maketitle


\tableofcontents
\clearpage
\appendix
\section{MDP-Expert against Strategic Adversary}
First we provide the following lemmas and proposition:
\begin{lemma*}[Lemma 3.3 in \cite{even2009online}]
For all loss function $\vl$ in $[0,1]$ and policies $\pi$, $Q_{\vl,\pi}(s,a) \leq 3\tau$.
\end{lemma*}
\begin{lemma*}[Lemma 1 from \cite{neu2013online}]{\label{neu2013 lemma 1}}
Consider a uniformly ergodic OMDPs with mixing time $\tau$ with losses ${\vl}_t \in [0,1]^\vd$. Then, for any $T > 1$ and policy $\pi$ with stationary distribution $\vd_{\pi}$, it holds that
\begin{equation*}
   \sum_{t=1}^T | \langle {\vl}_t, \vd_{\pi} -\vv_t^{\pi} \rangle | \leq 2 \tau +2 .
\end{equation*}
\end{lemma*}
This lemma guarantees that the performance of a policy's stationary distribution is similar to the actual performance of the policy in the case of a fixed policy.

In the other case of non-fixed policy, the following lemma bound the performance of policy's stationary distribution of algorithm $A$ with the actual performance:
\begin{lemma*}[Lemma 5.2 in \cite{even2009online}]
Let $\pi_1, \pi_2,\dots$ be the policies played by MDP-E algorithm $\mathcal{A}$ and let $\tilde{\vd}_{\mathcal{A},t},\;\tilde{\vd}_{\pi_t} \in [0,1]^{|S|}$ be the stationary state distribution. Then,
\[\|\tilde{\vd}_{\mathcal{A},t}-\tilde{\vd}_{\pi_t}\|_1\leq 2\tau^2 \sqrt{\frac{\log(|A|)}{t}}+2e^{-t/\tau}.\]
\end{lemma*}
From the above lemma, since the policy's stationary distribution is a combination of stationary state distribution and the policy's action in each state, it is easy to show that:
\[\|\vv_t-\vd_{\pi_t}\|_1 \leq \|\tilde{\vd}_{\mathcal{A},t}-\tilde{\vd}_{\pi_t}\|_1\leq 2\tau^2 \sqrt{\frac{\log(|A|)}{t}}+2e^{-t/\tau}. \]
\begin{proposition}\label{MWU property}
For the MWU algorithm~\cite{freund1999adaptive} with appropriate $\mu_t$, we have:
\[R_T(\pi)= \mathbb{E } \left[\sum_{t=1}^T \vl_t(\pi_t)\right]- \mathbb{E} \left[\sum_{t=1}^T \vl_t(\pi)\right] \leq M \sqrt{\frac{T \log(n)}{2}},\]
where $\| \vl_t(.)\| \leq M$. Furthermore, the strategy $\vpi_t$ does not change quickly: $\|\vpi_t-\vpi_{t+1}\| \leq \sqrt{\frac{\log(n)}{t}}.$
\end{proposition}
\begin{proof}
For a fixed $T$, if the loss function satisfies $\vl_t(.)\| \leq 1$ then by setting $\mu_t=\sqrt{\frac{8 \log(n)}{T}}$, following Theorem 2.2 in \cite{cesa2006prediction} we have:
\begin{equation}\label{MWU bound 1st equation}
    R_T(\pi)= \mathbb{E } \left[\sum_{t=1}^T \vl_t(\pi_t)\right]- \mathbb{E} \left[\sum_{t=1}^T \vl_t(\pi)\right] \leq 1 \sqrt{\frac{T \log(n)}{2}}.
\end{equation}
Thus, in the case where $\vl_t(.)\| \leq M$, by scaling up both sides by $M$ in Equation (\ref{MWU bound 1st equation}) we have the first result of the Proposition \ref{MWU property}. For the second part, follow the updating rule of MWU we have:
\begin{subequations}
    \begin{align}
        \pi_{t+1}(i)-\pi_t(i)&=\pi_t(i)\left(\frac{\exp(-\mu_t \vl_t(\va^i))}{\sum_{i=1}^n \vpi_t(i)\exp(-\mu_t \vl_t(\va^i))}-1\right) \nonumber\\
        &\approx \pi_t(i) \left(\frac{1-\mu_t\vl_t(\va^i)}{1-\mu_t\vl_t(\pi_t)}-1\right) \label{mwu bound 2nd equation}\\
        &=\mu_t \pi_t(i) \frac{\vl_t(\pi_t)-\vl_t(\va^i)}{1-\mu_t\vl_t(\pi_t)} = \mathcal{O}(\mu_t),\nonumber
    \end{align}
\end{subequations}
where we use the approximation $e^x\approx 1+x$ for small $x$ in Equation (\ref{mwu bound 2nd equation}). Thus, the difference in two consecutive strategies $\pi_t$ will be proportional to the learning rate $\mu_t$, which is set to be $\mathcal{O}\big(\sqrt{\frac{\log(n)}{t}}\big)$. Similar result can be found in Proposition 1 in \cite{even2009online}.
\end{proof}
Now, we are ready to prove the lemmas and theorems in the paper:
\subsection{Proof of  Lemma \ref{regret of adversary with respect to stationary distribution}}\label{proof of lemma: regret of adversary with respect to stationary distribution}
\begin{lemma*}
Under MDP-E played by the agent, the external-regret of the adversary in Assumption \ref{assumption: Strategic adversary} can be expressed as:
\begin{equation*}
\begin{aligned}
    R_T({\vl})&=\mathbb{E}_{X,A}\left[ \sum_{t=1}^T {\vl}(X_t,A_t)\right]- \mathbb{E}_{X,A}\left[\sum_{t=1}^T {\vl}_t^{\pi_t}(X_t,A_t)\right] \\
    &=\sum_{t=1}^T  \langle \vl, \vd_{\pi_t} \rangle - \sum_{t=1}^T \langle \vl_t, \vd_{\pi_t} \rangle +\mathcal{O}\big(\tau^2 \sqrt{T \log(|A|)}\big).
\end{aligned}
\end{equation*}
\end{lemma*}
\begin{proof}
It is sufficient to show that for any sequence of $\vl_t$
\[\mathbb{E}_{X,A}\left[\sum_{t=1}^T {\vl}_t(X_t,A_t)\right] -\sum_{t=1}^T \langle \vl_t, \vd_{\pi_t} \rangle= \mathcal{O} (\tau^2 \sqrt{T \log(|A|)}),\]
where $\vl_t$ denotes the loss vector of the adversary when the agent follows $\pi_1,\pi_2,\dots$ (i.e., the same as $\vl_t^{\pi_t}$).

Using the consequence of Lemma 5.2 in \cite{even2009online}, for any sequence of $\vl_t$ we have:
\begin{equation}
    \begin{aligned}
    &\mathbb{E}_{X,A}\left[\sum_{t=1}^T {\vl}_t(X_t,A_t)\right] -\sum_{t=1}^T \langle \vl_t, \vd_{\pi_t} \rangle\\
    &=\sum_{t=1}^T  \langle {\vl}_t, \vv_t-\vd_{\pi_t} \rangle \leq \sum_{t=1}^T | \langle {\vl}_t, \vv_t-\vd_{\pi_t} \rangle |\leq \sum_{t=1}^T \|\vv_t-\vd_{\pi_t} \|_1 \\
    &\leq \sum_{t=1}^T 2\tau^2 \sqrt{\frac{\log(|A|)}{t}}+2e^{-t/\tau} \\
    &\leq 4\tau^2 \sqrt{T\log(|A|)}+2(1+\tau)
    = \mathcal{O} \big(\tau^2 \sqrt{T \log(|A|)}\big).
    \end{aligned}
\end{equation}
The proof is complete.
\end{proof}
\subsection{Proof of Lemma \ref{lemma when agent uses a fixed strategy}}\label{proof of lemma: lemma when agent uses a fixed strategy}
\begin{lemma*}
Suppose the agent follows a fixed stationary strategy $\pi$, then the adversary will converge to the best response to the fixed stationary strategy and
\[\sum_{t=1}^T \langle {\vl}_t^{\pi}, \vd_\pi \rangle \geq T v - \sqrt{\frac{T \log(L)}{2}}.\]
\end{lemma*}
\begin{proof}
From Lemma \ref{regret of adversary with respect to stationary distribution}, if the adversary follows a no-regret algorithm to achieve good performance in Assumption \ref{assumption: Strategic adversary}, then the adversary must follow a no-regret algorithm with respect to the policy's stationary distribution. Without loss of generality, we can assume that the adversary follows the Multiplicative Weight Update with respect to the policy's stationary distribution $\vd_{\pi}$. Then follow the property of Multiplicative Weight Update in online linear problem, we have:
\[\max_{{\vl} \in L} \langle {\vl},\vd_{\pi} \rangle-\frac{1}{T} \sum_{t=1}^T \langle {\vl}_t^{\pi}, \vd_{\pi} \rangle \leq \sqrt{\frac{\log(L)}{2 T}}.\]
From the famous minimax theorem~\cite{neumann1928theorie} we also have:
\[\max_{{\vl} \in L} \langle {\vl}, \vd_\pi \rangle \geq \min_{\vd_{\pi} \in \vd_{\Pi}}\max_{{\vl} \in L}\langle {\vl}, \vd_\pi \rangle=v.\]
Thus we have:
\begin{equation}
    \begin{aligned}
    \sum_{t=1}^T \langle {\vl}_t^{\pi}, \vd_\pi \rangle &\geq T \max_{{\vl} \in L} \langle {\vl}, \vd_\pi \rangle - \sqrt{\frac{T \log(L)}{2}} \\
    &\geq T v - \sqrt{\frac{T \log(L)}{2}}.
    \end{aligned}
\end{equation}
\end{proof}
\subsection{Proof of Theorem \ref{Theorem of bounding regret}}
\begin{theorem*}
Suppose the agent follows MDP-E Algorithm \ref{MDP-expert}, then the agent's regret in Equation (\ref{regret for the agent}) will be bounded by
\begin{equation*}
    \begin{aligned}
   R_T(\pi)=\mathcal{O}(\sqrt{T \log(L)}+ \tau^2\sqrt{ T \log(|A|)}).
    \end{aligned}
\end{equation*}
\end{theorem*}
\begin{proof}
Using the consequence of Lemma 5.2 in \cite{even2009online}, for any sequence of $\vl_t$ we have:
\begin{equation}
    \begin{aligned}
    &\sum_{t=1}^T  \langle {\vl}_t, \vv_t-\vd_{\pi_t} \rangle \leq \sum_{t=1}^T | \langle {\vl}_t, \vv_t-\vd_{\pi_t} \rangle |\leq \sum_{t=1}^T \|\vv_t-\vd_{\pi_t} \|_1 \\
    &\leq \sum_{t=1}^T 2\tau^2 \sqrt{\frac{\log(|A|)}{t}}+2e^{-t/\tau} \\
    &\leq 4\tau^2 \sqrt{T\log(|A|)}+2(1+\tau)
    = \mathcal{O} \big(\tau^2 \sqrt{T \log(|A|)}\big).
    \end{aligned}
\end{equation}
Thus we have
\begin{equation}
    \sum_{t=1}^T |\langle {\vl}_t, \vv_t-\vd_{\pi_t} \rangle| \leq 2(1+\tau) +4\tau^2 \sqrt{T \log(|A|)}.
\end{equation}
Furthermore, if the agent uses a fixed policy $\pi$ then by Lemma \ref{lemma when agent uses a fixed strategy}, we have:
\[|\sum_{t=1}^T \langle {\vl}_t, \vd_{\pi} -\vv_t^{\pi} \rangle| \leq 2 \tau +2.\]
Since the agent uses MDP-E, a no-external regret algorithm, following the same argument in Theorem 4.1 in \cite{even2009online} we have:
\[\sum_{t=1}^T \langle {\vl}_t^{\pi_t}, \vd_{\pi_t} \rangle\leq T \min_{\vd_{\pi}}\langle \hat{{\vl}}, \vd_{\pi} \rangle+ 3\tau \sqrt{\frac{T\log(|A|)}{2}} \leq Tv+3\tau \sqrt{\frac{T\log(|A|)}{2}}.\]
Along with Lemma 2, we have:
\begin{equation*}
    \begin{aligned}
     \sum_{t=1}^T \langle {\vl}_t^{\pi_t}, \vd_{\pi_t} \rangle-\sum_{t=1}^T \langle {\vl}_t^{\pi}, \vd_{\pi} \rangle \leq \left(Tv+3\tau \sqrt{\frac{T\log(|A|)}{2}}\right)-\left(T v - \sqrt{\frac{T \log(L)}{2}}\right)\\
     =3\tau \sqrt{\frac{T\log(|A|)}{2}}+\sqrt{\frac{T \log(L)}{2}}.
    \end{aligned}
\end{equation*}

Using the above two inequalities, we can bound the regret of the agent with respect to the regret of the policy's stationary distribution:
\begin{equation}
    \begin{aligned}
     R_T(\pi)&=\mathbb{E}_{x,a}\left[\sum_{t=1}^T {\vl}_t^{\pi_t}(x_t,a_t)\right]- \mathbb{E}_{x,a}\left[ \sum_{t=1}^T {\vl}_t^{\pi}(x_t^{\pi},a_t^{\pi})\right] \\
    &= \sum_{t=1}^T \langle {\vl}_t^{\pi_t}, \vv_t \rangle - \sum_{t=1}^T \langle {\vl}_t^{\pi}, \vv_t^{\pi} \rangle \\
    &\leq \sum_{t=1}^T \left(\langle {\vl}_t^{\pi_t}, \vd_{\pi_t} \rangle+ |\langle {\vl}_t^{\pi_t}, \vv_t-\vd_{\pi_t} \rangle| \right) - \sum_{t=1}^T \left(\langle {\vl}_t^{\pi}, \vd_{\pi} \rangle -|\langle {\vl}_t^{\pi},\vv_t^{\pi}- \vd_{\pi} \rangle|\right) \\
    &\leq \sum_{t=1}^T \langle {\vl}_t^{\pi_t}, \vd_{\pi_t} \rangle-\sum_{t=1}^T \langle {\vl}_t^{\pi}, \vd_{\pi} \rangle +
    2(1+\tau) +4\tau^2 \sqrt{T \log(|A|)}+ 2+ 2\tau \\
    &\leq \sqrt{\frac{T \log(L)}{2}}+3 \tau \sqrt{\frac{T \log(|A|)}{2}}+ 4(1+\tau)+4\tau^2 \sqrt{T \log(|A|)} \\
    &=\mathcal{O}(\sqrt{T \log(L)}+ \tau^2\sqrt{ T \log(|A|)}).
    \end{aligned}
\end{equation}
The proof is complete.
\end{proof}
\subsection{Proof of Theorem \ref{theorem: average convergence to NE}}
\begin{theorem*}
Suppose the agent follows MDP-E, then the average strategies of both the agent and the adversary will converge to the $\epsilon_t$-Nash equilibrium of the game with the rate:
\[\epsilon_t=\sqrt{\frac{\log(L)}{2T}}+3 \tau \sqrt{\frac{\log(|A|)}{2T}}\]
\end{theorem*}
\begin{proof}
Since the agent and the adversary use no-regret algorithms with respect to the policy's stationary distribution, we can use the property of regret bound in normal-form game to apply. Thus we have:
\begin{equation*}
    \begin{aligned}
    \max_{{\vl} \in L} \langle {\vl},\hat{\vd_{\pi}} \rangle-\frac{1}{T} \sum_{t=1}^T \langle {\vl}_t^{\pi_t}, \vd_{\pi_t} \rangle \leq \sqrt{\frac{\log(L)}{2 T}},\\
    \frac{1}{T}\sum_{t=1}^T \langle {\vl}_t^{\pi_t}, \vd_{\pi_t} \rangle-\min_{\vd_{\pi}}\langle \hat{{\vl}}, \vd_{\pi} \rangle \leq 3\tau \sqrt{\frac{\log(|A|)}{2T}},
    \end{aligned}
\end{equation*}
where $\hat{\vd_{\pi}}=\frac{1}{T} \sum_{t=1}^T \vd_{\pi_t}$ and $\hat{{\vl}}=\frac{1}{T} \sum_{t=1}^T \vl_t^{\pi_t}$.
From this, we can prove that
\begin{equation*}
    \begin{aligned}
    \langle \hat{{\vl}}, \hat{\vd_{\pi}} \rangle &\geq \min_{\vd_{\pi}}\langle \hat{{\vl}}, \vd_{\pi} \rangle \geq \frac{1}{T}\sum_{t=1}^T \langle {\vl}_t^{\pi_t}, \vd_{\pi_t} \rangle- 3\tau \sqrt{\frac{\log(|A|)}{2T}} \\
    &\geq \max_{{\vl} \in L} \langle {\vl},\hat{\vd_{\pi}} \rangle-\sqrt{\frac{\log(L)}{2 T}}-3 \tau \sqrt{\frac{\log(|A|)}{2T}},
    \end{aligned}
\end{equation*}
and,
\begin{equation*}
    \begin{aligned}
    \langle \hat{{\vl}}, \hat{\vd_{\pi}} \rangle &\leq \max_{{\vl} \in L} \langle {\vl},\hat{\vd_{\pi}} \rangle \leq \frac{1}{T}\sum_{t=1}^T \langle {\vl}_t^{\pi_t}, \vd_{\pi_t} \rangle +\sqrt{\frac{\log(L)}{2 T}} \\
    &\leq \min_{\vd_{\pi}}\langle \hat{{\vl}}, \vd_{\pi} \rangle+ 3 \tau \sqrt{\frac{\log(|A|)}{2T}} +\sqrt{\frac{\log(L)}{2 T}}.
    \end{aligned}
\end{equation*}
Thus, with $\epsilon_t=\sqrt{\frac{\log(L)}{2T}}+3 \tau \sqrt{\frac{\log(|A|)}{2T}}$, we derive
\[\max_{{\vl} \in L} \langle {\vl},\hat{\vd_{\pi}} \rangle-\epsilon_t \leq \langle \hat{{\vl}}, \hat{\vd_{\pi}} \rangle \leq \min_{\vd_{\pi}}\langle \hat{{\vl}}, \vd_{\pi} \rangle +\epsilon_t.\]
By definition, $(\hat{{\vl}}, \hat{\vd_{\pi}})$ is $\epsilon_t$-Nash equilibrium.
\end{proof}
\section{MDP-Online Oracle Expert Algorithm}
\subsection{Proof of Lemma \ref{lemma: small support of NE}}
\begin{lemma*}
Suppose that the loss function is sampled from a continuous distribution and the size of the loss function set is small compared to the agent's pure strategy set (i.e., $|A|^{|S|} \gg L$). Let $(\vd_{\vpi^*}, \vl^*)$ be a Nash equilibrium of the game of size ${ |A|^{|S|} \times L}$. Then we have:
\begin{equation*}
\max\big(|\operatorname{supp}(\vd_{\vpi^*})|, |\operatorname{supp}(\vl^*)|\big) \leq L.
\end{equation*}
\vspace{-15pt}
\end{lemma*}
\begin{proof}
Within the set of all zero-sum games, the set of zero-sum games with non-unique equilibrium has Lebesgue measure zero~\cite{bailey2018multiplicative}. Thus, if the loss function 's entries are sampled from a continuous distribution, then with probability one, the game has a unique NE. Following the Theorem 1 in \cite{bohnenblust1950solutions} for game with unique NE, we have:
\[|\operatorname{supp}(\vd_{\vpi^*})|=|\operatorname{supp}(\vl^*)|.\]
We also note that the support size of the NE can not exceed the size of the game:
\[|\operatorname{supp}(\vd_{\vpi^*})| \leq |A|^{|S|}; \;\;|\operatorname{supp}(\vl^*)|\leq L.\]
Thus we have:
\[\max\big(|\operatorname{supp}(\vd_{\vpi^*})|, |\operatorname{supp}(\vl^*)|\big)= |\operatorname{supp}(\vl^*)| \leq  L.\]
\end{proof}
\subsection{Proof of Theorem \ref{MDP-OOE regret with stationary distribution}}
\begin{theorem*}
Suppose the agent uses the Algorithm \ref{MDP online oracle expert}, then the regret with respect to the stationary distribution will be bounded by:
\[\sum_{t=1}^T \langle {\vl}_t^{\pi_t}, \vd_{\pi_t}\rangle-\langle {\vl}_t^{\pi_t}, \vd_{\pi}\rangle \leq 3 \tau \left( \sqrt{2 {T k \log(k)}} +\frac{k\log(k)}{8} \right),\]
where $k$ is the number of time window.
\end{theorem*}
\begin{proof}
We first have:
\begin{equation*}
    \begin{aligned}
        \mathrm{E}_{s\sim \vd_{\pi}}[Q_{\pi_t, \vl_t}(s,\pi)]&=\mathrm{E}_{s\sim \vd_{\pi},a\sim \pi}[Q_{\pi_t, \vl_t}(s,a)]\\
        &=\mathrm{E}_{s\sim\vd_{\pi},a\sim\pi}[\vl_t(s,a)-\eta_{\vl_t}(\pi_t)+\mathrm{E}_{s'\sim P_{s,a}}[Q_{\pi_t,\vl_t}(s',\pi_t)]]\\
        &=\mathrm{E}_{s\sim\vd_{\pi},a\sim\pi}[\vl_t(s,a)]-\eta_{\vl_t}(\pi_t)+\mathrm{E}_{s\sim \vd_{\pi}}[Q_{\pi_t,\vl_t}(s,\pi_t)]\\
        &=\eta_{\vl_t}(\pi)-\eta_{\vl_t}(\pi_t)+\mathrm{E}_{s\sim \vd_{\pi}}[Q_{\pi_t,\vl_t}(s,\pi_t)].
    \end{aligned}
\end{equation*}
Thus we have:
\begin{equation}\label{regret bound of algorithm2:equal1}
\langle {\vl}_t^{\pi_t}, \vd_{\pi}\rangle-\langle {\vl}_t^{\pi_t}, \vd_{\pi_t}\rangle=\sum_{s \in S} \vd_{\pi}(s)\left(Q_{\pi_t,{\vl}_t}(s,\pi)-Q_{\pi_t,{\vl}_t}(s,\pi_t)\right).
\end{equation}
Let $T_1, T_2,..., T_k$ be the time window that the $\text{BR}(\bar{{\vl}})$ does not change. Then in that time window, the best response to the current $\bar{{\vl}}$ is inside the current pure strategies set in each state. In each time window, following Equation (\ref{regret bound of algorithm2:equal1}) we have:
\begin{equation}
    \begin{aligned}
    \sum_{t=|\bar{T}_i|}^{\bar{T}_{i+1}} \langle {\vl}_t^{\pi_t}, \vd_{\pi}\rangle-\langle {\vl}_t^{\pi_t}, \vd_{\pi_t}\rangle= \sum_{s \in S} \vd_{\pi}(s) \sum_{t=|\bar{T}_i|}^{\bar{T}_{i+1}} \left(Q_{\pi_t,{\vl}_t}(s,\pi)-Q_{\pi_t,{\vl}_t}(s,\pi_t)\right).
    \end{aligned}
\end{equation}
Since during each time window, the pure strategies $A^s_t$ does not change, thus we have:
\begin{equation*}
    \begin{aligned}
    \min_{\pi \in \Pi} \sum_{t=|\bar{T}_i|}^{\bar{T}_{i+1}} \langle {\vl}_t^{\pi_t}, \vd_{\pi}\rangle = \min_{\pi \in A^s_{|\bar{T}_i|}} \sum_{t=|\bar{T}_i|}^{\bar{T}_{i+1}} \langle {\vl}_t^{\pi_t}, \vd_{\pi}\rangle .  
    \end{aligned}
\end{equation*}
Thus, in each state $s$ of a time window, the agent only needs to minimize the loss with respect to the action in $A^s_{|\bar{T}_i|}$. Put it differently, the expert algorithm in each state does not need to consider all pure action in each state, but just the current effective strategy set. For a time window $T_i$, if the agent uses a no-regret algorithm with the current effective action set and the learning rate $\mu_t=\sqrt{8\log(i)/t}$, then the regret in each state will be bounded by~\cite{cesa2006prediction}:
\begin{equation*}
    3 \tau \left( \sqrt{2 {|T_i| \log(A^s_t)}} +\frac{\log(A^s_t)}{8} \right) \leq 3 \tau \left( \sqrt{2 {|T_i| \log(i)}} +\frac{\log(i)}{8} \right).
\end{equation*}
Thus, the regret in this time interval will also be bounded by:
\begin{equation}\label{equation: regret bound in each time interval}
    \sum_{t=|\bar{T}_i|}^{\bar{T}_{i+1}} \langle {\vl}_t^{\pi_t}, \vd_{\pi_t}\rangle-\langle {\vl}_t^{\pi_t}, \vd_{\pi}\rangle\leq 3 \tau \left( \sqrt{2 {|T_i| \log(i)}} +\frac{\log(i)}{8} \right).
\end{equation}
Sum up from $i=1$ to $k$ in Inequality (\ref{equation: regret bound in each time interval}) we have:
\begin{equation}
    \begin{aligned}
    &\sum_{t=1}^T \langle {\vl}_t^{\pi_t}, \vd_{\pi_t}\rangle-\langle {\vl}_t^{\pi_t}, \vd_{\pi}\rangle =\sum_{i=1}^k \sum_{t=|\bar{T}_i|}^{\bar{T}_{i+1}} \langle {\vl}_t^{\pi_t}, \vd_{\pi_t}\rangle-\langle {\vl}_t^{\pi_t}, \vd_{\pi}\rangle \\
    &\leq \sum_{i=1}^k 3 \tau \left( \sqrt{2 {|T_i| \log(i)}} +\frac{\log(i)}{8} \right) \leq 3 \tau \left( \sqrt{2 {T k \log(k)}} +\frac{k\log(k)}{8} \right).
    \end{aligned}
\end{equation}
The proof is complete.
\end{proof}
\subsection{Proof of Remark \ref{remark on best reponse with respect to total average strategy}}
\begin{remark*}
The regret bound in Theorem \ref{MDP-OOE regret with stationary distribution} will still hold in the case we consider the total average lost instead of average lost in each time window when calculating the best response in Algorithm \ref{MDP online oracle expert}.
\end{remark*}
\begin{proof}
We prove by induction that
\[\min_{\pi \in \Pi }\sum_{t=1}^{\bar{T}_k} \langle {\vl}_t^{\pi_t}, \vd_{\pi_t}\rangle-\langle {\vl}_t^{\pi_t}, \vd_{\pi}\rangle \leq \sum_{j=1}^k \left[ \sum_{t=\bar{T}_{j-1}+1}^{\bar{T}_{j}} \langle {\vl}_t^{\pi_t}, \vd_{\pi_t}\rangle-\langle {\vl}_t^{\pi_t}, \vd_{\pi_j}\rangle \right],\]
where $\vd_{\pi_j}$ denotes the best response in the interval $[1, \bar{T}_j]$.

For $k =1$, the claim is obvious. Suppose the claim is true $k$. We then have:
\begin{subequations}
    \begin{align}
     &\min_{\pi \in \Pi }\sum_{t=1}^{\bar{T}_{k+1}} \langle {\vl}_t^{\pi_t}, \vd_{\pi_t}\rangle-\langle {\vl}_t^{\pi_t}, \vd_{\pi}\rangle =  \sum_{t=1}^{\bar{T}_{k+1}} \langle {\vl}_t^{\pi_t}, \vd_{\pi_t}\rangle-\langle {\vl}_t^{\pi_t}, \vd_{\pi_{k+1}}\rangle \nonumber\\
     &= \sum_{t=1}^{\bar{T}_{k}} \langle {\vl}_t^{\pi_t}, \vd_{\pi_t}\rangle-\langle {\vl}_t^{\pi_t}, \vd_{\pi_{k+1}}\rangle +\sum_{t=\bar{T}_{k}+1}^{\bar{T}_{k+1}} \langle {\vl}_t^{\pi_t}, \vd_{\pi_t}\rangle-\langle {\vl}_t^{\pi_t}, \vd_{\pi_{k+1}}\rangle \nonumber \\
     &\leq \min_{\pi \in \Pi }\sum_{t=1}^{\bar{T}_{k}} \langle {\vl}_t^{\pi_t}, \vd_{\pi_t}\rangle-\langle {\vl}_t^{\pi_t}, \vd_{\pi}\rangle +\sum_{t=\bar{T}_{k}+1}^{\bar{T}_{k+1}} \langle {\vl}_t^{\pi_t}, \vd_{\pi_t}\rangle-\langle {\vl}_t^{\pi_t}, \vd_{\pi_{k+1}}\rangle \nonumber \\
     &\leq \sum_{j=1}^k \left[ \sum_{t=\bar{T}_{j-1}+1}^{\bar{T}_{j}} \langle {\vl}_t^{\pi_t}, \vd_{\pi_t}\rangle-\langle {\vl}_t^{\pi_t}, \vd_{\pi_j}\rangle \right] +\sum_{t=\bar{T}_{k}+1}^{\bar{T}_{k+1}} \langle {\vl}_t^{\pi_t}, \vd_{\pi_t}\rangle-\langle {\vl}_t^{\pi_t}, \vd_{\pi_{k+1}}\rangle \label{induction step} \\
     &=\sum_{j=1}^{k+1} \left[ \sum_{t=\bar{T}_{j-1}+1}^{\bar{T}_{j}} \langle {\vl}_t^{\pi_t}, \vd_{\pi_t}\rangle-\langle {\vl}_t^{\pi_t}, \vd_{\pi_j}\rangle \right], \nonumber
    \end{align}
\end{subequations}
where the inequality (\ref{induction step}) dues to the induction assumption. Thus, for all $k$ we have:
\[\min_{\pi \in \Pi }\sum_{t=1}^{\bar{T}_k} \langle {\vl}_t^{\pi_t}, \vd_{\pi_t}\rangle-\langle {\vl}_t^{\pi_t}, \vd_{\pi}\rangle \leq \sum_{j=1}^k \left[ \sum_{t=\bar{T}_{j-1}+1}^{\bar{T}_{j}} \langle {\vl}_t^{\pi_t}, \vd_{\pi_t}\rangle-\langle {\vl}_t^{\pi_t}, \vd_{\pi_j}\rangle \right].\]
In other words, the Algorithm \ref{MDP online oracle expert} will have the same regret bound when using the best response with respect to the total average strategy of the adversary.
\end{proof}
\subsection{Proof of Theorem \ref{regret bound for ora algorithm}}
\begin{theorem*}
Suppose the agent uses the Algorithm \ref{MDP online oracle expert} in our online MDPs setting, then the regret in Equation (\ref{regret for the agent}) can be bounded by:
\begin{equation*}
    R_T(\pi) =\mathcal{O}(\tau^2\sqrt{ T k \log(k)} +\sqrt{T\log(L)}).
\end{equation*}
\end{theorem*}
\begin{proof}
First we bound the difference between the true loss and the loss with respect to the policy's stationary distribution. 
Following the Algorithm \ref{MDP online oracle expert}, at the start of each time interval $T_i$ (i.e., the time interval in which the effective strategy set does not change), the learning rate needs to restart to $\mathcal{O}(\sqrt{\log(i)/t_i})$, where $i$ denotes the number of pure strategies in the effective strategy set in the time interval $T_i$ and $t_i$ is relative position of the current round in that interval. Thus, following Lemma 5.2 in \cite{even2009online}, in each time interval $T_i$, the difference between the true loss and the loss with respect to the policy's stationary distribution will be: 
\begin{equation*}
    \begin{aligned}
        &\sum_{t=t_{i-1}+1}^{t_i}  | \langle {\vl}_t, \vv_t-\vd_{\pi_t} \rangle |\leq \sum_{t=t_{i-1}+1}^{t_i}  \|\vv_t-\vd_{\pi_t} \|_1 \\
    &\leq \sum_{t=1}^{T_i} 2\tau^2 \sqrt{\frac{\log(i)}{t}}+2e^{-t/\tau} \\
    &\leq 4\tau^2 \sqrt{T_i\log(i)}+2(1+\tau).
    \end{aligned}
\end{equation*}
From this we have:
\begin{equation*}
    \begin{aligned}
        &\sum_{t=1}^T |\langle {\vl}_t, \vv_t-\vd_{\pi_t} \rangle|=\sum_{i=1}^k \sum_{t=t_{i-1}+1}^{t_i}  | \langle {\vl}_t, \vv_t-\vd_{\pi_t} \rangle |\\
        &\leq \sum_{i=1}^k \left(4\tau^2 \sqrt{T_i\log(i)}+2(1+\tau)\right)\\
        &\leq 4\tau^2 \sqrt{Tk \log(k)}+2k(1+\tau).
    \end{aligned}
\end{equation*}
Following Lemma 1 from \cite{neu2013online}, we also have:
\[\sum_{t=1}^T| \langle {\vl}_t, \vd_{\pi} -\vv_t^{\pi} \rangle| \leq 2 \tau +2.\]
Thus the regret in Equation (\ref{regret for the agent}) can be bounded by:
\begin{equation}
\begin{aligned}
   R_T(\pi) &\leq \left( \sum_{t=1}^T \langle \vd_{\pi_t},{\vl}_t \rangle + \sum_{t=1}^T |\langle {\vl}_t, \vv_t-\vd_{\pi_t} \rangle|\right)-\left(\sum_{t=1}^T \langle {\vl}_t^{\pi}, \vd_{\pi} \rangle - \sum_{t=1}^T| \langle {\vl}_t, \vd_{\pi} -\vv_t^{\pi} \rangle|\right)\\
   &= \left(\sum_{t=1}^T \langle \vd_{\pi_t},{\vl}_t \rangle-\sum_{t=1}^T \langle {\vl}_t^{\pi}, \vd_{\pi} \rangle \right) + \sum_{t=1}^T |\langle {\vl}_t, \vv_t-\vd_{\pi_t} \rangle + \sum_{t=1}^T| \langle {\vl}_t, \vd_{\pi} -\vv_t^{\pi} \rangle| \\
    &\leq 3 \tau \left( \sqrt{2 {T k \log(k)}} +\frac{k\log(k)}{8} \right) + \frac{\sqrt{T \log(L)}}{\sqrt{2}}+ 4\tau^2 \sqrt{Tk \log(k)}+2k(1+\tau)+2\tau+2\\
    &=\mathcal{O}(\tau^2\sqrt{ T k \log(k)} +\sqrt{T\log(L)}).
\end{aligned}
\end{equation}
The proof is complete.
\end{proof}
\subsection{Proof of Theorem \ref{theorem about epsilon-best response}}
\begin{theorem*}
Suppose the agent only accesses to $\epsilon$-best response in each iteration when following Algorithm \ref{MDP online oracle expert}. If the adversary follows a no-external regret algorithm then the average strategy of the agent and the adversary will converge to $\epsilon$-Nash equilibrium. Furthermore, the algorithm has $\epsilon$-regret.
\end{theorem*}
\begin{proof}
Suppose that the player uses the Multiplicative Weights Update in Algorithm \ref{MDP online oracle expert} with $\epsilon$-best response. Let $T_1, T_2, \dots, T_k$ be the time window that the players does not add up a new strategy. Since we have a finite set of strategies $A$ then $k$ is finite. Furthermore, 
\begin{equation*}
\sum_{i=1}^k T_k=T.
\end{equation*}
In a time window $T_i$, the regret with respect to the best strategy in the set of strategy at time $T_i$ is:
\begin{equation}\label{regret in each time window: 1.1}
\begin{aligned}
    \sum_{t=\bar{T}_i}^{\bar{T}_{i+1}} \langle {\vl}_t^{\pi_t}, \vd_{\pi_t}\rangle-\min_{\pi \in A_{\bar{T}_i+1}}\sum_{t=|\bar{T}_i|}^{\bar{T}_{i+1}} \langle {\vl}_t^{\pi_t}, \vd_{\pi}\rangle\leq 3 \tau \left( \sqrt{2 {T_i \log(i)}} +\frac{\log(i)}{8} \right),
\end{aligned}
\end{equation}
where $\Bar{T}_i=\sum_{j=1}^{i-1}T_j$.
Since in the time window $T_i$, the $\epsilon$-best response strategy stays in $\Pi_{\Bar{T}_i +1}$ and therefore we have:
\[\min_{\pi \in A_{\bar{T}_i+1}} \sum_{t=|\bar{T}_i|}^{\bar{T}_{i+1}} \langle {\vl}_t^{\pi_t}, \vd_{\pi}\rangle-\min_{\pi \in \Pi} \sum_{t=|\bar{T}_i|}^{\bar{T}_{i+1}} \langle {\vl}_t^{\pi_t}, \vd_{\pi}\rangle \leq \epsilon T_i.\]
Then, from the Equation (\ref{regret in each time window: 1.1}) we have:
\begin{equation}\label{regret in each time window: 2.1}
\begin{aligned}
   \sum_{t=\bar{T}_i}^{\bar{T}_{i+1}} \langle {\vl}_t^{\pi_t}, \vd_{\pi_t}\rangle - \min_{\pi \in \Pi} \sum_{t=|\bar{T}_i|}^{\bar{T}_{i+1}} \langle {\vl}_t^{\pi_t}, \vd_{\pi}\rangle \leq 3 \tau \left( \sqrt{2 {T_i \log(i)}} +\frac{\log(i)}{8} \right)+ \epsilon T_i.
\end{aligned}
\end{equation}
Sum up the Equation (\ref{regret in each time window: 2.1}) for $i=1,\dots k$ we have:
\begin{subequations}
    \begin{align}
    \sum_{t=1}^T \langle {\vl}_t^{\pi_t}, \vd_{\pi_t}\rangle -\sum_{i=1}^k \min_{\pi \in \Pi} \sum_{t=|\bar{T}_i|}^{\bar{T}_{i+1}} \langle {\vl}_t^{\pi_t}, \vd_{\pi}\rangle
    \leq \sum_{i=1}^k 3 \tau \left( \sqrt{2 {T_i \log(i)}} +\frac{\log(i)}{8} \right)+ \epsilon T_i \nonumber \\
    \implies \sum_{t=1}^T \langle {\vl}_t^{\pi_t}, \vd_{\pi_t}\rangle -\min_{\pi \in \Pi} \sum_{i=1}^k  \sum_{t=|\bar{T}_i|}^{\bar{T}_{i+1}} \langle {\vl}_t^{\pi_t}, \vd_{\pi}\rangle \leq \epsilon T+ \sum_{i=1}^k 3 \tau \left( \sqrt{2 {T_i \log(i)}} +\frac{\log(i)}{8} \right) \label{regret 1 prove subequation 2:1} \\
    \implies \sum_{t=1}^T \langle {\vl}_t^{\pi_t}, \vd_{\pi_t}\rangle- \min_{\pi \in \Pi} \sum_{t=1}^{T} \langle {\vl}_t^{\pi_t}, \vd_{\pi}\rangle \leq  \epsilon T+ \sum_{i=1}^k 3 \tau \left( \sqrt{2 {T_i \log(i)}} +\frac{\log(i)}{8} \right)  \nonumber\\
    \implies \sum_{t=1}^T \langle {\vl}_t^{\pi_t}, \vd_{\pi_t}\rangle- \min_{\pi \in \Pi} \sum_{t=1}^{T} \langle {\vl}_t^{\pi_t}, \vd_{\pi}\rangle \leq \epsilon T +  3 \tau \left( \sqrt{2 {T k \log(k)}} +\frac{k\log(k)}{8} \right) \label{regret 1 prove subequation 2:2}.
    \end{align}
\end{subequations}
Inequality (\ref{regret 1 prove subequation 2:1}) is due to $\sum \min \leq \min \sum$. Inequality (\ref{regret 1 prove subequation 2:2}) comes from Cauchy-Schwarz inequality and Stirling' approximation. 
Using inequality (\ref{regret 1 prove subequation 2:2}), we have:
\begin{equation}\label{regret for min average}
    \min_{\pi \in \Pi} \langle \bar{{\vl}}, \vd_{\pi}  \rangle \geq \frac{1}{T} \sum_{t=1}^T \langle {\vl}_t^{\pi_t}, \vd_{\pi_t}\rangle- 3\tau \left( \sqrt{\frac{2k \log(k)}{T}} +\frac{k \log(k)}{8T} \right) -\epsilon.
\end{equation}
Since the adversary follows a no-regret algorithm, we have:
\begin{equation}\label{regret for max average}
    \begin{aligned}
    \max_{{\vl} \in \Delta_L} \sum_{t=1}^T \langle {\vl},  \vd_{\pi_t} \rangle-\sum_{t=1}^T \langle{\vl}_t^{\pi_t}, \vd_{\pi_t}\rangle \leq \sqrt{\frac{T}{2}} \sqrt{\log(L)}\\
    \implies \max_{{\vl} \in \Delta_L} \sum_{t=1}^T \langle {\vl},  \bar{\vd_{\pi}} \rangle \leq \frac{1}{T} \sum_{t=1}^T \langle{\vl}_t^{\pi_t}, \vd_{\pi_t}\rangle +\sqrt{\frac{ \log(L)}{2T}}.
    \end{aligned}
\end{equation}
Using the Inequalities (\ref{regret for min average}) and (\ref{regret for max average}) we have:
\begin{equation*}
    \begin{aligned}
    \langle \bar{{\vl}}, \bar{\vd_{\pi}} \rangle &\geq \min_{\pi \in \Pi} \langle \bar{{\vl}}, \vd_{\pi} \rangle \geq \frac{1}{T} \sum_{t=1}^T \langle {\vl}_t^{\pi_t}, \vd_{\pi_t}\rangle- 3\tau \left( \sqrt{\frac{2k \log(k)}{T}} +\frac{k \log(k)}{8T} \right) -\epsilon  \\
    &\geq \max_{{\vl} \in \Delta_L} \sum_{t=1}^T \langle {\vl},  \bar{\vd_{\pi}} \rangle- \sqrt{\frac{\log(L)}{2T}}- 3\tau \left( \sqrt{\frac{2k \log(k)}{T}} +\frac{k \log(k)}{8T} \right) -\epsilon .
    \end{aligned}
\end{equation*}
Similarly, we also have:
\begin{equation*}
    \begin{aligned}
    \langle \bar{{\vl}}, \bar{\vd_{\pi}} \rangle &\leq  \max_{{\vl} \in \Delta_L} \sum_{t=1}^T \langle {\vl},  \bar{\vd_{\pi}} \rangle \leq \frac{1}{T} \sum_{t=1}^T \langle{\vl}_t^{\pi_t}, \vd_{\pi_t}\rangle +\sqrt{\frac{ \log(L)}{2T}}\\
    &\leq \min_{\pi \in \Pi} \langle \bar{{\vl}}, \vd_{\pi}  \rangle + 3\tau \left( \sqrt{\frac{2k \log(k)}{T}} +\frac{k \log(k)}{8T} \right) +\epsilon .
    \end{aligned}
\end{equation*}
Take the limit $T \to \infty$, we then have:
\[\max_{{\vl} \in \Delta_L} \sum_{t=1}^T \langle {\vl},  \bar{\vd_{\pi}} \rangle -\epsilon \leq  \langle \bar{{\vl}}, \bar{\vd_{\pi}} \rangle  \leq \min_{\pi \in \Pi} \langle \bar{{\vl}}, \vd_{\pi}  \rangle   +\epsilon.\]
Thus $(\bar{{\vl}}, \bar{\vd_{\pi}})$ is the $\epsilon$-Nash equilibrium of the game.
\end{proof}
\section{Last-Round Convergence to NE in OMDPs}
\subsection{Proof of Lemma \ref{lemma about last-round convergence}}
\begin{lemma*}
Let $\pi^*$ be the NE strategy of the agent. Then, ${\vl}$ is the Nash Equilibrium of the adversary if the two following conditions hold:
\begin{equation*}
    Q_{\pi^*, {\vl}} (s, \pi^*)=\argmin_{\pi \in \Pi} Q_{\pi^*, {\vl}}(s, \pi) \;\; \forall s \in S\;\;\text{and}\;\;\eta_l(\pi^*) = v.
\end{equation*}
\end{lemma*}
\begin{proof}
Using the definition of accumulated loss function $Q$ we have
\begin{equation}
    \begin{aligned}
    &\mathbb{E}_{s \in \vd_{\pi}}[Q_{\pi^*,{\vl}}(s, \pi)]= \mathbb{E}_{s \in \vd_{\pi}, a \in \pi}[Q_{\pi^*, {\vl}}(s,a)]\\
    &= \mathbb{E}_{s \in \vd_{\pi}, a \in \pi}[{\vl}(s,a)- \eta_l(\pi^*) + \mathbb{E}_{s'\sim P_{sa}}[Q_{\pi^*, {\vl}}(s', \pi^*)]]\\
    &= \mathbb{E}_{s \in \vd_{\pi}, a \in \pi}[{\vl}(s,a)- \eta_l(\pi^*)] + \mathbb{E}_{s \in \vd_{\pi}}[Q_{\pi^*, {\vl}}(s,\pi^*)]\\
    &= \eta_l(\pi)-\eta_l(\pi^*)+ \mathbb{E}_{s \in \vd_{\pi}}[Q_{\pi^*, {\vl}}(s,\pi^*)].
    \end{aligned}
\end{equation}
Thus we have
\begin{equation}\label{equation about stationary distribution}
     \eta_l(\pi)-\eta_l(\pi^*)= \mathbb{E}_{s \in \vd_{\pi}}[Q_{\pi^*,{\vl}}(s, \pi)-Q_{\pi^*, {\vl}}(s,\pi^*).]
\end{equation}
Since we assume that 
\[\mathbb{Q}_{\pi^*, {\vl}} (s, \pi^*)=\argmin_{\pi \in \Pi} \mathbb{Q}_{\pi^*, {\vl}}(s, \pi) \;\; \forall s \in S,\]
we have
\begin{equation}
    \mathbb{Q}_{\pi^*, {\vl}}(s, \pi) \geq \mathbb{Q}_{\pi^*, {\vl}} (s, \pi^*) \;\; \forall s \in S, \pi \in \Pi.
\end{equation}
It implies that
\begin{equation}
    \mathbb{E}_{s \in \vd_{\pi}}[Q_{\pi^*,{\vl}}(s, \pi)-Q_{\pi^*, {\vl}}(s,\pi^*)] \geq 0 \;\; \forall \pi \in \Pi.
\end{equation}
Therefore we have
\begin{equation}
    \eta_l(\pi)\geq \eta_l(\pi^*) \;\; \forall \pi \in \Pi.
\end{equation}
Along with the assumption $\eta_l(\pi^*)=v$, we have the following relationship:
\begin{equation}\label{equation for Nash equilibrium requirement}
    \begin{aligned}
    \argmin_{\pi \in \Pi} \eta_l(\pi) = \eta_l(\pi^*) =v.
    \end{aligned}
\end{equation}
Now we prove that for the loss function ${\vl}$ that satisfies Equation (\ref{equation for Nash equilibrium requirement}), then ${\vl}$ is NE for the adversary. Let $(\pi^*, {\vl}^*)$ be one of the NE of the game.  Since the game we are considering is zero-sum game, $(\pi^*, {\vl}^*)$ satisfies the famous minimax theorem:
\begin{equation}
    \min_{\pi \in \Pi} \max_{{\vl}_1 \in L} \langle {\vl}_1, \vd_{\pi} \rangle= \max_{{\vl}_1 \in L}  \min_{\pi \in \Pi} \langle {\vl}_1, \vd_{\pi}\rangle = v \;\;\text{where} \;\langle {\vl}, \vd_{\pi} \rangle= \eta_l(\pi).
\end{equation}
From Equation (\ref{equation for Nash equilibrium requirement}) we have
\begin{equation} \label{inequality 1 for proof of NE for adversary}
    v = \min_{\pi \in \Pi} \langle {\vl}, \vd_{\pi} \rangle \leq  \langle {\vl}, \vd_{\pi^*} \rangle.
\end{equation}
Further, since ${\vl}^*$ is the NE of the game, then we have
\begin{equation} \label{inequality 2 for proof of NE for adversary}
    v= \langle {\vl}^*, \vd_{\pi^*} \rangle= \max_{{\vl}_1 \in L} \langle {\vl}_1, \vd_{\pi^*} \rangle \geq \langle {\vl}, \vd_{\pi^*} \rangle.
\end{equation}
From inequalities (\ref{inequality 1 for proof of NE for adversary}) and (\ref{inequality 2 for proof of NE for adversary}) we have
\begin{equation}
    v= \langle {\vl}, \vd_{\pi^*} \rangle = \min_{\pi \in \Pi} \langle {\vl}, \vd_{\pi} \rangle = \max_{{\vl}_1 \in L} \langle {\vl}_1, \vd_{\pi^*} \rangle.
\end{equation}
Thus, by definition $({\vl}, \pi^*)$ is the Nash equilibrium of the game. In other words, the loss function ${\vl}$ satisfies the above assumption is the NE of the adversary.
\end{proof}
\subsection{Proof of Lemma \ref{lemma: improvement in Q value}}
\begin{lemma*}
Assume that $\forall \pi \in \Pi$, $\vd_{\pi}(s) >0$. Then if there exists $s \in S$ such that 
\[Q_{\pi^*, {\vl_t}} (s, \pi^*) > \argmin_{\pi \in \Pi} Q_{\pi^*, {\vl_t}}(s, \pi),\]
then with a new strategy $\pi_{t+1}(s)=\argmin_{a \in A} Q_{\pi^*,{\vl}_t} (s, a) \; \forall s \in S$, we have
\[\eta_{{\vl}_t}(\pi_{t+1}) < v.\]
\end{lemma*}
\begin{proof}
From the minimax theorem, we have:
\begin{equation*}
    \eta_{{\vl}_t}(\pi^*) \leq \eta_{{\vl}^*}(\pi^*)=v \;\ \forall {\vl} \in L .
\end{equation*}
From the proof of Lemma \ref{lemma about last-round convergence} we have:
\begin{equation*}
    \eta_{{\vl}_{t}}(\pi)-\eta_{{\vl}_t}(\pi^*)= \mathbb{E}_{s \in \vd_{\pi}}[Q_{\pi^*,{\vl}_t}(s, \pi)-Q_{\pi^*, {\vl}_t}(s,\pi^*)] \; \forall \pi \in \Pi.
\end{equation*}
Since the construction of the new strategy $\pi_{t+1}$ we have:
\[\mathbb{E}_{s \in \vd_{\pi_{t+1}}}[Q_{\pi^*,{\vl}_t}(s, \pi_{t+1})-Q_{\pi^*, {\vl}_t}(s,\pi^*)] < 0,\]
thus we have:
\[\eta_{{\vl}_{t}}(\pi)< \eta_{{\vl}_t}(\pi^*) \leq 0.\]
The proof is complete.
\end{proof}
\subsection{Proof of Theorem \ref{convergence result for omdp}}\label{proof of convergence result for omdp}
\begin{theorem*}
Assume that the adversary follows the MWU algorithm with non-increasing step size $\mu_t$ such that $\lim_{T \to \infty} \sum_{t=1}^T \mu_t =\infty$ and there exists $t' \in \mathbb{N}$ with $\mu_{t'} \leq \frac{1}{3}$. If the agent follows the Algorithm \ref{Last round convergence in OMDPs} then there exists a Nash equilibrium ${\vl}^*$ for the adversary such that $lim_{t \to \infty} {\vl}_t = {\vl}^*$ almost everywhere and $lim_{t \to \infty} \pi_t = \pi^*$.
\end{theorem*}
 In order to prove the above theorem, we first need the following lemma:
\begin{lemma*} \label{Relative Entropy lemma}
\begin{equation*}
\text{RE}\left({\vl}^*\|{{\vl}}_{2k-1}\right)-\text{RE}\left({\vl}^*\|{{\vl}}_{2k+1}\right) \geq   \frac{1}{2}\mu_{2k}\alpha_{2k}(v-\eta_{{\vl}_{2k-1}}(\hat{\pi}_{2k}))\;\; \forall k \in \mathbb{N}: \;\; 2k\geq t'.
\end{equation*}
\end{lemma*}
\begin{proof}
Using the definition of relative entropy we have:
\begin{equation}
    \begin{aligned}
       &\text{RE}\left({\vl}^*\|{{\vl}}_{2k-1}\right)-\text{RE}\left({\vl}^*\|{\vl}_{2k+1}\right) \\
       &= \left(\text{RE}(\vl^*||{\vl}_{2k+1})-\text{RE}(\vl^*||{\vl}_{2k})\right)+\left(\text{RE}(\vl^*||{\vl}_{2k})-\text{RE}(\vl^*||\vl_{2k-1})\right) \nonumber\\
   &=\left(\sum_{i=1}^n \vl^*(i)\log\left(\frac{\vl^*(i)}{{\vl}_{2k+1}(i)}\right)- \sum_{i=1}^n \vl^*(i)\log\left(\frac{\vl^*(i)}{{\vl}_{2k}(i)}\right)\right) + \\
   &\quad \left( \sum_{i=1}^n \vl^*(i)\log\left(\frac{\vl^*(i)}{{\vl}_{2k}(i)}\right)- \sum_{i=1}^n \vl^*(i)\log\left(\frac{\vl^*(i)}{{\vl}_{2k-1}(i)}\right)\right)\\
   &= \left(\sum_{i=1}^n \vl^*(i)\log\left(\frac{{\vl}_{2k}(i)}{{\vl}_{2k+1}(i)}\right) \right)+ \left(\sum_{i=1}^n \vl^*(i)\log\left(\frac{{\vl}_{2k-1}(i)}{{\vl}_{2k}(i)}\right)\right).
    \end{aligned}
\end{equation}
Following the update rule of the Multiplicative Weights Update algorithm we have:
\begin{subequations}
\begin{align}
        &\text{RE}(\vl^*||{\vl}_{2k+1})-\text{RE}(\vl^*||{\vl}_{2k-1}) \nonumber\\
        &=\left(-\mu_{2k} \langle \vl^*, \vd_{\pi_{2k}} \rangle +\log(Z_{2k})\right) + \left(-\mu_{2k-1} \langle \vl^*, \vd_{\pi_{2k}} \rangle+\log(Z_{2k-1})\right) \nonumber\\
        &\leq \left(-\mu_{2k} v + \log\left(\sum_{i=1}^n {\vl}_{2k}(i)e^{\mu_{2k} \langle {\ve}_i, \vd_{\pi_{2k}} \rangle }\right)\right)+ \left(-\mu_{2k-1} v +\log(Z_{2k-1})\right) \label{MWU1aa}\\
        &=\left(-\mu_{2k} v + \log\left(\sum_{i=1}^n {\vl}_{2k-1}(i)e^{\mu_{2k-1} \langle {\ve}_i, \vd_{\pi_{2k-1}} \rangle }e^{\mu_{2k} \langle {\ve}_i, \vd_{\pi_{2k}} \rangle}\right)-\log(Z_{2k-1})\right) \nonumber\\
        & + \left(-\mu_{2k-1} v +\log(Z_{2k-1})\right),\nonumber
\end{align}
\end{subequations}
where Inequality (\ref{MWU1aa}) is due to the fact that $\langle \vl^*, \vd_{\pi} \rangle \geq v \; \forall \pi$. Thus, 
\begin{subequations}
    \begin{align}
    &\text{RE}(\vl^*||{\vl}_{2k+1})-\text{RE}(\vl^*||{\vl}_{2k-1}) \nonumber\\
       &\leq \left(-\mu_{2k} v +\log\left(\sum_{i=1}^n {\vl}_{2k-1}(i)e^{\mu_{2k-1} \langle {\ve}_i, \vd_{\pi_{2k-1}} \rangle }e^{\mu_{2k} \langle {\ve}_i, \vd_{\pi_{2k}} \rangle}\right) \right)-\mu_{2k-1} v \nonumber \\
        &\leq \left(-\mu_{2k} v +\log\left(\sum_{i=1}^n {\vl}_{2k-1}(i)e^{\mu_{2k-1} v}e^{\mu_{2k} \langle {\ve}_i, \vd_{\pi_{2k}} \rangle}\right) \right) - \mu_{2k-1} v \label{MWU1b}\\
        &= -\mu_{2k} v + \log\left(\sum_{i=1}^n {\vl}_{2k-1}(i) e^{\mu_{2k} \langle {\ve}_i, \vd_{\pi_{2k}} \rangle}\right) \nonumber,
    \end{align}
\end{subequations}
where Inequality (\ref{MWU1b}) is the result of the inequality:
\[\langle \vl, \vd_{\pi^*} \rangle \leq v \;\; \forall \vl.\]
Now, using the update rule of Algorithm \ref{Last round convergence in OMDPs}
\[\vd_{\pi_{2k}}= (1-\alpha_{2k}) \vd_{\pi^*}+ \alpha_{2k} \vd_{\hat{\pi}_{2k}},\]
we have
\begin{subequations}
    \begin{align}
    &\text{RE}(\vl^*||{\vl}_{2k+1})-\text{RE}(\vl^*||{\vl}_{2k-1}) \nonumber\\
    & \leq -\mu_{2k} v +  \log\left(\sum_{i=1}^n {\vl}_{2k-1}(i) e^{\mu_{2k} ((1-\alpha_{2k})\langle {\ve}_i, \vd_{\pi^*} \rangle + \alpha_{2k} \langle {\ve}_i, \vd_{\hat{\pi}_{2k}} \rangle)}\right) \nonumber \\
    & \leq -\mu_{2k} \alpha_{2k} v +  \log\left(\sum_{i=1}^n {\vl}_{2k-1}(i) e^{\mu_{2k} \alpha_{2k} \langle {\ve}_i, \vd_{\hat{\pi}_{2k}} \rangle}\right) \nonumber.
    \end{align}
\end{subequations}
Denote $f(\vl_{2k-1})= \langle \vl_{2k-1}, \vd_{\hat{\pi}_{2k}} \rangle$, we then have
\begin{subequations}\label{MWU3 proof 1}
\begin{align}
&\text{RE}(\vl^*||{\vl}_{2k+1})-\text{RE}(\vl^*||{\vl}_{2k-1}) \nonumber \\
& \leq -\mu_{2k} \alpha_{2k} v +\log\left(\sum_{i=1}^n {\vl}_{2k-1}(i) e^{\mu_{2k}\alpha_{2k} \langle {\ve}_i, \vd_{\hat{\pi}_{2k}} \rangle}\right) \nonumber\\
& = \mu_{2k} \alpha_{2k} (1-v) + \log\left(\sum_{i=1}^n {\vl}_{2k-1}(i) e^{-\mu_{2k}\alpha_{2k} (1-\langle {\ve}_i, \vd_{\hat{\pi}_{2k}} \rangle)}\right) \label{MWU3 proof 1a}\\
&\leq \mu_{2k} \alpha_{2k}(1-v) + \log\left(\sum_{i=1}^n {\vl}_{2k-1}(i)(1-(1-e^{-\mu_{2k}\alpha_{2k}})(1-{\langle {\ve}_i, \vd_{\hat{\pi}_{2k}} \rangle}))\right)\label{MWU3 proof 1b} \\
&=\mu_{2k} \alpha_{2k}(1-v) + \log\left(1-(1-e^{-\mu_{2k}\alpha_{2k}})(1-\langle {\vl}_{2k-1}, \vd_{\hat{\pi}_{2k}} \rangle ) \right)\nonumber\\
&\leq \mu_{2k} \alpha_{2k} (1-v) - (1-e^{-\mu_{2k}\alpha_{2k}})(1-\langle {\vl}_{2k-1}, \vd_{\hat{\pi}_{2k}} \rangle ) \label{MWU3 proof 1c}\\
&=\mu_{2k} \alpha_{2k} (1-v) -(1-e^{-\mu_{2k}\alpha_{2k}}) (1-f({\vl}_{2k-1}))\nonumber,
\end{align}
\end{subequations}
Equation (\ref{MWU3 proof 1a}) is created by adding and subtracting $\mu_{2k}\alpha_{2k}$ on the first and second terms. 

Inequalities $(\ref{MWU3 proof 1b}, \ref{MWU3 proof 1c})$ are due to
\[\beta^x \leq 1-(1-\beta)x \quad \forall \beta \geq 0 \; \vl \in [0,1] \; \text{and} \; \log(1-x) \leq -x \; \; \forall x < 1.\]
We can develop Inequality (\ref{MWU3 proof 1c}) further as 
\begin{subequations}
\begin{align}
&\text{RE}(\vl^*||{\vl}_{2k+1})-\text{RE}(\vl^*||{\vl}_{2k-1}) \nonumber\\
&\leq \mu_{2k} \alpha_{2k} (1-v) -\left(1-e^{-\mu_{2k}\alpha_{2k}}\right)(1-f({\vl}_{2k-1})) \nonumber\\
&\leq \mu_{2k} \alpha_{2k} (1-v) -\left(1-\left(1-\mu_{2k}\alpha_{2k} +\frac{1}{2}(\mu_{2k}\alpha_{2k})^2\right)\right)(1-f({\vl}_{2k-1}))\label{MWU3 proof 2a}\\
&=\mu_{2k}\alpha_{2k}(f({\vl}_{2k-1})-v) +\frac{1}{2}(\mu_{2k} \alpha_{2k})^2 (1- f({\vl}_{2k-1}))\nonumber\\
&\leq \mu_{2k}\alpha_{2k}(f({\vl}_{2k-1})-v)+\frac{1}{2}\mu_{2k}\alpha_{2k}\mu_{2k} \frac{v-f({\vl}_{2k-1})}{\beta}(1-f({\vl}_{2k-1})) \label{MWU3 proof 2c}\\
&\leq  \mu_{2k}\alpha_{2k}(f({\vl}_{2k-1})-v)+\frac{1}{2}\mu_{2k}\alpha_{2k}\ (v-f({\vl}_{2k-1})) \label{MWU3 proof 2b}\\
&=-\frac{1}{2}\mu_{2k}\alpha_{2k}(v-f({\vl}_{2k-1})) \leq 0 \nonumber .   
\end{align}
\end{subequations}
Here, Inequality ($\ref{MWU3 proof 2a} $) is due to $e^x \leq 1+x+\frac{1}{2}x^2\;\; \forall \vl \in [-\infty,0]$, Inequality (\ref{MWU3 proof 2c}) comes from the definition of $\alpha_{t}$:
\[\alpha_t = \frac{v-f({\vl}_{2k-1})}{\beta}, \; \beta \geq 1-f(\vl), \; f({\vl}_{2k-1}) \leq 1.\]
Finally, Inequality ($\ref{MWU3 proof 2b} $) comes from the choice of k at the beginning of the proof, i.e., $\mu_{2k} \leq 1$.
\end{proof}

Now we can prove Theorem \ref{convergence result for omdp}:
\begin{proof}
We focus on the regret analysis with respect to the stationary distribution $\vd_{\pi_t}$.
Let $\vl^*$ be a minimax equilibrium strategy of the adversary ($\vl^*$ may not be unique).
Following the above Lemma, for all $k \in \mathbb{N}$ such that $2k\geq t'$, we have
\begin{equation}\label{MWU3 1st important step}
\text{RE}(\vl^*\|{\vl}_{2k+1})-\text{RE}(\vl^*\|{\vl}_{2k-1}) \leq   -\frac{1}{2}\mu_{2k}\alpha_{2k}(v-f({\vl}_{2k-1})),
\end{equation}
where we denote $f(\vl_{2k-1})= \langle \vl_{2k-1}, \vd_{\hat{\pi}_{2k}} \rangle$. Thus, the sequence of relative entropy $\text{RE}(\vl^*\|{\vl}_{2k-1})$ is non-increasing  for all $k \geq \frac{t'}{2}$. As the sequence is bounded below by 0, it has a limit for any minimax equilibrium strategy $\vl^*$.
%
Since $t'$ is a finite number and $\sum_{t=1}^\infty \mu_t=\infty$, we have $\sum_{t=t'}^\infty \mu_t=\infty$. Thus, 
    \[\lim_{T\to \infty}\sum_{k=\left \lceil{\frac{t'}{2}}\right \rceil}^{T}\mu_{2k} = \infty.\]
We will prove that $\forall \epsilon >0,\; \exists h \in \mathbb{N}$ such that when the agent follows Algorithm \ref{Last round convergence in OMDPs} and the adversary follows MWU algorithm, the adversary will play strategy ${\vl}_h$ at round h and $v-f({\vl}_h) \leq \epsilon$. 
%
In particular, we prove this by contradiction. That is, suppose that $\exists \epsilon >0$ such that $\forall h \in \mathbb{N},\; v-f({\vl}_h) > \epsilon$. Then $\forall k \in \mathbb{N}$, 
\[
\alpha_{2k}(v-f({\vl}_{2k-1}))= \frac{(v-f({\vl}_{2k-1}))^2}{\beta} > \frac{\epsilon^2}{\beta}.\]
Let $k$ vary from $\left \lceil{\frac{t'}{2}}\right \rceil$  to T in Equation (\ref{MWU3 1st important step}). By summing over $k$, we obtain: 
\begin{equation*}
    \begin{aligned}
    \text{RE}(\vl^*\|{\vl}_{2T+1})&\leq \text{RE}(\vl^*\|{\vl}_{t'}) - \frac{1}{2}\sum_{k=\left \lceil{\frac{t'}{2}}\right \rceil}^{T} \mu_{2k}\alpha_{2k}(v-f({\vl}_{2k-1})) \\
    &\leq \text{RE}(\vl^*\|{\vl}_{t'})-\frac{1}{2}\frac{e^2}{\beta}\sum_{k=\left \lceil{\frac{t'}{2}}\right \rceil}^{T}\mu_{2k}.
    \end{aligned}
\end{equation*}
Since $\lim_{T\to \infty}\sum_{k=\left \lceil{\frac{t'}{2}}\right \rceil}^{T}\mu_{2k}= \infty$ and $\text{RE}(\vl^*\|{\vl}_{T+1})\geq 0$, it contradicts our assumption about $\forall h \in \mathbb{N},\; v-f({\vl}_h) > \epsilon$. 

Now, we take a sequence of $\epsilon_k>0$ such that $\lim_{k \to \infty}\epsilon_k=0$. Then for each k, there exists ${\vl}_{t_k}\in \Delta_n$ such that $v-\epsilon_k \leq f({\vl}_{t_k})\leq v.$ As $\Delta_n$ is a compact set and ${\vl}_{t_k}$ is bounded then following the Bolzano-Weierstrass theorem, there is a convergence subsequence ${\vl}_{\bar{t}_k}$. The limit of that sequence, ${\bar{\vl}}^*$, is a minimax equilibrium strategy of the row player (since $f({\bar{\vl}}^*)=f(\lim_{k \to \infty} {\vl}_{\bar{t}_k})=\lim_{k \to \infty}f({\vl}_{\bar{t}_k})=v$). Combining with the fact that $\text{RE}({{\bar{\vl}}^*}\|{\vl}_{2k-1})$ is non-increasing for $k\geq \left \lceil{\frac{t'}{2}}\right \rceil$ and $\text{RE}({\bar{\vl}}^*\|{\bar{\vl}}^*) =0$, we have $\lim_{k \to \infty}\text{RE}({\bar{\vl}}^*\|{\vl}_{2k-1})=0$. We also note that 
\begin{equation*}
\begin{aligned}
    \text{RE}(\bar{\vl}^*\|{\vl}_{2k})- \text{RE}(\bar{\vl}^*\|{\vl}_{2k-1})
    &=-\mu_{2k-1}
    \langle \bar{\vl}^*,\vd_{\pi_{2k-1}} \rangle+\log\left(\sum_{i=1}^n {{\vl}_{2k-1}}(i)e^{\mu_{2k-1}{\langle {\ve}_i},\vd_{\pi^*} \rangle}\right)\\
    &\leq -\mu_{2k-1}v+\log\left(\sum_{i=1}^n {{\vl}_{2k-1}}(i)e^{\mu_{2k-1}v}\right) =0,
\end{aligned}
\end{equation*}
following the fact that $\langle {\bar{\vl}^*}, \vd_{\pi} \rangle \geq v$ for all $\pi \in \Pi$ and $\langle \vl, \vd_{\pi^*} \rangle \leq v$ for all $\vl$. Thus, we have $\lim_{k \to \infty}\text{RE}({\bar{\vl}}^*\|{\vl}_{2k})=0$ as well. Subsequently, 
$\lim_{t \to \infty}\text{RE}({\bar{\vl}}^*\|{\vl}_{t}) =0$, which concludes the proof. 
\end{proof}
\bibliographystyle{plain}
\bibliography{main}